\DeclareMathOperator{\arcsinh}{arcsinh}
\DeclareMathOperator{\diag}{diag}
\DeclareMathOperator{\gf}{GF}
\DeclareMathOperator{\gd}{GD}
\DeclareMathOperator{\hb}{HB}
\newcommand{\mo}[1]{\mathcal{O}\left({#1}\right)}
\newcommand{\mw}{\mathbf{w}}
\newcommand{\mv}{\mathbf{v}}
\newcommand{\ml}{\mathbf{L}}
\newcommand\nn{\nonumber}
\newtheorem{theorem}{Theorem}[section]
\newtheorem{lemma}[theorem]{Lemma}
\newtheorem{corollary}[theorem]{Corollary}
\def\1{\bm{1}}
\DeclareMathAlphabet{\mathsfit}{\encodingdefault}{\sfdefault}{m}{sl}
\SetMathAlphabet{\mathsfit}{bold}{\encodingdefault}{\sfdefault}{bx}{n}
\newcommand{\R}{\mathbb{R}}
\title{Heavy-Ball Momentum Method in Continuous Time and Discretization Error Analysis}
\author{Bochen Lyu\textsuperscript{a,b,*}\And Xiaojing Zhang\textsuperscript{b} \And Fangyi Zheng\textsuperscript{c} \And He Wang\textsuperscript{d} \And Zheng Wang\textsuperscript{e} \And Zhanxing Zhu\textsuperscript{a*}\\ 
  \textsuperscript{a}University of Southampton\quad\textsuperscript{b}DataCanvas \quad \textsuperscript{c}{Pony.ai}\quad  \textsuperscript{d}{University College London} \\ \quad \textsuperscript{e}{University of Leeds}\\
  \textsuperscript{*}\texttt{\{bochen.lyu, z.zhu\}@soton.ac.uk}
  }
\begin{document}

\maketitle

\begin{abstract}
This paper establishes a continuous time approximation, a piece-wise continuous differential equation, for the discrete Heavy-Ball~(HB) momentum method with explicit discretization error. Investigating continuous differential equations has been a promising approach for studying the discrete optimization methods. 
Despite the crucial role of momentum in gradient-based optimization methods, the gap between the original discrete dynamics and the continuous time approximations due to the discretization error has not been comprehensively bridged yet. In this work, we study the HB momentum method in continuous time while putting more focus on the discretization error to provide additional theoretical tools to this area. In particular, we design a first-order piece-wise continuous differential equation, where we add a number of counter terms to account for the discretization error explicitly. As a result, we provide a continuous time model for the HB momentum method that allows the control of discretization error to \emph{arbitrary order} of the step size. As an application, we leverage it to find a new implicit regularization of the directional smoothness and investigate the implicit bias of HB for diagonal linear networks, indicating how our results can be used in deep learning. Our theoretical findings are further supported by numerical experiments.
\end{abstract}

\section{Introduction}
\label{sec:intro}
Gradient descent (GD) and its variants momentum methods, such as Polyak's Heavy-Ball momentum (HB)~\citep{polyak1964}, Nesterov’s method of accelerated gradients (NAG)~\citep{nesterov1983}, and Adam~\citep{kingma2014adam}, are at the core of the success of training deep neural networks. This leads to the importance of understanding the gradient-based optimization methods, whereas the direct analysis for their discrete learning dynamics is challenging. Hence an optional approach is generally applied in this area: leveraging tools from dynamical systems to study the learning dynamics in continuous time to shed light on the dynamical behaviors of the discrete updates, e.g., \citet{barrett2022implicit, lyu2019homo, ji2018linear, li2022what, lyu2023rank1, taiki2023eom, rosca2023gf}. However, one fundamental gap---the discrepancy between the discrete optimizations and their continuous time models named as the \emph{discretization error}---becomes manifest when employing this approach.

While recent works have made significant efforts to fill the aforementioned gap for GD~\citep{barrett2022implicit,taiki2023eom,rosca2023gf}, the analysis for the discretization error of the important variants of GD---momentum methods---in continuous time is still far from comprehensive.  Along this line of research, \citet{nikola2020momentum} demonstrated that the continuous time approximation of HB and NAG, which solves $\min_{\beta} L(\beta)$, can be expressed as a rescaled gradient flow (RGF):
\begin{equation}
\label{eq:rescaled_gf}
    \dot{\beta} = - \frac{\nabla L (\beta)}{1 - \mu},
\end{equation}
where $L$ is the objective function and $\mu$ is the momentum factor. Despite its convenience for studying discrete HB and NAG, this approximation has one drawback: the insufficient consideration for discretization error when approaching the continuous time limit. As a result, it is insufficient to differentiate between HB and GD in continuous time. For example, the solutions of Eq.~\eqref{eq:rescaled_gf} and that of the gradient flow (GF) $\dot{\beta} = - \nabla L(\beta)$, the simplest continuous time model of GD, are not very different: if $\beta^{*}(t)$ solves RGF and $\beta^{\dagger}(t)$ solves GF, then $\beta^{\dagger}(t) = \beta^{*}((1 - \mu)t)$, which suggests that GD and HB will converge to similar points. However, this is inconsistent with the actual behavior, e.g., in Fig.~\ref{fig:2dtraj} GD and HB converge to different points, which reveals that the RGF cannot fully capture the difference between HB and GD and motivates us to study continuous approximations for HB with smaller discretization error.

Recently, backward error analysis~\citep{hairer2006geo} has been successfully applied to construct continuous time approximations for GD~\citep{barrett2022implicit,taiki2023eom,rosca2023gf} with lower discretization error than GF. Inspired by these works, \cite{ghosh2023implicit} provided a continuous differential equation for HB, which achieves smaller discretization error compared to RGF (Eq.~\eqref{eq:rescaled_gf}). While promising, it still does not fully capture the actual discrete learning dynamics since the discretization error is only to the second order of the step size. 

To this end, we also study the HB momentum method in continuous time while putting more focus on closing the gap due to the discretization error. In particular, we propose HB Flow (HBF), a pice-wise continuous differential equation as a novel continuous time approximation for the HB momentum method. We add a number of counter terms to the improved version of Eq.~\eqref{eq:rescaled_gf} to cancel the discretization error, obtaining a continuous time differential equation that can be \emph{arbitrarily close} to HB, i.e., the discretization error can be controlled to arbitrary order of the step size. Our HBF provides a more reliable foundation for studying momentum methods in continuous time when the direct study of discrete learning dynamics is cumbersome. As a case study, we examine the implicit bias of HB, the preference for certain kind of solution without explicit regularization, by employing the HBF for the popular diagonal linear networks where there are already abundant results for GD~\citep{azulay2021, even2023gdsgd, pesme2021, vivien2022labelnoise, woodworth2020, yun2021unify}.

\paragraph{Contributions.} We study HB---one of the most important gradient-based optimization algorithms with momentum---in continuous time. In particular,
\begin{enumerate}
    \item We propose a piece-wise continuous differential equation (HBF) by adding a counter term~(Theorem~\ref{theorem:hbf}) that can be expanded in a series formulation to different orders of the step size  to the RGF (Eq.~\eqref{eq:rescaled_gf}), such that the HBF can precisely capture the learning dynamics of the discrete HB.
    \item We explicitly show the leading order of the discretization error for the HBF and indicate the way how one can obtain HBF with a discretization error to any order of the step size~(Section~\ref{sec:hbf_special_order}), revealing the explicit precision of HBF for approximating the discrete HB. These results are firstly revealed in this work to the best of our knowledge, bridging the gap between the discrete HB and the its precise characterization in continuous time.
    \item We leverage our HBF to examine learning dynamics of HB. For example, we reveal that, as HBF implicitly has a regularization term for the directional smoothness, the learning dynamics of HB exhibits smaller directional smoothness compared to GD~(Fig.\ref{fig:cifar10_mlp}). In addition, for the application of HBF in deep learning, we investigate the implicit bias of HB for the diagonal linear network~(Theorem~\ref{theorem:main}) through the lens of HBF as a case study, revealing its difference compared to that of GF which cannot be obtained by the RGF. 
\end{enumerate}

\subsection{Related Works}
\paragraph{Continuous approximation}Prior works~\citep{shi2018, su2014, wilson2016} have constructed second-order ODEs to study the convergence properties of HB/NAG in continuous time, where the momentum factor typically depends on the step size and the iteration count. In this paper, we study HB with a constant momentum factor that is independent of step size and iteration count, a setting that is more consistent with the practical case, e.g., PyTorch~\cite{paszke2019torch}, and we focus more on the order of the discretization error of the continuous time model. Towards this direction, besides \citet{nikola2020momentum,ghosh2023implicit}, \cite{cattaneo2023implicit} focused on the continuous limit of a more general adaptive gradient-based methods with momentum, Adam, achieving a discretization error to the second order of the step size. As a comparison, we provide a continuous differential equation that can be arbitrarily close to the discrete HB.

\paragraph{Implicit bias of optimizers}The implicit bias of GD for various deep neural networks has been widely studied in, e.g., \cite{ji2018linear, gunasekar2018max-margin, yun2021unify} for linear networks and \cite{bach2020homo, lyu2019homo} for homogeneous networks. For diagonal linear networks studied in this paper, \cite{azulay2021, pesme2021, vivien2022labelnoise, woodworth2020} revealed the interesting transition from kernel  to rich regime by altering the scale of  initialization. \citet{papazov2024momentum} studied HB using a second-order ordinary differential equation~(ODE) with discretization error to the second order of step size. For linearly separable data and linear model, \cite{wang2022does} showed that momentum method converges to the $\ell_2$-max-margin solution, which is the same as GD. As a comparison, \cite{zhang2024implicitbiasadamseparable} revealed that Adam with negligible stability constant exhibits the preference of $\ell_\infty$-max-margin solution. Furthermore, \cite{wang2021implicitbiasadaptiveoptimization} studied the implicit bias of adaptive optimization algorithms on homogeneous deep neural networks. 

\subsection{Preliminaries}
\label{sec:preli}
\paragraph{Notations} For a vector ${\beta} \in \mathbb{R}^{d}$ that depends on time $t$, we use $\dot{{\beta}}$ and $\ddot{{\beta}}$ to denote its first and second derivative with respect to time $t$, respectively. We use ${\beta}_j$ to denote its $j$-th component and $\|{\beta}\|_p$ for its $\ell_p$-norm. We use ${\alpha} \cdot {\beta}$ to denote the inner product and $\odot$ to denote elementwise product, e.g., $\alpha \cdot A \cdot \beta$ denotes $\alpha^T A \beta$ for $\alpha \in\R^{d_1}, A \in \R^{d_1\times d_2}, \beta\in\R^{d_2}$. We use $[N]$ for integers between $[0, N]$.

\paragraph{Heavy-Ball momentum method} HB~\citep{polyak1964} employs a two-step updating scheme~\citep{sutskever2013}, rather than the single-step manner of GD. Particularly, HB first accumulates the history of past iterations before updating the model parameter $\beta\in\mathbb{R}^{d}$, i.e., $m_{k + 1}  = \mu m_k - \eta  \nabla L(\beta_k), \ \beta_{k + 1} = \beta_{k} + m_{k + 1}$ where $\mu \in (0, 1)$ is the momentum factor, $\eta$ is the step size, $k$ is the iteration number, and $m\in\mathbb{R}^{d}$ is the momentum, which can be further written in a single equation
\begin{equation}
    \textbf{Discrete HB: }\ \beta_{k + 1} - \beta_k = - \eta  \nabla L(\beta_k) + \mu \left( \beta_k - \beta_{k-1}\right).
    \label{eq:dis_hb}
\end{equation}

\section{HB Momentum Methods in Continuous Time and Discretization Error}
\label{sec:main_results_approximation}
In this section, we will propose an ODE
\begin{equation}
\label{eq:continuous_ode}
     \dot{\beta}(t) = - \mathcal{G}(\beta) 
\end{equation}
that can be arbitrarily close to the discrete learning dynamics of HB Eq.~\eqref{eq:discrete_beta}. To characterize the gap between the discrete learning dynamics and the ODE Eq.~\eqref{eq:continuous_ode}, given $N > 0$, we define the \emph{discretization error} of Eq.~\eqref{eq:continuous_ode} as
\begin{equation}
   \label{def:order_of_error}
   \forall k \in [N]:\  \varepsilon_k = \beta(k\eta) - \beta_k,
\end{equation}
where $\beta(k\eta)$ solves Eq.~\eqref{eq:continuous_ode} and we will denote $t_k = k\eta$ for convenience in the rest of this paper. An investigation of Euler forward method will identify $\eta$ as the step size. When $\varepsilon_k = \mo{\eta^\alpha}$, we say that the discretization error of the continuous time model Eq.~\eqref{eq:continuous_ode} is to the $\alpha$-th order of the step size. We aim to find the formulation of $\mathcal{G}(\beta)$ such that Eq.~\eqref{eq:continuous_ode} can be arbitrarily close to the discrete HB, i.e., $\varepsilon_k = \mo{\eta^{\alpha}}$ for any given $\alpha$. 
\paragraph{Intuition of our approach}Given the discrete HB Eq.~\eqref{eq:discrete_beta}, \citet{nikola2020momentum} showed that $\varepsilon_k = \mo{\eta}$ if $\mathcal{G}(\beta) = \nabla L / (1 - \mu)$ as in Eq.~\eqref{eq:rescaled_gf}. To coincide with such observation, our overall goal is to find the formulation of a modified ODE $\dot{\beta} = - G - \eta \gamma$ such that the discretization error can be controlled arbitrarily low, where its design should follow two basic principles: (i). it should coincide with RGF, the simplest continuous approximation of HB RGF, when $\eta$ is very small, thus, $G$ should degenerate to $\nabla L / (1 - \mu)$; (ii). by adding the \emph{counter term} $\gamma$, it should allow us to further decrease the discretization error of RGF to get better continuous approximations for HB. To achieve this, we need to derive the exact forms of $G$ and $\gamma$ under the condition $\varepsilon_k = \mathcal{O}(\eta^{\alpha})$. And there will be two key steps:
\begin{enumerate}
    \item find the equations that $G$ and $\gamma$ must satisfy if we require the discretization error $\varepsilon_k = \mathcal{O}(\eta^{\alpha})$ for any $\alpha > 0$;
    \item solve these equations to give the formulations of $G$ and $\gamma$.
\end{enumerate}
\paragraph{Overview of our approach}We now apply the above intuition to establish such a continuous time approximation of HB. Adding the counter term $\gamma$ directly to the RGF is, however, problematic due to the fact that each iteration of momentum methods exploits the history of previous iterations. This renders the local error analysis unreliable since it ignores previous updates. To perform a global analysis, instead of directly utilizing the backward error analysis in \citet{barrett2022implicit}, we propose a piece-wise continuous differential equation by decomposing $\mathcal{G}(\beta)$ into two parts, named as HB Flow (HBF),
\begin{align}
\label{eq:beta_dynamics}
    t \in [t_k, t_{k + 1}):\ \dot{\beta} = -\mathcal{G}(\beta) :=- G_k(\beta) - \eta \gamma_k (\beta).
\end{align}
This was previously discussed in \citet{ghosh2023implicit}. In Eq.~\eqref{eq:beta_dynamics}, $k$ denotes the iteration count for the discrete updates, $t_k = k\eta$, and $G_k$ depends on $k$ and should degenerate to $\nabla L / (1 - \mu)$ as in Eq.~\eqref{eq:rescaled_gf} for small step size. Additionally, the counter term $\gamma_k$ is designed to cancel the further discretization error brought by $G_k(\beta)$ such that $\varepsilon_k$ can be controlled to be arbitrarily small, hence the name counter term. As discussed in the above intuition, deriving the formulation of $\gamma_k$ needs a set of equations for it to satisfy. We establish such equations as follows. When approximating $t_k$ from $t > t_k$ and $t < t_k$, respectively, Taylor expansion for $\beta$ in Eq.~\eqref{eq:beta_dynamics} provides us
\begin{equation}
\label{eq:taylor1}
    \begin{aligned}
        \beta(t_{k + 1}) - \beta(t_k) & = \eta \dot{\beta}(t_k^+) + \eta^2I_k^ + = -\eta G_k - \eta^2 \gamma_k + \eta^2I_k^+\\
        \beta(t_{k}) - \beta(t_{k - 1}) & = \eta \dot{\beta}(t_k^-)  - \eta^2I_k^- = - \eta G_{k - 1} - \eta^2 \gamma_{k - 1}  - \eta^2 I_k^-    
    \end{aligned}
\end{equation}
where $t_k^{+}$ and $t_k^-$ mean that we approximate $t_k$ from $t > t_k$ and $t < t_k$, respectively, $\eta^2 I_k^{\pm} = \int_{t_k}^{t_{k \pm 1}} \ddot{\beta}(\tau)(t_{k \pm 1}- \tau) d\tau$, and we apply Eq.~\eqref{eq:beta_dynamics} for $\dot{\beta}(t_k^{\pm})$. A subtraction of the discrete HB update Eq.~\eqref{eq:dis_hb} to the first equality of Eq.~\eqref{eq:taylor1} now allows us to construct the relation between $\varepsilon_k$ and $\gamma_k$:
\begin{equation}
\begin{aligned}
    \label{eq:eps_relation_intro}
    \varepsilon_{k + 1} - \varepsilon_{k} 
    = \mu (\varepsilon_k - \varepsilon_{k - 1}) - \eta\left[  G_k - \mu G_{k - 1} - \nabla L(\beta_k) \right]  
    + \eta^2\left[ I_k^+ + \mu I_k^-  - \gamma_k + \mu \gamma_{k - 1}\right].
\end{aligned}
\end{equation}
As $I_k^{\pm}$ can be expressed by $\gamma_k$, requiring $\varepsilon_{k + 1} - \varepsilon_{k} = \mathcal{O}(\eta^{\alpha + 1})$ in Eq.~\eqref{eq:eps_relation_intro} \footnote{We will show in Appendix that this condition is sufficient for proving $\varepsilon_k = \mo{\eta^{\alpha}}$ by induction.} immediately builds an equation for $\gamma_k$, which can be solved to give the form of $\gamma_k$. Hence, the discretization error of the continuous time model Eq.~\eqref{eq:beta_dynamics} for the discrete update Eq.~\eqref{eq:dis_hb} can be controlled to the $\alpha$-th order of the step size $\eta$. The formulation of Eq.~\eqref{eq:beta_dynamics} is presented in Theorem~\ref{theorem:hbf}, and the detailed technical proofs are deferred to Appendix~\ref{app:main_results_approximation}.

\subsection{HBF with Discretization Error to Arbitrary Order of the Step Size}
Following the intuition and overview of our approach, we now solve Eq.~\eqref{eq:eps_relation_intro} to derive $\gamma_k$ and $G_k$, and reveal that the obtained HBF by doing so is indeed an $\mathcal{O}(\eta^{\alpha})$ (piece-wise) continuous time model of HB. As discussed earlier in Eq.~\eqref{eq:eps_relation_intro}, we need to find $\gamma_k$ to ensure that $\varepsilon_{k + 1} - \varepsilon_k = \mo{\eta^{\alpha + 1}}$, hence the following integral functional equation
\begin{equation}
\label{eq:I_k_pm_equation}
    \eta^2(I_k^{+} + \mu I_k^{-} - \gamma_k + \mu\gamma_{k - 1}) = \mo{\eta^{\alpha + 1}}
\end{equation}
must be satisfied. We solve this equation following three steps as shown below: \textbf{(1).} write the solution of  Eq.~\eqref{eq:I_k_pm_equation}, $\gamma_k$, as a series form 
\begin{equation}
\label{eq:gamma_series}
    \gamma_k = \sum_{\sigma = 0}^{\infty} \eta^{\sigma} \gamma_k^{(\sigma)};
\end{equation}
\textbf{(2).} derive $I_{k}^{\pm}$ explicitly by employing Eq.~\eqref{eq:gamma_series} in the learning dynamics Eq.~\eqref{eq:beta_dynamics} in the series form
    \begin{equation}
        \label{eq:I_k_series}
        I_{k}^{+} = \sum_{\sigma = 0}^{\infty} \eta^{\sigma} (\mathscr{I}_k^{+})^{(\sigma)};
    \end{equation}
\textbf{(3).} match the terms of Eq.~\eqref{eq:gamma_series} with that of Eq.~\eqref{eq:I_k_series} to each order of the step size $\eta$, i.e., $\forall \sigma \in \mathbb{N}: \  \gamma_k^{(\sigma)} = (\mathscr{I}_k^{+})^{(\sigma)}.$ As a result, a simple truncation of $\gamma_k$ will automatically lead Eq.~\eqref{eq:eps_relation_intro} to give us $\varepsilon_{k + 1} - \varepsilon_k = \mo{\eta^{\alpha + 1}}$ as desired. Below we formalize the aforementioned discussion.
\begin{theorem}[HBF with the discretization error $\varepsilon_k = \mo{\eta^{\alpha}}$]
\label{theorem:hbf}
    Let $k\in [N]$ be the iteration count, $\eta$ be the step size, and $t_k = k\eta$, the piece-wise continuous time differential equation HB Flow (HBF) for the discrete update Eq.~\eqref{eq:discrete_beta}
    \begin{equation}
        \textup{\textbf{HBF: }} \ \dot{\beta} = - G_k(\beta) - \eta \gamma_k (\beta), \quad t \in [t_k, t_{k + 1})    
    \end{equation}
    has a discretization error that satisfies
    \begin{equation}
    \label{eq:eps_relation_theorem}
        \varepsilon_{k + 1} - \varepsilon_{k} = \mo{\eta^{\alpha + 1}}
    \end{equation}
    and, as a result, $\varepsilon_k = \mo{\eta^{\alpha}}$ for $\alpha \geq 1$, if $G_k$ and $\gamma_k$ has the following formulations:
    \begin{align}
    \label{eq:G_k_gamma_k}
        G_k & = \mu G_{k - 1} + \nabla L\\
        \gamma_k &  = \sum_{\sigma=0}^{\alpha - 2}\eta^{\sigma}\gamma_k^{(\sigma)}.   \label{eq:gamma_solution}
    \end{align} 
    In particular, for ease of notation, let $\mathbf{L}_{\beta}^{(k, \sigma)} = \gamma_k^{(\sigma - 1)} \cdot \nabla$ be a differential operator and
    \begin{align}
        & \gamma_{k}^{(-1)} = G_k, \quad \mathcal{S}_{m, \sigma} = \{ (\sigma_1, \dots, \sigma_m) | \sum_{i = 1}^m \sigma_i = \sigma - m + 2, \sigma_i \in \mathbb{N}\} \nn  \\
     &\chi_j^{(\sigma)}  = \sum_{m = 2}^{\sigma + 2} \sum_{\mathcal{S}_{m, \sigma}} \frac{1}{m!} \Big[(-1)^{m} \mathbf{L}_{\beta}^{(j, \sigma_1)} \cdots \mathbf{L}_{\beta}^{(j, \sigma_{ m - 1})}\gamma_j^{(\sigma_{m} - 1)} + \mu \mathbf{L}_{\beta}^{(j - 1, \sigma_1)} \cdots \mathbf{L}_{\beta}^{(j - 1, \sigma_{m - 1})}\gamma_{j - 1}^{(\sigma_{m} - 1)} \Big],\nn
    \end{align}
    then each term of $\gamma_k$ in Eq.~\eqref{eq:gamma_solution} can be simply written as
    \begin{equation}
    \label{eq:gamma_k_sigma}
        \forall \sigma \in \mathbb{N}: \ \gamma_k^{(\sigma)} = \sum_{j = 0}^{k} \mu^{k - j} \chi_{j}^{(\sigma)}.
    \end{equation}
\end{theorem}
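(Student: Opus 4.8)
The plan is to verify that the $G_k$ and $\gamma_k$ displayed in the theorem are exactly what the order-by-order matching of the overview produces, and then to run the announced induction. The backbone is the \emph{exact} identity Eq.~\eqref{eq:eps_relation_intro}: it holds for any choice of $G_k,\gamma_k$, since it only records the second-order Taylor expansion with integral remainder (Eq.~\eqref{eq:taylor1}) combined with the discrete update Eq.~\eqref{eq:dis_hb}. Hence it suffices to show that each of its three right-hand terms is $\mathcal{O}(\eta^{\alpha+1})$: the $\mu(\varepsilon_k-\varepsilon_{k-1})$ term by an induction hypothesis, the drift term by the choice $G_k=\mu G_{k-1}+\nabla L$, and the $\eta^2$-term by the choice of $\gamma_k$.

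First I would dispose of the drift term $-\eta[G_k-\mu G_{k-1}-\nabla L(\beta_k)]$. Reading $G_k=\mu G_{k-1}+\nabla L$ as an identity of vector fields with the convention $G_{-1}\equiv 0$ (so $G_0=\nabla L$, matching the momentumless first step), the bracket collapses to $\nabla L(\beta(t_k))-\nabla L(\beta_k)=\nabla^2 L(\xi_k)\,\varepsilon_k$ for a mean-value point $\xi_k$; under the running hypothesis $\varepsilon_k=\mathcal{O}(\eta^\alpha)$ this is $\mathcal{O}(\eta^\alpha)$, so the term is $\mathcal{O}(\eta^{\alpha+1})$, provided $L$ has a bounded Hessian along the trajectory.

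The technical heart is the expansion of $\eta^2(I_k^++\mu I_k^-)$. Set $V_k:=G_k+\eta\gamma_k=\sum_{\sigma\ge 0}\eta^\sigma\gamma_k^{(\sigma-1)}$ (with $\gamma_k^{(-1)}:=G_k$), so that HBF reads $\dot\beta=-V_k(\beta)$ on $[t_k,t_{k+1})$; then a short induction on $m$ gives $\beta^{(m)}(t_k^{+})=(-1)^m(L_{V_k}^{m-1}V_k)(\beta(t_k))$ with $L_{V_k}:=V_k\cdot\nabla$, and likewise $\beta^{(m)}(t_k^{-})=(-1)^m(L_{V_{k-1}}^{m-1}V_{k-1})(\beta(t_k))$. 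Plugging the Taylor series of $\ddot\beta(t_k\pm s)$ into $\eta^2 I_k^{\pm}=\int_0^\eta\ddot\beta(t_k\pm s)(\eta-s)\,ds$ and using $\int_0^\eta s^n(\eta-s)\,ds=\eta^{n+2}/((n+1)(n+2))$ turns the factors $1/n!$ into $1/m!$ with $m=n+2$; for $I_k^-$ the sign $(-1)^n$ coming from $(-s)^n$ cancels the sign $(-1)^{n+2}$ in $\beta^{(n+2)}(t_k^-)$, which is precisely why the index-$(j-1)$ part of $\chi_j^{(\sigma)}$ carries $+\mu$ while the index-$j$ part carries $(-1)^m$. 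Expanding $L_{V_k}=\sum_\sigma\eta^\sigma\mathbf{L}_\beta^{(k,\sigma)}$ and multiplying out $L_{V_k}^{m-1}V_k$ then sorts the terms of fixed total $\eta$-degree into the set $\mathcal{S}_{m,\sigma}$ (with $\sum_i\sigma_i=\sigma-m+2$, which forces $2\le m\le\sigma+2$), so the $\eta^{\sigma+2}$-coefficient of $\eta^2(I_k^++\mu I_k^-)$ is exactly $\chi_k^{(\sigma)}$. Since every $\gamma$ inside $\chi_k^{(\sigma)}$ carries superscript at most $\sigma-1$, the matching equation $\chi_k^{(\sigma)}-\gamma_k^{(\sigma)}+\mu\gamma_{k-1}^{(\sigma)}=0$ is solvable recursively in $\sigma$, and unrolling the $k$-recursion with $\gamma_{-1}^{(\sigma)}\equiv 0$ yields Eq.~\eqref{eq:gamma_k_sigma}. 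Truncating $\gamma_k$ at $\sigma=\alpha-2$ kills the coefficients of $\eta^0,\dots,\eta^{\alpha-2}$ in $I_k^++\mu I_k^--\gamma_k+\mu\gamma_{k-1}$, so this quantity is $\mathcal{O}(\eta^{\alpha-1})$ and the $\eta^2$-term of Eq.~\eqref{eq:eps_relation_intro} is $\mathcal{O}(\eta^{\alpha+1})$.

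Combining the three estimates gives $\varepsilon_{k+1}-\varepsilon_k=\mu(\varepsilon_k-\varepsilon_{k-1})+\eta\,\nabla^2 L(\xi_k)\varepsilon_k+R_k$ with $R_k=\mathcal{O}(\eta^{\alpha+1})$ uniformly in $k\in[N]$. Since $\varepsilon_0=0$ (and $k=0,1$ are checked directly from the conventions above), a discrete Grönwall argument applied to the coupled recursion for the pair $(\varepsilon_k,\ \varepsilon_k-\varepsilon_{k-1})$ upgrades $R_k=\mathcal{O}(\eta^{\alpha+1})$ to $\varepsilon_{k+1}-\varepsilon_k=\mathcal{O}(\eta^{\alpha+1})$, whence $\varepsilon_k=\sum_{j<k}(\varepsilon_{j+1}-\varepsilon_j)=\mathcal{O}(N\eta^{\alpha+1})=\mathcal{O}(\eta^\alpha)$ on a fixed horizon $N\eta=T$. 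I expect the main obstacle to be the bookkeeping in the third step — establishing the iterated-Lie-derivative formula for $\beta^{(m)}$, carrying through the $\mathcal{S}_{m,\sigma}$ combinatorics with the $I_k^+$ versus $I_k^-$ sign asymmetry, and, above all, verifying that all $\mathcal{O}$-constants are uniform in $k$; the last point requires a separate induction on $\sigma$ showing $\gamma_k^{(\sigma)}$ and its derivatives are bounded uniformly in $k$, which is exactly where $\mu\in(0,1)$ is used, through $\sum_{j\le k}\mu^{k-j}\le(1-\mu)^{-1}$, together with enough smoothness of $L$.
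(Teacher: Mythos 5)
Your proposal follows essentially the same route as the paper's proof: the exact recursion Eq.~\eqref{eq:eps_relation_intro}, the choice $G_k=\mu G_{k-1}+\nabla L$ collapsing the drift term to $\nabla L(\beta(t_k))-\nabla L(\beta_k)$, the iterated-Lie-derivative series expansion of $I_k^{\pm}$ (the paper's Lemma~\ref{lemma:Ik_pm}, including the same sign asymmetry between $I_k^+$ and $I_k^-$), the order-by-order matching $\gamma_k^{(\sigma)}=\mu\gamma_{k-1}^{(\sigma)}+\chi_k^{(\sigma)}$ unrolled to Eq.~\eqref{eq:gamma_k_sigma}, truncation at $\sigma=\alpha-2$, and the concluding induction from $\varepsilon_0=0$. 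Your explicit attention to the uniformity of the $\mathcal{O}$-constants in $k$ (via $\sum_{j\le k}\mu^{k-j}\le(1-\mu)^{-1}$ and a Gr\"onwall-type summation on a fixed horizon $N\eta=T$) is a point the paper's Lemma~\ref{lemma:a.1} passes over silently, but it does not change the argument's structure.
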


\paragraph{Remark 1}Eq.~\eqref{eq:eps_relation_theorem} is an \emph{equality} obtained from solving the integral functional equation Eq.~\eqref{eq:I_k_pm_equation}, rather than an inequality bound. In addition, Eq.~\eqref{eq:eps_relation_theorem} is established to \emph{arbitrary order of the step size $\eta$} for any given $\alpha \geq 1$. Our approach is different from the previous continuous time model for HB method~\citep{ghosh2023implicit} where the discretization error is  specifically constructed to the second order of the step size.

\paragraph{Remark 2}In Eq.~\eqref{eq:G_k_gamma_k}, the formulation of $G_k$ intuitively resembles the update of momentum in the discrete learning dynamics of HB $m_{k} = \mu m_{k - 1} - \nabla L$. Interestingly, $G_k$ can be further simplified as
\begin{equation}
\label{eq:G_k}
    G_k = \frac{1 - \mu^{k + 1}}{1 - \mu} \nabla L \overset{\text{large } k }{\longrightarrow} \frac{\nabla L}{1 - \mu},    
\end{equation}
which is exactly the R.H.S of the RGF~(Eq.~\eqref{eq:rescaled_gf}). Moreover, Eq.~\eqref{eq:gamma_k_sigma} indicates that each iteration of HB depends on the history of previous iterations as $\gamma_k^{(\sigma)}$ incorporates information of all previous $\chi_j^{(\sigma)}$ with $j \leq k$, and such dependence decays very fast due to the coefficient $\mu^{k - j} \ll 1$ for small $j$. By letting $\mu = 0$, all the dependence on $k$ will disappear and our results can recover those of GD. Interestingly, it is worth mentioning that the difference between HBF and the continuous approximations of GD is closely related to the powers of $\eta(1 + \mu) / (1 - \mu)^2$ as we will show in Section~\ref{sec:hbf_special_order}. 

\paragraph{Remark 3}The formulations of $\gamma_k^{(\sigma)} $ are obtained in a recursive manner, ($\chi^{(\sigma)}$ is obtained from $\gamma_k^{(\sigma')}$'s with $\sigma' < \sigma$), hence Theorem~\ref{theorem:hbf} allows us to always build continuous time models for HB method with smaller discretization errors given those with larger discretization errors, instead of performing the whole set of analysis. For example, to build the HBF with $\varepsilon_k = \mo{\eta^{3}}$ given HBF with $\varepsilon_k = \mo{\eta^{2}}$, we only need to derive $ \gamma_k^{(1)}$ (since $\gamma_k$ is truncated to the order $\alpha - 2 = 1$), which can be obtained by applying $\gamma_k^{(0)}$ provided by the HBF with $\varepsilon_k = \mo{\eta^{2}}$. In addition, the recursive manner in Theorem~\ref{theorem:hbf} provides a possibility to calculate the involved terms automatically by using software for symbolic mathematics such as SymPy, which could be an interesting future work. 

\subsection{HBF with Discretization Error $\varepsilon_k = \mo{\eta^{2}}$ and  $\varepsilon_k = \mo{\eta^3}$}
\label{sec:hbf_special_order}
\begin{table*}[t]
    \centering
    \resizebox{\textwidth}{!}{
    \renewcommand\arraystretch{1.9}
    \begin{tabular}{c|c|c}
    \toprule
         $\varepsilon_k = \mo{\eta^{\alpha}}$  & GD & HB  \\
        \hline
        $\alpha = 1$ & $\dot{\beta} = - \nabla L$ & $\dot{\beta} = - \frac{\nabla L }{(1 - \mu)}$~\citep{nikola2020momentum}\\
        \hline
        $\alpha = 2$ & $\dot{\beta} = - \nabla L - \eta \frac{\nabla L \cdot \nabla^2 L }{2}$~\citep{barrett2022implicit} & $\dot{\beta} = - \frac{\nabla L}{1 - \mu} - \eta \frac{1 + \mu}{(1 - \mu)^3} \frac{\nabla L\cdot \nabla^2 L }{2}$ Eq.~\eqref{eq:hbf_order2} and \citep{ghosh2023implicit} \\
        \hline
        $\alpha = 3$ & $\dot{\beta} = - \nabla L -  \eta \frac{ \nabla L \cdot \nabla^2 L }{2} $ & $\dot{\beta} = - \frac{\nabla L}{1 - \mu} - \eta \frac{1 + \mu}{(1 - \mu)^3} \frac{\nabla L \cdot \nabla^2 L }{2} $\\
         & \quad \quad $-  \eta^2 \left[\frac{\omega_1}{4} + \frac{\omega_2}{12} \right]$ \cite{taiki2023eom, rosca2023gf} & $- \frac{\eta^2(1 + \mu)^2}{(1 - \mu)^5}\left[ \frac{\omega_1}{4} + \frac{(1+10\mu+\mu^2)\omega_2}{12(1 + \mu)^2}\right]$ Eq.~\eqref{eq:hbf_order3} of this work\\
         \hline
         Arbitrary $\alpha$ & \cite{taiki2023eom, rosca2023gf} & Theorem~\ref{theorem:hbf} of this work\\
        \hline
        Discrete & $\beta_{k + 1} = \beta_k - \nabla L(\beta_k) $ & $    \beta_{k + 1} = \beta_k - \eta  \nabla L(\beta_k) + \mu \left( \beta_k - \beta_{k-1}\right)$\\
        \bottomrule
    \end{tabular}
    }
    \vspace{0.1cm}
    \caption{Continuous approximations for GD and HB up to different orders of discretization error.}
    \label{tab:apa}
\end{table*}
In this section, we derive HBF with discretization error to the second and third order of the step size to indicate how our approach works. There are basically three steps for finding a HBF with $\varepsilon_k = \mo{\eta^{\alpha}}$ for $\alpha \geq 1$: 
\begin{enumerate}
    \item truncate $\gamma_k$ to the order $\alpha - 2$ in Eq.~\eqref{eq:gamma_solution}, i.e, $\gamma_k = \sum_{\sigma = 0}^{\alpha - 2} \gamma_k^{(\sigma)}$;
    \item  from the smallest $\sigma=0$ to $\sigma = \alpha -2$, find all $\chi_j^{(\sigma)}$ with $j \leq k$ by identifying the corresponding $\mathcal{S}_{m, \sigma}$ with $m = \{2, \dots, \sigma + 2\}$ for each $\sigma$; 
    \item derive the expression of $\gamma_k^{(\sigma)}$ for all $\sigma \leq \alpha - 2$ in a recursive manner using the relation $\gamma_k^{(\sigma)} = \sum_{j = 0}^{k} \mu^{k - j } \chi_{j}^{(\sigma)}$. 
\end{enumerate}

Below we discuss the cases for $\alpha = 2, 3$. We also summarize these results in Table~\ref{tab:apa}. Note that the case for $\alpha = 1$ states that HBF is a RGF, i.e., $\dot{\beta}= - \nabla L / (1 - \mu)$, which might not fully characterize the difference between momentum methods and vanilla GD. 

\subsubsection{HBF with $\varepsilon_k = \mo{\eta^{2}}$}
\label{sec:imp_hb_alpha2}
According to Theorem~\ref{theorem:hbf}, there is only one term in the series of $\gamma_k$, i.e., $\gamma_k^{(0)}$. Recall that $\mathbf{L}_{\beta}^{(k, 0)} = G_k \cdot \nabla$ and there is only one element in the set $\mathcal{S}_{m=2, \sigma=0}$, i.e., $\mathcal{S}_{m=2, \sigma=0} = \{ (\sigma_1 =0, \sigma_2= 0)\},$ we obtain for $j \geq 1:$
\begin{equation}
    \chi_j^{(0)} = \frac{1}{2} \left[ \mathbf{L}_{\beta}^{(j, 0)}\gamma_j^{(-1)} + \mu \mathbf{L}_{\beta}^{(j - 1, 0)}\gamma_{j - 1}^{(-1)}\right].
\end{equation}
Thus, using the definition of $\mathbf{L}_{\beta}^{(j, 0)}$ and $\gamma_j^{(1)}$ according to Eq.~\eqref{eq:gamma_k_sigma} in Theorem~\ref{theorem:hbf}, we can immediately derive that 
\begin{align}
    \gamma_k = \gamma_k^{(0)} = \frac{\nabla L \cdot \nabla^2 L}{2(1 - \mu)^2}\sum_{j = 0}^{k}\mu^{k - j}\left[ (1 - \mu^{j + 1})^2 + \mu(1 - \mu^j)^2 \right].
\end{align}
Interestingly, all the dependence on the iteration count $k$ exists in the form of $\mu^{k}$, then for large iteration count $k$, the form of $\gamma_k$ can be largely simplified as $\gamma_k \approx \frac{1 + \mu}{(1 - \mu)^3} \frac{\nabla L \cdot \nabla^2 L }{2}.$ This gives us HBF with $\varepsilon_k = \mo{\eta^{2}}$:
\begin{align}
\label{eq:hbf_order2}
    \dot{\beta} = - \frac{\nabla L}{1 - \mu} - \eta \frac{1 + \mu}{(1 - \mu)^3} \frac{\nabla L \cdot \nabla^2 L }{2},
\end{align}
which is consistent with the $\mo{\eta^2} $continuous approximation of HB in \cite{ghosh2023implicit} while our derivation of HBF is in a different approach that does not depend inequality bounds. It is worth to mention that when $\mu = 0$, HBF recovers the $\mo{\eta^2}$ continuous approximation of GD as expected.

\subsubsection{HBF with $\varepsilon_k = \mo{\eta^{3}}$}
\label{sec:imp_hb_alpha3}
As shown in Theorem~\ref{theorem:hbf}, in this case, the series of $\gamma_k$ should be truncated to the order $\alpha - 2 = 1$, hence $\gamma_k = \gamma_k^{(0)} + \eta \gamma_k^{(1)}$. Since we have already derived $\chi_j^{(0)}$ for HBF with $\varepsilon_k = \mo{\eta^2}$ in Section~\ref{sec:imp_hb_alpha2}, we only need to find $\chi_k^{(1)}$, which will give us $\gamma_k^{(1)}$. We first find the collection of sets $\mathcal{S}_{m,\sigma}$, where $m = \{2,  3\}$ given $\sigma = 1$ as $m$ can be taken as integers between $[2, \sigma + 2]$ according to Theorem~\ref{theorem:hbf}. Specifically, we have
\begin{equation}
    \begin{aligned}
        \mathcal{S}_{2, 1}  & = \{(\sigma_1=1, \sigma_2 =0), (\sigma_1=0, \sigma_2= 1)\},  \\
        \mathcal{S}_{3, 1} &= \{(\sigma_1=0, \sigma_2=0, \sigma_3 =0)\}.
    \end{aligned}
\end{equation}
We defer the rest of the detailed calculation to Appendix~\ref{app:main_results_approximation} and directly present the results for large $k$ below: 
\begin{equation}
    \gamma^{(1)}_k =  \frac{(1 + \mu)^2}{4(1 - \mu)^5}\left[ \omega_1 + \frac{1+10\mu+\mu^2}{3(1 + \mu)^2} \omega_2 \right].
\end{equation}
where we let
\begin{align*}
    \omega_1 = \left( \nabla L \cdot \nabla^2 L \right)\cdot \nabla^2 L, \quad \omega_2 = \nabla L \cdot \nabla \left( \nabla L \cdot \nabla^2 L \right).
\end{align*}
As a result, the HBF with $\varepsilon_k = \mo{\eta^3}$ is 
\begin{equation}
\label{eq:hbf_order3}
\begin{aligned}
    \dot{\beta} = - \frac{\nabla L}{1 - \mu} - \frac{\eta}{2} \frac{1 + \mu}{(1 - \mu)^3} \nabla L \cdot \nabla^2 L  - \frac{\eta^2}{4}\frac{(1 + \mu)^2}{(1 - \mu)^5}\left[ \omega_1 + \frac{1+10\mu+\mu^2}{3(1 + \mu)^2} \omega_2 \right].
\end{aligned}
\end{equation}
\paragraph{Implicit regularization of HBF}According to Eq.~\eqref{eq:hbf_order2}, HBF with $\varepsilon_k =\mo{\eta^{2}}$ in Section~\ref{sec:imp_hb_alpha2} indicates that momentum induces a stronger implicit gradient regularization (IGR, \cite{barrett2022implicit}), i.e., $\gamma_{\hb} = (1 + \mu) / (1 - \mu)^3 \gamma_{\gd}$ where $\gamma_{\hb}$ is the implicit regularization of HB while 
\setlength{\intextsep}{8pt}
\setlength{\columnsep}{8pt}
\begin{wrapfigure}{r}{0.5\textwidth}
  \centering
    \includegraphics[width=0.48\textwidth]{./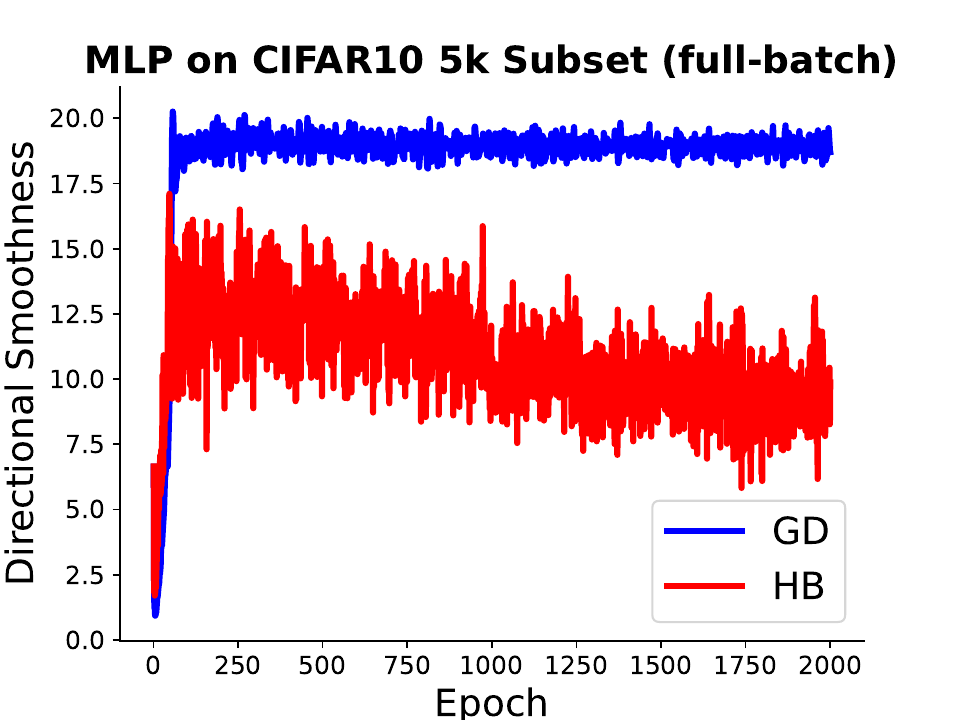}
  \caption{{Comparison of directional smoothness for HB and GD for  MLP on CIFAR-10 with full-batch GD and HB ($\mu=0.9$) with $\eta = 0.1$.}}
\label{fig:cifar10_mlp}
\end{wrapfigure}
$\gamma_{\gd}$ is that of GD. For 
HBF with $\varepsilon_k = \mo{\eta^3}$, we can conclude that the difference between HB and GD is more complicated since HBF now relies more on $\omega_2$ that primarily depends on $\nabla^3 L$. Interestingly, the formulation of HBF with $\alpha = 3$ suggests that HB will implicitly impose a regularization effect of \emph{directional smoothness}, which is not the case for GD. In particular, for $\mu\approx 1$, the third term of Eq.~\eqref{eq:hbf_order3} is close to 
\begin{equation}
    \omega_1 + \omega_2 = \nabla \left( \nabla L \cdot (\nabla^2 L \nabla L) \right),
\end{equation}
which is an approximation for the directional smoothness~\citep{ahn2022understandingunstableconvergencegradient}
\begin{equation}
   \mathscr{D} = \frac{\nabla L(\beta) \cdot\left(\nabla L(\beta)   - \nabla L(\beta - \eta \nabla L(\beta))\right)}{\eta\|\nabla L (\beta)\|^2}\nn
\end{equation}
by expanding $\nabla L (\beta - \eta \nabla L (\beta))$ around $\beta$. The directional smoothness measures the extent of oscillating behavior of optimization algorithms, i.e., the discrepancy between two adjacent iterates. For GD, \citet{ahn2022understandingunstableconvergencegradient} revealed that it exhibits oscillatory behavior such that its directional smoothness, $\mathscr{D}^{\gd}$, would saturate around $2/\eta$. As a comparison, due to the implicit regularization for the directional smoothness of our HBF, HB prefers learning dynamics with smaller directional smoothness $\mathscr{D}^{\hb} < \mathscr{D}^{\gd}$, implying an oscillatory behavior to less extent compared to GD.  This is verified in Fig.~\ref{fig:cifar10_mlp}, where $\mathscr{D}^{\gd} \approx 2 / \eta = 20$ while $\mathscr{D}^{\hb} < \mathscr{D}^{\gd}$ and keeps decreasing. The numerical experimental details can be found in Appendix~\ref{app:exp}.
 
\section{Implicit Bias of Momentum Methods through HBF}
\label{sec:ib}
The HBF proposed in Theorem~\ref{theorem:hbf} provides a reliable mathematical tool for analyzing a wide variety of properties of HB. One crucial aspect is its implicit bias in deep learning. To demonstrate the significance of HBF, we characterize the implicit bias of HBF specifically for the two layer diagonal linear network~\citep{woodworth2020} as a case study.

\paragraph{The formulation of 2-layer diagonal linear networks}A 2-layer diagonal linear network (DLN) with parameter $\mw = (\mw_+, \mw_-)$ where $\mw_{\pm}\in \mathbb{R}^{d}$ is equivalent to a linear predictor 
\begin{equation}
    \label{eq:def_diagonal}
    f(x; \mw) = x^T \mw : =  x^T(\mw_{+}\odot \mw_{+} -  \mw_{-}\odot \mw_{-}),
\end{equation}
where we use the parameterization $\mw = \mw_{+}\odot \mw_{+} -  \mw_{-}\odot \mw_{-}$~\citep{woodworth2020}. This model is a popular proxy model for deep neural networks as it shares many interesting phenomena with more complex architectures, e.g., the transition from kernel to rich regime. In this section, we focus on this model with $\mw_{+; j}(0) = \mw_{-;j}(0)$.

\paragraph{Leaning task}For our task, given a dataset $\{(x_i, y_i)\}_{i = 1}^n$ with $n$ samples where $x_i \in \mathbb{R}^{d}$ and $y_i\in \mathbb{R}^{}$, we assume that $n < d$ and consider the regression problem with quadratic loss $L(\mw_+,\mw_{-}) =  \sum_{i = 1}^n (x_i^T\mw - y_i)^2 /(2n).$ We use $X\in \mathbb{R}^{n\times d}$ to denote the data matrix and $y = (y_1, \dots, y_n)^T \in \mathbb{R}^{n}$. 

\paragraph{Implicit bias of GF for diagonal linear networks}For GF, \citet{azulay2021,woodworth2020} showed that 
the limit point of $\mw$ is equivalent to the solution of the constrained optimization problem $ \mw(\infty) = \arg\min_{\mw} \ \Lambda^{\gf} (\mw; \kappa(0)), \ s.t. \ X \mw = y$
where $\kappa_j(0) = \mw_{+;j}(0)\mw_{-;j}(0)$, the potential function $\Lambda^{\gf} (\mw; \kappa(0)) = \sum_{j = 1}^d \Lambda^{\gf}_j(\mw; \kappa(0))$, and
\begin{equation}
\label{eq:ib_gf}
\begin{aligned}
    \Lambda^{\gf}_j(\mw; \kappa(0)) = \frac14 \Bigg[ \mw_{j} \arcsinh\left( \frac{\mw_j}{2 \kappa_j(0)} \right)  - \sqrt{4(\kappa_j(0))^2 + \mw_j^2} + 2 \kappa_j(0) \Bigg]. 
\end{aligned}
\end{equation}
Note that $\kappa(0)$ controls the transition from rich regime to kernel regime, i.e., $\Lambda^{\gf}(\mw; \kappa(0)) \to \|\mw\|_1$ for small $\kappa(0)$ while $\Lambda^{\gf}(\mw; \kappa(0)) \to \|\mw\|_2$  large $\kappa(0)$~\citep{woodworth2020}.

\subsection{Implicit Bias of HBF for Diagonal Linear Networks}
\label{subsec:ib}
According to Theorem~\ref{theorem:hbf}, the learning dynamics of the diagonal linear networks $f(x; \mw)$ can be written as
\begin{equation}
\label{eq:w_dynamics}
\begin{aligned}
    \dot{\mw}_+ = -\frac{\nabla_{\mw_+} L}{1 - \mu} - \eta\gamma^{\mw_+}, \
    \dot{\mw}_- = -\frac{\nabla_{\mw_-} L}{1 - \mu} - \eta\gamma^{\mw_-}
\end{aligned}
\end{equation}
where we use $\gamma^{\mw_+} \in \mathbb{R}^{d}$ and $\gamma^{\mw_-}\in \mathbb{R}^{d}$ for HBF of $\mw_+$ and HBF of $\mw_-$, respectively, and we use $\gamma_{;j}^{\mw_{\pm}}$ to denote its $j$-th component.  
Compared to RGF~(Eq.~\eqref{eq:rescaled_gf}), Eq.~\eqref{eq:w_dynamics} has one extra term that accounts for the high-order discretization error. The implicit bias of $\mw$ under the RGF is similar to that of GF, which, however, is not the case for Eq.~\eqref{eq:w_dynamics}. 
\begin{theorem}[Implicit bias of HBF for diagonal linear networks]
\label{theorem:main}
    If the dynamics of diagonal linear network $f(x; \mw) = x^T\mw$ where $\mw = \mw_{+}\odot \mw_{+} - \mw_{-}\odot\mw_{-}$ follows HBF defined in Theorem~\ref{theorem:hbf} and if $\mw(\infty)$ converges to an interpolation solution, let $\kappa_j(t) = \mw_{+;j}(0)\mw_{-;j}(0) \exp(-\eta\epsilon_j(t))$ where $\epsilon_j(t) = \int_0^t ds \left(\gamma_{;j}^{\mw_{+}}(s) / \mw_{+;j}(s) + \gamma_{;j}^{\mw_{-}}(s) / \mw_{-;j}(s) \right)$, then $\mw(\infty)$ satisfies that
    \begin{equation}
        \mw(\infty) = \underset{\mw}{\arg \min} \ \Lambda(\mw; \kappa)\quad  s.t. \ X \mw = y,
    \end{equation}
where $\Lambda(\mw; \kappa) = \sum_{j = 1}^{d} \Lambda_j(\mw, t = \infty; \kappa(\infty)) $ with
\begin{equation}
    \begin{aligned}
    \label{eq:main_defs}
     \Lambda_j(\mw, t; \kappa(t)) = \Lambda_j^{\gf}(\mw; \kappa(t)) + \mw_j\varphi_j(t), \
     \varphi_j(t) = \frac{\eta}{4}\int_{0}^{t}ds \left(\frac{\gamma_{;j}^{\mw_{+}}(s)}{\mw_{+;j}(s)} - \frac{\gamma_{;j}^{\mw_{-}}(s)}{\mw_{-;j}(s)} \right).
    \end{aligned}
\end{equation}
\end{theorem}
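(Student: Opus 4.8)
The plan is to follow the classical derivation of the implicit bias of diagonal linear networks (cf.~\citet{woodworth2020,azulay2021}), carrying along the extra counter terms $\gamma^{\mw_+},\gamma^{\mw_-}$ of the HBF. First I would write Eq.~\eqref{eq:w_dynamics} coordinatewise for the quadratic loss. Setting $\tilde r(t):=\tfrac1n X^\top(X\mw(t)-y)\in\R^d$, one has $\nabla_{\mw_{+;j}}L=2\mw_{+;j}\tilde r_j$ and $\nabla_{\mw_{-;j}}L=-2\mw_{-;j}\tilde r_j$, so for each coordinate $j$,
\[
\tfrac{d}{dt}\log \mw_{+;j}=-\tfrac{2\tilde r_j}{1-\mu}-\eta\,\tfrac{\gamma^{\mw_+}_{;j}}{\mw_{+;j}},\qquad
\tfrac{d}{dt}\log \mw_{-;j}=+\tfrac{2\tilde r_j}{1-\mu}-\eta\,\tfrac{\gamma^{\mw_-}_{;j}}{\mw_{-;j}},
\]
where $\gamma^{\mw_\pm}$ are the piece-wise-defined counter terms of Theorem~\ref{theorem:hbf}, so the integrals below are of piece-wise continuous functions; this logarithmic substitution is legitimate for all finite $t$ since, from the nondegenerate initialization $\mw_{+;j}(0)=\mw_{-;j}(0)\neq 0$, neither coordinate reaches $0$ in finite time.

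Next I would add and subtract these identities and integrate on $[0,t]$. The sum cancels the $\tilde r_j$ term and yields $\log(\mw_{+;j}(t)\mw_{-;j}(t))=\log(\mw_{+;j}(0)\mw_{-;j}(0))-\eta\epsilon_j(t)$, i.e.\ $\mw_{+;j}(t)\mw_{-;j}(t)=\kappa_j(t)$, which is exactly the definition of $\kappa_j$ in the statement. The difference yields $\tfrac{d}{dt}\log(\mw_{+;j}/\mw_{-;j})=-\tfrac{4\tilde r_j}{1-\mu}-\eta(\gamma^{\mw_+}_{;j}/\mw_{+;j}-\gamma^{\mw_-}_{;j}/\mw_{-;j})$; introducing the residual integral $\lambda(t):=\tfrac{4}{n(1-\mu)}\int_0^t(y-X\mw(s))\,ds\in\R^n$, using $\mw_{+;j}(0)=\mw_{-;j}(0)$, and recognizing $\tfrac\eta4\int_0^t(\gamma^{\mw_+}_{;j}/\mw_{+;j}-\gamma^{\mw_-}_{;j}/\mw_{-;j})\,ds=\varphi_j(t)$, one gets $\log(\mw_{+;j}(t)/\mw_{-;j}(t))=[X^\top\lambda(t)]_j-4\varphi_j(t)$. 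Combining with $\mw_{+;j}\mw_{-;j}=\kappa_j$ and $\mw_j=\mw_{+;j}^2-\mw_{-;j}^2$ gives the closed form $\mw_j(t)=2\kappa_j(t)\sinh([X^\top\lambda(t)]_j-4\varphi_j(t))$; inverting $\sinh$ and letting $t\to\infty$ produces, for every $j$,
\[
\arcsinh\!\Big(\tfrac{\mw_j(\infty)}{2\kappa_j(\infty)}\Big)+4\varphi_j(\infty)=[X^\top\lambda(\infty)]_j,
\]
where the limits $\kappa_j(\infty),\varphi_j(\infty),\lambda(\infty)$ exist under the convergence hypothesis (convergence of $\lambda(\infty)$ follows from that of $\mw_\pm$ once $\kappa_j(\infty)>0$, i.e.\ once $\epsilon_j(\infty)$ is finite).

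Finally I would match this relation with the KKT system of $\min_{\mw}\Lambda(\mw;\kappa(\infty))$ s.t.\ $X\mw=y$. The computational heart is that differentiating $\Lambda^{\gf}_j$ of Eq.~\eqref{eq:ib_gf} in $\mw_j$ collapses, via the cancellation $\tfrac{\mw_j/(2\kappa_j)}{\sqrt{1+\mw_j^2/(4\kappa_j^2)}}=\tfrac{\mw_j}{\sqrt{4\kappa_j^2+\mw_j^2}}$, to $\partial_{\mw_j}\Lambda^{\gf}_j=\tfrac14\arcsinh(\mw_j/(2\kappa_j))$, hence $\partial_{\mw_j}\Lambda_j=\tfrac14\arcsinh(\mw_j/(2\kappa_j))+\varphi_j$. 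Thus stationarity $\nabla_{\mw}\Lambda(\mw(\infty);\kappa(\infty))=X^\top\nu$ is precisely the displayed identity with $\nu=\lambda(\infty)/4$, while primal feasibility $X\mw(\infty)=y$ holds by the interpolation hypothesis. Since each $\Lambda_j$ is strictly convex in $\mw_j$ (with $\partial^2_{\mw_j}\Lambda_j=\tfrac14(4\kappa_j^2+\mw_j^2)^{-1/2}>0$ because $\kappa_j(\infty)>0$), $\Lambda$ is strictly convex and KKT is sufficient for global optimality, so $\mw(\infty)$ is its unique minimizer, proving the claim.

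The main obstacle is the bookkeeping around the path-dependent counter terms rather than any single hard estimate: one must justify that $\mw_{+;j}$ and $\mw_{-;j}$ never vanish in finite time, so that the logarithmic change of variables and every per-coordinate integral is well defined, and that the integrals defining $\epsilon_j$ and $\varphi_j$ — hence $\lambda$ — actually converge as $t\to\infty$, so the limiting KKT system is well posed; when $\gamma\equiv 0$ all of this degenerates to the known gradient-flow statement, so the genuinely new content is propagating the $\gamma$ terms through the two integrations and recognizing that they enter the limiting potential only through the linear tilt $\mw_j\varphi_j$.
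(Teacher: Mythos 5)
Your proposal is correct and arrives at the same KKT characterization, but it is organized differently from the paper's proof. You integrate the dynamics of $\log\mw_{+;j}$ and $\log\mw_{-;j}$ separately: the sum recovers the evolution $\kappa_j(t)=\kappa_j(0)e^{-\eta\epsilon_j(t)}$ (the content of Lemma~\ref{lemma:kappat}), and the difference yields the explicit parametrization $\mw_j(t)=2\kappa_j(t)\sinh\bigl([X^{\top}\lambda(t)]_j-4\varphi_j(t)\bigr)$, which you invert and only at the end identify with the stationarity condition $\partial_{\mw_j}\Lambda_j=\tfrac14\arcsinh\bigl(\mw_j/(2\kappa_j)\bigr)+\varphi_j=[X^{\top}\lambda(\infty)]_j/4$. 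The paper never solves the coordinate dynamics in closed form: it works with the combined dynamics $\dot{\mw}=-4\mv\odot\nabla_{\mw}L/(1-\mu)-\eta\Gamma_k^{\mw}$ (Lemma~\ref{lemma:w_dynamics_init}) and \emph{constructs} a time-dependent potential satisfying $\tfrac{d}{dt}\partial_{\mw_j}\Lambda_j+\partial_{\mw_j}L/(1-\mu)=0$ (Lemma~\ref{lemma:w_dynamics}) by solving $\partial_{\mw_j}^2\bar{\Lambda}_j=\tfrac14(\mw_j^2+4\kappa_j^2)^{-1/2}$ for $\bar{\Lambda}_j$ and then choosing $\varphi_j$ to absorb the residual terms; integrating this identity over $[0,\infty)$ gives the KKT system directly. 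The two routes are mathematically equivalent ($\sinh$ versus $\arcsinh$): yours is the more elementary Woodworth-style computation and makes the dual variable $\lambda$ explicit, while the paper's is phrased as a time-varying mirror-flow argument. Your explicit attention to well-posedness (non-vanishing of $\mw_{\pm;j}$ in finite time, convergence of the integrals defining $\epsilon_j$, $\varphi_j$, $\lambda$, and strict convexity of $\Lambda$ when $\kappa_j(\infty)>0$) addresses points the paper leaves implicit.
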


\paragraph{Comparison with the implicit bias of GF}Compared to the implicit bias of GF in Eq.~\eqref{eq:ib_gf}, there are two differences brought by the high-order correction terms of HBF: \textbf{(1)}. the potential function $\Lambda^{\gf}_j(\mw; \kappa(0))$ for GF becomes $\Lambda^{\gf}_j(\mw; \kappa(\infty))$ for HBF where $\kappa(\infty)$ is different from $\kappa(0)$, meaning that HBF induces an effect equivalent to a rescaling of the initialization; \textbf{(2)}. $\Lambda_j(\mw, \infty; \kappa)$ additionally depends on the product $\mw_j\varphi_j(\infty)$. A similar term will appear in the potential function of GF $\Lambda^{\gf}$ if the initialization no longer satisfies $\mw_{+}(0) = \mw_{-}(0)$. In this sense, HBF also brings an effect that is equivalent to breaking the symmetry of the initialization. Theorem~\ref{theorem:main} also applies to the case for higher-order continuous approximation of GD by setting $\mu = 0$, suggesting an effect equivalent to the rescaling of the initialization that has been verified in GD~\citep{even2023gdsgd}. This further reveals the reliability and usefulness of high-order continuous approximations.

The comprehensive characterization for the HBF requires a detailed investigation for the formulations of $\gamma^{\pm}$ specifically for the diagonal linear networks, which will be an open problem, while below we focus on $\varepsilon_k = \mo{\eta^2}$ as an example.
\begin{corollary}[Implicit bias of HBF for diagonal linear networks with $\varepsilon_k = \mo{\eta^2}$]
\label{prop:apa=2}
    Under conditions of Theorem~\ref{theorem:main}, if we use HBF with $\varepsilon_k = \mo{\eta^2}$, then
    \begin{equation}
        \kappa_j(t) = \kappa_j(0)\exp{\left[\frac{\eta(1 + \mu)}{(1 - \mu)^2}\left(- \frac{\Phi_j(t)}{1 - \mu} + \frac{\left(X^TX\mathbf{q}\right)_j}{n} \right)\right]}
    \end{equation}
    where $\Phi_j(t) = 4\int_{0}^t ds (\partial_{\mw_j} L)^2 > 0, ~\ \mathbf{q}\in\mathbb{R}^{d} $ with $\mathbf{q}_i = \sqrt{\mw_i^2(\infty) + 4\kappa_i^2(0)} - 2\kappa_i(0)$.
\end{corollary}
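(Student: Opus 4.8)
The plan is to use Theorem~\ref{theorem:main} as a black box and simply feed it the correction terms of the $\varepsilon_k = \mo{\eta^2}$ flow: once $\gamma^{\mw_+}$ and $\gamma^{\mw_-}$ are known, $\kappa_j(t)$ is pinned down through $\kappa_j(t) = \kappa_j(0)\exp(-\eta\epsilon_j(t))$ with $\epsilon_j(t) = \int_0^t (\gamma^{\mw_+}_{;j}/\mw_{+;j} + \gamma^{\mw_-}_{;j}/\mw_{-;j})\,ds$, so the whole task is to compute this integrand for the diagonal linear network and integrate it.

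First I would write out $\gamma^{\mw_\pm}$. Starting from the explicit $\varepsilon_k = \mo{\eta^2}$ correction obtained in Section~\ref{sec:imp_hb_alpha2}, namely $\gamma = \frac{1+\mu}{(1-\mu)^3}\cdot\tfrac12\,\nabla L\cdot\nabla^2 L = \frac{1+\mu}{4(1-\mu)^3}\nabla\|\nabla L\|^2$, and specialising to $L(\mw_+,\mw_-) = \tfrac1{2n}\|X\mw - y\|^2$ with $\mw = \mw_+\odot\mw_+ - \mw_-\odot\mw_-$, the computation becomes mechanical: the reduced Hessian is constant, $\nabla^2_{\mw}L = \tfrac1n X^TX$, the chain rule through the parameterisation reads $\partial_{\mw_{\pm;j}} = \pm 2\mw_{\pm;j}\,\partial_{\mw_j}$ on functions of $\mw$ alone, and $\|\nabla L\|^2 = 4\sum_l(\partial_{\mw_l}L)^2(\mw_{+;l}^2 + \mw_{-;l}^2)$ (the extra $\mw_{+;l}^2+\mw_{-;l}^2$ factor being differentiated directly). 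Differentiating, each of $\gamma^{\mw_\pm}_{;j}/\mw_{\pm;j}$ comes out as $\tfrac{1+\mu}{(1-\mu)^3}$ times the sum of a ``local'' piece proportional to $(\partial_{\mw_j}L)^2$ and a ``Hessian-coupling'' piece proportional to $\tfrac1n\sum_l(\partial_{\mw_l}L)(X^TX)_{lj}(\mw_{+;l}^2+\mw_{-;l}^2)$.

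Next I would integrate. Since the HBF trajectory differs from the rescaled gradient flow only at order $\eta$, and the whole correction is already multiplied by $\eta$, to the accuracy required I may carry out every one of these integrals along the RGF/GF trajectory; on that trajectory $\kappa_l$ is exactly conserved, so inside the integrands one may freeze $\kappa_l \equiv \kappa_l(0)$ and write $\mw_{+;l}^2+\mw_{-;l}^2 = \sqrt{\mw_l^2 + 4\kappa_l^2(0)}$. The local pieces integrate immediately to $\tfrac{1+\mu}{(1-\mu)^3}\Phi_j(t)$ with $\Phi_j(t) = 4\int_0^t(\partial_{\mw_j}L)^2\,ds$. For the residual Hessian-coupling integral I would invoke the elementary RGF identities $\tfrac{d}{ds}(\mw_{+;l}^2+\mw_{-;l}^2) = -\tfrac4{1-\mu}(\partial_{\mw_l}L)\,\mw_l$ and $\tfrac{d}{ds}\mw_l = -\tfrac4{1-\mu}(\partial_{\mw_l}L)(\mw_{+;l}^2+\mw_{-;l}^2)$ to recast it as a boundary value; the symmetric initialisation gives $\mw_l(0)=0$, hence $\mw_{+;l}^2(0)+\mw_{-;l}^2(0) = 2\kappa_l(0)$, and the boundary value collapses exactly to $\mathbf{q}_l = \sqrt{\mw_l^2(\infty)+4\kappa_l^2(0)} - 2\kappa_l(0)$, producing the $\tfrac1n(X^TX\mathbf{q})_j$ contribution. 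Feeding $\epsilon_j(\infty)$ back into $\kappa_j(t)=\kappa_j(0)\exp(-\eta\epsilon_j(t))$ gives the stated identity, the prefactors $\tfrac{1+\mu}{(1-\mu)^2}$ and $\tfrac1{1-\mu}$ arising from the powers of $1/(1-\mu)$ carried by $\gamma$ and by the RGF time rescaling.

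The delicate part is the order-in-$\eta$ bookkeeping rather than any single manipulation. I would need to check that: (i) replacing the HBF trajectory by RGF/GF and freezing $\kappa$ at $\kappa(0)$ inside $\gamma$ is legitimate, which holds only because the errors so incurred are $\mo{\eta}$ against an integrand that is itself $\mo{\eta}$, hence $\mo{\eta^2}$ in $\log\kappa_j$; (ii) the pre-asymptotic parts of $\gamma_k^{(0)}$ and of $G_k$ (the factor $\sum_i\mu^{k-i}[(1-\mu^{i+1})^2+\mu(1-\mu^i)^2]$ before it saturates at $\tfrac{1+\mu}{1-\mu}$, and the $-\mu^{k+1}/(1-\mu)$ transient in $G_k$) contribute only $\mo{\eta^2}$ to $\log\kappa_j$, because they decay geometrically in $k$ and so integrate to $\mo{\eta}$; and (iii) the time integrals $\int_0^\infty(\partial_{\mw_j}L)^2\,ds$ and $\int_0^\infty(\partial_{\mw_l}L)\,\mw_l\,ds$ converge, which is precisely where the hypothesis that $\mw(\infty)$ converges to an interpolation solution is used.
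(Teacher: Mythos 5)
Your overall strategy is exactly the paper's: start from $\kappa_j(t)=\kappa_j(0)e^{-\eta\epsilon_j(t)}$ (Lemma~\ref{lemma:kappat}), compute $\dot\epsilon_j=\gamma^{\mw_+}_{;j}/\mw_{+;j}+\gamma^{\mw_-}_{;j}/\mw_{-;j}$ for the $\varepsilon_k=\mo{\eta^2}$ flow, split it into a local piece that integrates to $\Phi_j$ and a coupling piece that is converted to a boundary value via the RGF conservation identities; the order-in-$\eta$ bookkeeping points (i)--(iii) you flag are also precisely the approximations the paper makes. However, there is a concrete gap in the step where you specialise $\gamma$ to the two parameter blocks, and it is fatal to the coupling term. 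You take $\gamma^{\mw_\pm}=\tfrac{1+\mu}{4(1-\mu)^3}\nabla_{\mw_\pm}\|\nabla L\|^2$ with the \emph{full} squared gradient norm $\|\nabla L\|^2=4\sum_l(\partial_{\mw_l}L)^2(\mw_{+;l}^2+\mw_{-;l}^2)$. Carrying this out with $\partial_{\mw_{\pm;j}}=\pm2\mw_{\pm;j}\partial_{\mw_j}$ gives
\[
\frac{\gamma^{\mw_\pm}_{;j}}{\mw_{\pm;j}}=\frac{1+\mu}{(1-\mu)^3}\Big[2(\partial_{\mw_j}L)^2\ \pm\ \frac{4}{n}\sum_l (X^TX)_{jl}\,\mv_l\,\partial_{\mw_l}L\Big],\qquad \mv_l=\mw_{+;l}^2+\mw_{-;l}^2,
\]
so the two coupling pieces enter $\dot\epsilon_j$ with \emph{opposite} signs and cancel: your $\log\kappa_j$ would contain only the $\Phi_j$ term and no $(X^TX\mathbf{q})_j$ term, contradicting the statement you are proving. (The non-cancelling combination survives in the difference $\gamma^{\mw_+}_{;j}/\mw_{+;j}-\gamma^{\mw_-}_{;j}/\mw_{-;j}$, i.e.\ it feeds $\varphi_j$, not $\kappa_j$.) Moreover, even setting the cancellation aside, an integrand proportional to $\mv_l\,\partial_{\mw_l}L=-\tfrac{1-\mu}{4}\dot{\mw}_l+\mo{\eta}$ is a total derivative of $\mw_l$, so your boundary-value trick would produce $\mw_l(\infty)$, not $\mathbf{q}_l=\sqrt{\mw_l^2(\infty)+4\kappa_l^2(0)}-2\kappa_l(0)$.

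The paper's computation avoids both problems because it defines $\gamma^{\mw_\pm}$ through the \emph{diagonal blocks} of the Hessian only, i.e.\ $\gamma^{\mw_+}\propto\nabla^2_{\mw_+}L\cdot\nabla_{\mw_+}L=\tfrac12\nabla_{\mw_+}\|\nabla_{\mw_+}L\|^2$ with $\|\nabla_{\mw_+}L\|^2=4\sum_l(\partial_{\mw_l}L)^2\mw_{+;l}^2$, and analogously for $\mw_-$. Then the coupling pieces carry $+\mw_{+;l}^2$ and $-\mw_{-;l}^2$ respectively, and their \emph{sum} is $\tfrac{4}{n}\sum_l(X^TX)_{jl}\,\mw_l\,\partial_{\mw_l}L$ with $\mw_l=\mw_{+;l}^2-\mw_{-;l}^2$; since $\mw_l\,\partial_{\mw_l}L=-\tfrac{1-\mu}{4}\dot{\mv}_l+\mo{\eta}$, this \emph{is} a total derivative of $\mv_l=\sqrt{\mw_l^2+4\kappa_l^2}$ and integrates, under the symmetric initialisation $\mw_l(0)=0$, exactly to the stated $\mathbf{q}_l$, producing the $\tfrac{1}{n}(X^TX\mathbf{q})_j$ term with the prefactor $\tfrac{\eta(1+\mu)}{(1-\mu)^2}$. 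To close your argument you must either adopt this block-diagonal convention for $\gamma^{\mw_\pm}$ from the outset, or accept that the full-Hessian specialisation you wrote down proves a different formula from the one in the corollary.
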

For the exponent of $\kappa_j(\infty)$, when $\Phi_j(\infty) > 0 $ dominates, e.g., $\kappa_i(0) \gg \mw_i(\infty)$, we will conclude that $\kappa_j(\infty) < \kappa_j(0)$, hence the rescaling effect brought by HBF equivalently reduces the initialization. Therefore, compared to $\Lambda^{\gf}$, $\Lambda(\mw; \kappa)$ will closer to the $\ell_1$-norm and the solution $\mw(\infty)$ will enjoy better sparsity. This finding is consistent with parts of results in \cite{papazov2024momentum}, which analyzed the implicit bias of HB also with a continuous time differential equation that is a second-order ODE, while our HBF is a first-order ODE and can also cover the case for GD simply by letting $\mu = 0$.

\section{Numerical Experiments}
\label{sec:exp}

\begin{figure*}[h]
    \centering
    \subfigure[]{
    \includegraphics[width=0.48\textwidth]{./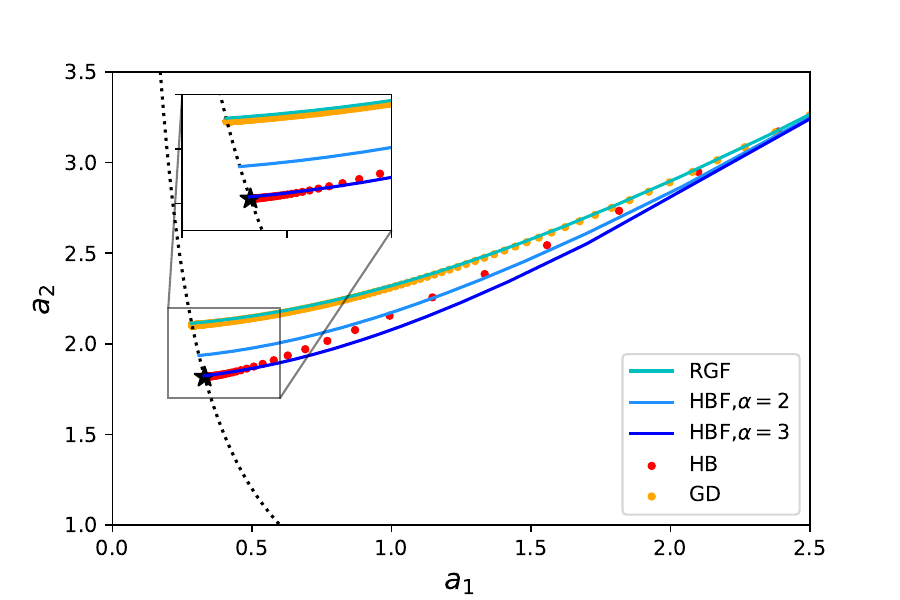}
    \label{fig:2dtraj}
    }
    \subfigure[]{
    \includegraphics[width=0.48\textwidth]{./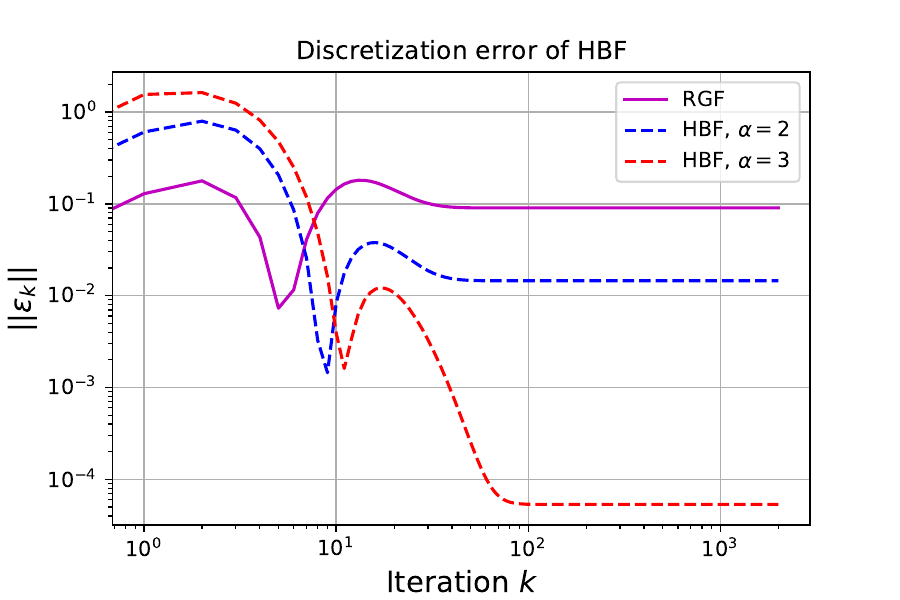}
    \label{fig:2d_traj_error}
    }
    \caption{(a).~Trajectories for learning dynamics of GD, HB, RGF, and HBF with discretization error $\mo{\eta^2}$ and $\mo{\eta^3}$ in a 2-d model. All dynamics start from the same point $(a_1 = 2.8, a_2=3.5)$. The convergence point of HB is denoted as a black star. The black dotted line denotes the set of all  global minima. (b).~Discretization errors for different continuous approximations of HB during training in (a). 
    }
    \label{fig:2d_model}
\end{figure*}
In this section, we show numerical experiments on a simple 2-d model to verify our theoretical claims, and we present numerical experiments details and more experiments for diagonal linear networks in Appendix~\ref{app:exp}. 

Our simple 2-d model has the formulation  $    f(x; a_1, a_2) = a_1 a_2 x,$ where $a_1, a_2 \in \mathbb{R}^{}$ are the model parameters and $x, y \in \mathbb{R}^{}$ is the training data. The loss function is $L = (f(x; a_1, a_2) - y)^2 / 2$. All parameters $a_1, a_2$ satisfying $a_1a_2x = y$ are global minima. To show that higher-order HBFs with discretization error $\mo{\eta^{\alpha}}$ are better approximations for HB, we visualize trajectories for different learning dynamics, i.e., GD, HB, RGF, HBF with $\alpha = 2$, and HBF with $\alpha = 3$, in Fig.~\ref{fig:2dtraj}. The trajectory of HBF with $\alpha=3$ is closer to that of HB than both RGF and HBF with $\alpha = 2$. Furthermore, Fig.~\ref{fig:2dtraj} also reveals that RGF is more similar to GD and it cannot capture the discrete learning dynamics of HB well. We also plot the norm of discretization errors $\|\varepsilon_k\|^2$ for these continuous approximations during training in Fig.~\ref{fig:2d_traj_error}, where HBF with $\alpha=3$ has the lowest discretization error after several steps. These results validate the reliability of HBF as a proxy of HB. 

\section{Conclusion}
\label{sec:conclusion}
In this paper, we have established a new continuous time model for the discrete HB method~(Eq.~\eqref{eq:dis_hb}), namely HBF, with an explicit discretization error that can be controlled to arbitrary order of the step size $\eta$. In particular, our approach constructs a relation, which is a functional integral equation, between discretization errors of adjacent iterates for any step and can be solved to arbitrary order of the step size. This is a different approach compared to prior works~\citep{nikola2020momentum,ghosh2023implicit}. Our results provide a reliable foundation for analyzing the momentum methods in the continuous time limit. We leverage our HBF to shed lights on a newly observed implicit regularization effect of the HB method: the preference for small directional smoothness compared to GD. In addition, as another interesting application of our HBF in deep learning, we study the implicit bias of HBF for the popular proxy model diagonal linear networks, and we reveal the difference between the implicit bias of HB and that of GD which cannot be captured by RGF. 

\paragraph{Limitation and future directions}The framework in this paper does not consider optimization methods with adaptive learning rate, e.g., Adam~\citep{kingma2014adam}. A generalization of our framework to such case would be an interesting future direction. In addition, our analysis can be generalized to the stochastic case by replacing $\nabla L$ with $\tilde{\nabla} L$, the approximate stochastic gradient, by following \citet{li2018stochasticmodifiedequationsdynamics,Latz_2021}. Finally, we only consider the simple diagonal linear networks in this paper, and future works can explore more complex deep learning models with our HBF to study the implicit bias of HB.
\begin{ack}
B.L. is funded by a studentship provided by the School of Electronics and Computer
Science, University of Southampton. The authors acknowledge
DataCanvas AlayaNeW for providing computational resources. The authors thank the insightful and constructive feedback from the anonymous reviewers.
\end{ack}

\bibliographystyle{plainnat}
\bibliography{neurips_2025}

\newpage
\appendix
\section*{Appendix}
\begin{itemize}
    \item In Appendix~\ref{app:main_results_approximation}, we provide proofs for Section~\ref{sec:main_results_approximation}.
    \item In Appendix~\ref{app:ib}, we present proofs of Section~\ref{sec:ib}.
    \item In Appendix~\ref{app:exp}, we show details of numerical experiments in Section~\ref{sec:imp_hb_alpha3} and \ref{sec:exp}, and present more related numerical experiments to support our theoretical claims.
\end{itemize}
\medskip
\section{Proofs for Section~\ref{sec:main_results_approximation}}
\label{app:main_results_approximation}
We prove Theorem~\ref{theorem:hbf} in Appendix~\ref{app:theorem_hbf} and give the details for the first several orders HBF in \ref{app:approx_hbf}. We first discuss the conditions that guarantee HBF as an effective approximation of HB.
\subsection{Conditions of the Effectiveness of HBF}
To make the HBF a valid continuous approximation of HB, there are two necessary conditions:
\begin{enumerate}
    \item It is crucial to control the ratio between $\eta $ and $1 - \mu$ to avoid $\eta \gg 1 - \mu$, which might lead the Taylor series to diverge. More interestingly, we conjecture that it is the magnitude of a special composite quantity
    \begin{equation}
        \psi := \frac{\eta}{(1 - \mu)^2} 
    \end{equation}
    that matters for the effectiveness of the HBF. This quantity spontaneously appears in both HBF with $\alpha = 2$ and $\alpha = 3$ but not in the RGF, i.e., for HBF with $\alpha = 2$ the counter term is proportional to $\psi$ while for HBF with $\alpha = 3$ a new counter term proportional to $\psi^2$ will appear. If $\psi$ is too large, then our results would no longer hold. 

    Hence, we need to fix the value of $\mu$ and treat only $\eta$ as the variable to denote the higher-order terms as $\mathcal{O}(\eta^{\alpha})$ while hide $\mu$ in the expansion. And it would be interesting for future works to study the case when both $\mu$ and $\eta$ are treated as variables such that higher-order terms are denoted as $\mathcal{O}(\psi^{\alpha})$ for $\alpha \geq 1$. In addition, in the regime of large $\mu$ and large $\eta$, the model might not be trained properly either: the update direction coming from the gradient and that from the momentum will jointly affect the training direction significantly, while these two directions can be very different due to the large value of  $\mu$ and $\eta$ hence cannot give a consistent updating direction.

 In addition, the dependence of HBF on the special composite quantity $\psi$ is consistent with the empirical observation in \citet{leclerc2020regimesdeepnetworktraining}, where the optimization curves for different momentum values can be recovered by a corresponding change in the learning rate. The dependence of HB on $\eta$ and $\mu$ at the same time further indicates the advantage of HBF with $\alpha > 1$ and that RGF, which only depends on $\mu$, is not sufficient to reflect the optimization properties of HB.
\item  Given $\alpha$, the continuous approximations include derivatives of $L$ up to the $\alpha$-th order, hence $L$ should at least be $\alpha$-times continuously differentiable and $||\nabla^{\alpha}L||$ should be upper bounded. 
\end{enumerate}

\subsection{Proof of Theorem~\ref{theorem:hbf}}
\label{app:theorem_hbf}
Given the HBF for $k \in [N]$
\begin{equation}
   t \in [t_k, t_{k + 1}): \quad \dot{\beta} = - G_k(\beta) - \eta \gamma_k (\beta)
\end{equation}
with unknown $G_k$ and $\gamma_k$, we expect that the counter term $\gamma_k$ could cancel higher-order discretization errors and $G_k$ should degenerate to rescaled gradient, i.e., $\nabla L / (1 - \mu)$.
Hence, $\gamma_k$ and $G_k$ should be designed in such a way that $\beta(t_k)$ is close to $\beta_k$ in the sense that the discretization error
\begin{equation}
    \varepsilon_k = \beta(t_k) - \beta_k 
\end{equation}
is small. Inspired by \citet{taiki2023eom}, we first present the outline of our three main steps for deriving their formulations below:
\begin{align*}
   & \boxed{\text{Step I}} \qquad\qquad\qquad \text{Unknown } G_k,\ \gamma_k  \overset{\text{determine}} {\longrightarrow} \text{Taylor expansion residual } I_{k}^{\pm}\\
   &\qquad \qquad\qquad\qquad\qquad\qquad\qquad\qquad \searrow  \qquad\ \swarrow\\
   & \boxed{\text{Step II}} \quad\qquad\qquad \begin{cases}
       & \text{Taylor Expansion of }\beta(t_{k\pm 1})~\eqref{eq:taylor_beta_k+1},~\eqref{eq:taylor_beta_k-1}\\
   & \qquad\qquad\qquad \quad  \downarrow \text{\small constructs}\\
   & \quad \text{Expression of } \varepsilon_{k + 1} - \varepsilon_k~\eqref{eq:eps_relation}\\
    & \qquad\qquad\qquad \quad  \downarrow \text{\small required to be } \mathcal{O}(\eta^{\alpha + 1}) \text{ \small for } \varepsilon_k = \mathcal{O}(\eta^{\alpha})~\text{\small (Lemma~\ref{lemma:a.1})}\\
     & \text{Equalities for }G_k, \ \gamma_k~\eqref{eq:func_int_eq} \\
   \end{cases}
   \\
    & \qquad \qquad\qquad \qquad \qquad \qquad \qquad \qquad \qquad \downarrow \text{\small solved by matching to each order of } \eta\\
     &\boxed{\text{Step III}}\qquad\qquad\qquad\qquad \quad \text{Solution of }  G_k  \text{ and } \gamma_k
\end{align*}
We now discuss the detailed proof following this outline.
\vspace{0.5cm}

\textit{Proof.} We start with Step I, which deals with how $I_k^{\pm}$ is determined by $G_k$ and $\gamma_k$. 
\paragraph{Step I}Recall that the discrete learning dynamics of HB is 
\begin{equation}
\label{eq:discrete_beta}
    \beta_{k + 1}  - \beta_k = -  \eta \nabla L (\beta_k) + \mu (\beta_k - \beta_{k - 1}),
\end{equation} 
where $\mu$ is the momentum factor and $k$ is the iteration number. Based on our discussion in Section~\ref{sec:main_results_approximation}, the continuous differential equation for HB is
\begin{align}
\label{eq:dot_beta}
    \dot{\beta} = - G_k(\beta) - \eta \gamma_k(\beta)
\end{align}
for $t \in [t_{k}, t_{k + 1} )$ where $t_k = k \eta$ and the solution is $\beta(t)$. For arbitrary unknown $\gamma_k$ in Eq.~\eqref{eq:dot_beta}, $I_k^{\pm}$ is determined, which is also unknown but depends on $\gamma_k$. Specifically, the first-order Taylor expansion with the remainder term 
in the integral form gives us
\begin{align}
\label{eq:taylor_beta_k+1}
    \beta(t_{k + 1}) - \beta(t_k) =  \eta \dot{\beta}(t_k^+) + \eta^2I_k^+ = -\eta G_k - \eta^2 \gamma_k + \eta^2I_k^+
\end{align}
where $t_k^+$ means we approximate $t_k$ from $t > t_k$, 
\[
    I_k^+ = \int_0^1 \ddot{\beta}\left( \eta(k+ s)\right) (1 - s) ds,
\]
and we use Eq.~\eqref{eq:dot_beta} in the second equality. Similarly, when approximating $t_k$ from $t < t_k$, we obtain that
\begin{equation}
\label{eq:taylor_beta_k-1}
    \beta(t_{k}) - \beta(t_{k - 1}) = - \eta G_{k - 1} - \eta^2 \gamma_{k - 1}  - \eta^2 I_k^-.
\end{equation} 
Now we construct the dependence of $I_k^{\pm}$ on $\gamma_k$ in the series form, as shown in Lemma.~\ref{lemma:Ik_pm}~(proof can be found in Appendix~\ref{app:Ik_pm}.).
\begin{lemma}
\label{lemma:Ik_pm}
Given the series form
\begin{equation}
\label{eq:series_gamma_app}
   \gamma_k = \sum_{\sigma = 0}^{\infty} \eta^{\sigma} \gamma_k^{(\sigma)}
\end{equation}
and the continuous time differential equation Eq.~\eqref{eq:dot_beta},  $I_k^{\pm}$ in Eq.~\eqref{eq:taylor_beta_k+1} and Eq.~\eqref{eq:taylor_beta_k-1} have the following series forms:
\begin{align}
    I_k^{+} = \sum_{p = 0}^{\infty} \eta^{p}(\mathscr{I}_k^{+})^{(p)} & := \sum_{p = 0}^{\infty} \sum_{q= 2}^{p + 2} \sum_{\sum_{j = 1}^{q} \sigma_j = p - q + 2}\frac{ (-1)^{q}}{q!}  \eta^{p} \ml_{\beta}^{(k, \sigma_1)} \cdots \ml_{\beta}^{(k, \sigma_{q - 1})} \gamma_{k}^{(\sigma_{q} - 1)},\\
    I_k^{-} = \sum_{p = 0}^{\infty} \eta^{p}(\mathscr{I}_k^{-})^{(p)} & := \sum_{p = 0}^{\infty} \sum_{q= 2}^{p + 2} \sum_{\sum_{j = 1}^{q} \sigma_j = p - q + 2}\frac{1}{q!}  \eta^{p} \ml_{\beta}^{(k - 1, \sigma_1)} \cdots \ml_{\beta}^{(k - 1, \sigma_{q - 1})} \gamma_{k - 1}^{(\sigma_{q} - 1)}.
\end{align}
\end{lemma}

\paragraph{Step II}Given the dependence of $I_k^{\pm}$ on $\gamma_k$, we are now able to write the Taylor expansion of $\beta(t_{k\pm 1})$ explicitly. This allows us to construct a relation between $\varepsilon_{k + 1}$ and $\varepsilon_{k}$ by subtracting Eq.~\eqref{eq:discrete_beta} from both sides of Eq.~\eqref{eq:taylor_beta_k+1}:
\begin{align}
    \varepsilon_{k + 1} - \varepsilon_{k} & = -\eta \left[ G_k (\beta(t_k)) - \nabla L(\beta_k)\right]  - \eta^2 \gamma_k + \eta^2 I_k^+ - \mu(\beta_k - \beta_{k - 1}).
\end{align}
Note that 
\begin{align}
    \beta_k - \beta_{k - 1}  & = \beta(t_k) - \beta(t_{k - 1}) - (\varepsilon_k - \varepsilon_{k - 1}) \nn\\
    & = - \eta G_{k - 1} - \eta^2 \gamma_{k - 1}  - \eta^2 I_k^- - (\varepsilon_k - \varepsilon_{k - 1}),\nn
\end{align}
we obtain the expression of $\varepsilon_{k + 1} - \varepsilon_k$ as in the Step II of our outline:
\begin{align}
\label{eq:eps_relation}
    \varepsilon_{k + 1} - \varepsilon_{k} & = \mu (\varepsilon_k - \varepsilon_{k - 1}) - \eta\left[  G_k - \mu G_{k - 1} - \nabla L(\beta(t_k) - \varepsilon_k) \right]  \nn\\
    & \qquad \qquad + \eta^2\left[ I_k^+ + \mu I_k^-  - \gamma_k + \mu \gamma_{k - 1}\right].
\end{align}
Eq.~\eqref{eq:eps_relation} builds the connection between the discretization error $\varepsilon_k$ and the counter term $\gamma_k$ in the continuous time differential equation Eq.~\eqref{eq:beta_dynamics}. We can now construct the equalities for $G_k$ and $\gamma_k$ under the constraint of low discretization error by following the lemma below~(proof can be found in Appendix~\ref{app:proof_lemma_a.1}).
\begin{lemma}
\label{lemma:a.1}
    For the continuous differential equation Eq.~\eqref{eq:dot_beta} and the discrete sequence given by Eq.~\eqref{eq:discrete_beta}, if 
    \begin{equation}
    \label{eq:condition_1}
        G_k (\beta(t_k)) = \mu G_{k - 1} (\beta(t_k))+ \nabla L(\beta(t_k))
    \end{equation}
    with $G_{-1} = 0$  and 
    \begin{equation}
        I_k^+ + \mu I_k^-  - \gamma_k + \mu \gamma_{k - 1} = \mo{\eta^{\alpha - 1 }}, 
        \label{eq:gamma_temp}
    \end{equation}
    as in Eq.~\eqref{eq:solution_funct_integral},
    then we have
    \[
        \varepsilon_{k} - \varepsilon_{k - 1} = \mo{\eta^{\alpha + 1}}
    \]
    and, as a result, 
    \[
         \varepsilon_k = \mo{\eta^{\alpha}}.
    \]
\end{lemma}
As shown in Lemma.~\ref{lemma:a.1}, to ensure that the leading order of the L.H.S of Eq.~\eqref{eq:eps_relation} is to the order of $\alpha > 1$, we only need to require $\varepsilon_{k + 1} - \varepsilon_{k} = \mo{\eta^{\alpha}}$ which can be guaranteed by Eq.~\eqref{eq:condition_1} and the functional integral equation Eq.~\eqref{eq:gamma_temp}. As Eq.~\eqref{eq:condition_1} can be easily solved by induction, we only need to solve Eq.~\eqref{eq:gamma_temp}. To achieve this, we build a stronger functional equation below:
\begin{equation}
\label{eq:func_int_eq}
    I_k^+ + \mu I_k^-  =  \gamma_k - \mu \gamma_{k - 1},
\end{equation}
which is the final equation that we aim to solve to derive $\gamma_k$.

\paragraph{Step III} The Step III of our outline is solving the functional integral equation Eq.~\eqref{eq:func_int_eq}. Our core idea is simple: as the series form of $I_k^{\pm}$ has already derived in Lemma.~\ref{lemma:Ik_pm}, we make Eq.~\eqref{eq:func_int_eq} satisfied by matching both sides of it for each order of $\eta$. In particular, given the series forms of $I_k^{\pm}$ in Lemma~\ref{lemma:Ik_pm} and that of $\gamma_k$~(Eq.~\eqref{eq:series_gamma_app}), we require
\begin{equation}
    \begin{aligned}
    \forall p \in \mathbb{N}:\ \eta^{p}(\gamma_k^{(p)} - \mu\gamma_{k - 1}^{(p)}) = \eta^{p}\left((\mathscr{I}_k^{+})^{(p)} + \mu (\mathscr{I}_k^{-})^{(p)}\right),
    \end{aligned}
\end{equation}
which, let 
\begin{equation}
    \chi_k^{(\sigma)} =(\mathscr{I}_k^{+})^{(\sigma)} + \mu (\mathscr{I}_k^{-})^{(\sigma)},
\end{equation}
gives us the recursive relation of $\gamma_k^{(\sigma)}$ for $\sigma \in \mathbb{N}$
\begin{equation}
\label{eq:gamma_k_chi_k_relation}
\begin{aligned}
    \gamma_k^{(\sigma)} = \mu\gamma_{k - 1}^{(\sigma)} + \chi_k^{(\sigma)}
\end{aligned}
\end{equation}
because $\chi_k^{(\sigma)}$ only depends on $\gamma_k^{(\sigma')}$ with $\sigma' < \sigma$ according to Lemma~\ref{lemma:Ik_pm}. Now given $\alpha \in \mathbb{Z}^{+}$, we can truncate $\gamma_k$ to the order $\alpha - 2$ such that 
\begin{equation}
    \gamma_k = \sum_{\sigma = 0}^{\alpha - 2} \eta^{\sigma} \gamma_k^{(\sigma)},
\end{equation}
then the functional integral equation is solved to the $(\alpha - 2)$-th order of the step size $\eta$:
\begin{equation}
\label{eq:solution_funct_integral}
\begin{aligned}
    I_k^+ + \mu I_k^-  - \gamma_k + \mu \gamma_{k - 1} & =\sum_{\sigma = 0}^{\infty} \eta^{\sigma}\chi_k^{(\sigma)}  - \sum_{\sigma = 0}^{\alpha - 2}\eta^{\sigma} \left(\gamma_k^{(p)} - \mu\gamma_{k - 1}^{(p)}\right)\\
    & = \sum_{\sigma = \alpha- 1}^{\infty}\eta^{\sigma}\chi_k^{(\sigma)} = \mo{\eta^{\alpha - 1}},
\end{aligned}
\end{equation}
which is exactly the condition in Lemma.~\ref{lemma:a.1}.
Therefore, by constructing $\gamma_k$ following Eq.~\eqref{eq:gamma_k_chi_k_relation} and truncating $\gamma_k$ to preserver its first $\alpha - 2$ terms, we can prove that the discretization error of the continuous time differential equation Eq.~\eqref{eq:dot_beta} is to the order $\mo{\eta^{\alpha}}$. \hfill\qedsymbol

\subsubsection{Proof for Lemma~\ref{lemma:Ik_pm}}
\label{app:Ik_pm}
\begin{proof}
We first rewrite $I_{k}^{\pm}$ as follows
:
\begin{align}
\label{eq:I_pm_with_dot_beta}
    I_k^{\pm} & = \frac{1}{\eta^2}\int_{k\eta}^{k\eta \pm \eta} \ddot{\beta}(\tau)(k\eta \pm \eta - \tau) d\tau\nn\\
    & \overset{\tau' \leftarrow\tau - k\eta}{=} \frac{1}{\eta^2} \int_0^{\pm\eta} \left[ \sum_{n = 0}^{\infty} \frac{1}{n!}\frac{d^{n}}{d t^{n}} \ddot{\beta} (k\eta)\tau'^{n}\right]^{\pm}(\pm\eta - \tau') d\tau' \nn\\
    & = \sum_{n = 0}^{\infty} \frac{{(\pm\eta)}^n}{(n + 2)!}\frac{d^{n}}{d t^{n}} \ddot{\beta} (t_k^{\pm})
\end{align}

where we use $\int_0^{\eta} \tau'^n(\eta - \tau') d\tau' = \frac{\eta^{n + 2}}{n + 1} - \frac{\eta^{n + 2}}{n + 2} = \frac{\eta^{n + 2}}{(n + 1)(n + 2)}$ in the last equality. To continue, we need the expression of $d^n \beta / dt^n$ and we start with $t \to t_k^+$:
\begin{align}
\label{eq:dbeta_dt}
    \frac{d^n}{dt^n} \beta(t_k^+) & = \frac{d}{dt}\left( \frac{d^{n - 1}}{dt^n} \beta(t_k^+)\right)\nn\\
    & =  \dot{\beta}(t_k^+) \cdot \nabla \left( \frac{d^{n - 1}}{dt^n} \beta(t_k^+)\right)\nn\\
    & = - (G_{k} + \eta \gamma_k) \cdot \nabla \left( \frac{d^{n - 1}}{dt^n} \beta(t_k^+)\right)\nn\\
    & = (-1)^{n}  (\ml^{(k)}_{\beta})^{n - 1}\left( G_{k} + \eta \gamma_k\right)
\end{align}
where we denote the differential operator 
$\ml^{(k)}_{\beta} = (G_{k} + \eta \gamma_k)\cdot \nabla $ and use Eq.~\eqref{eq:dot_beta} in the third equality. Now suppose that $\gamma_k$ can be written as a series
\[
    \gamma_k = \sum_{\sigma = 0}^{\infty} \eta^{\sigma} \gamma_k^{(\sigma)}, \ \gamma_k^{(-1)} = G_k,
\]
then Eq.~\eqref{eq:dbeta_dt} becomes
\begin{align}
    \frac{d^n}{dt^n} \beta(t) & = (-1)^{n}  \left( \sum_{\sigma_1 = 0}^{\infty} \eta^{\sigma_1}\gamma_k^{(\sigma_1 - 1)} \cdot \nabla\right)\cdots\nn\\
    & \qquad \qquad \cdots \left( \sum_{\sigma_{n - 1} = 0}^{\infty} \eta^{\sigma_{n - 1}}\gamma_k^{(\sigma_{n - 1} - 1)} \cdot \nabla\right)\left( \sum_{\sigma_{n - 1} = 0}^{\infty} \eta^{\sigma_{n}}\gamma_k^{(\sigma_{n} - 1)} \right)\nn\\
    & = (-1)^{n} \sum_{\sigma_1, \dots, \sigma_n = 0}^{\infty} \eta^{\sum_{j = 1}^{n}\sigma_j } \ml_{\beta}^{(k, \sigma_1)} \cdots \ml_{\beta}^{(k, \sigma_{n - 1})} \gamma_{k}^{(\sigma_n - 1)}
\end{align}
Combined with Eq.~\eqref{eq:I_pm_with_dot_beta}, we obtain the form of $I_k^+$ for $t \in (t_k, t_{k + 1})$ as
\begin{align}
    I_k^{+} & = \sum_{n = 0}^{\infty} \frac{\eta^n}{(n + 2)!} \frac{d^{n + 2}}{dt^{n + 2}} \beta(t) \nn\\
    & = \sum_{n = 0}^{\infty} \sum_{\sigma_1, \dots, \sigma_{n + 2}= 0}^{\infty}\frac{ (-1)^{n + 2}}{(n + 2)!}  \eta^{n + \sum_{j = 1}^{n + 2}\sigma_j } \ml_{\beta}^{(k, \sigma_1)} \cdots \ml_{\beta}^{(k, \sigma_{n + 1})} \gamma_{k}^{(\sigma_{n + 2} - 1)}\nn\\
    & = \sum_{n = 0}^{\infty} \sum_{m = 0}^{\infty} \sum_{\sum_{j = 1}^{n + 2} \sigma_j = m}\frac{ (-1)^{n + 2}}{(n + 2)!}  \eta^{n + m} \ml_{\beta}^{(k, \sigma_1)} \cdots \ml_{\beta}^{(k, \sigma_{n + 1})} \gamma_{k}^{(\sigma_{n + 2} - 1)}\nn\\
    & = \sum_{p = 0}^{\infty} \sum_{q= 2}^{p + 2} \sum_{\sum_{j = 1}^{q} \sigma_j = p - q + 2}\frac{ (-1)^{q}}{q!}  \eta^{p} \ml_{\beta}^{(k, \sigma_1)} \cdots \ml_{\beta}^{(k, \sigma_{q - 1})} \gamma_{k}^{(\sigma_{q} - 1)}
\end{align}
where we let $p \leftarrow n + \sum_{j = 1}^{n + 2} \sigma_j, q \leftarrow n + 2,$ in the last equality. Similarly, when $t \to t_k^-$, we have
\begin{equation}
    \frac{d^n}{dt^n} \beta(t_k^-) = (-1)^{n}  (\ml^{(k - 1)}_{\beta})^{n - 1}\left( G_{k - 1} + \eta \gamma_{k - 1}\right)\nn
\end{equation}
which implies that
\begin{align}
    I_k^{-} = \sum_{p = 0}^{\infty} \sum_{q= 2}^{p + 2} \sum_{\sum_{j = 1}^{q} \sigma_j = p - q + 2}\frac{ 1 }{q!}  \eta^{p} \ml_{\beta}^{(k - 1, \sigma_1)} \cdots \ml_{\beta}^{(k - 1, \sigma_{q - 1})} \gamma_{k - 1}^{(\sigma_{q} - 1)}.
\end{align}
\end{proof}

\subsubsection{Proof for Lemma~\ref{lemma:a.1}}
\label{app:proof_lemma_a.1}
\begin{proof}
We first present several useful relations. As Eq.~\eqref{eq:solution_funct_integral} is established by solving the functional equation for any iteration count $k$, we can write the relation between $\varepsilon_{k + 1}$ and $\varepsilon_{k}$~Eq.~\eqref{eq:eps_relation} as
\begin{equation}
\label{eq:a.5}
   \varepsilon_{k + 1} - \varepsilon_{k} = \mu(\varepsilon_{k} - \varepsilon_{k - 1}) - \eta \left[\nabla L\left(\beta(t_k)\right) - \nabla L\left(\beta(t_k) - \varepsilon_{k}\right) \right] + \mo{\eta^{\alpha + 1}}.
\end{equation}
If $\varepsilon_{k} = \mathcal{O}(\eta^{\alpha})$, then Eq.~\eqref{eq:a.5} implies 
\begin{equation}
\begin{aligned}
    \|\varepsilon_{k + 1} - \varepsilon_{k}\|
    & \leq \mu \|\varepsilon_{k} - \varepsilon_{k - 1}\| + \eta \|\nabla L\left(\beta(t_k)\right) - \nabla L\left(\beta(t_k) - \varepsilon_{k}\right)\| + c_1 \eta^{\alpha + 1}\\
    & \leq \mu \|\varepsilon_{k} - \varepsilon_{k - 1}\| + \eta\lambda \|\varepsilon_{k}\| + c_1 \eta^{\alpha + 1}
\end{aligned}
\end{equation}
for some constant $c_1$ where we use $G_k - \mu G_{k - 1} = \nabla L$ and let $\lambda = \max_{\beta} \|\nabla^2 L(\beta)\|$.

Denoting
    $$
    \forall k: c_2(k) = \frac{c_1}{\lambda} e^{\frac{2\lambda \eta}{1 - \mu}k },\  c_3(k) = \frac{2c_1}{1 -\mu} e^{\frac{2\lambda \eta}{1 - \mu}k },
    $$
    we now prove by induction.  
    \begin{enumerate}
        \item For the first step ($k = 0$), by definition we have 
        \[
            \varepsilon_0 = 0 \leq c_2(0)\eta^{\alpha}.
        \]
        Note that $G_{-1}=0$ and $\gamma_{-1} = 0$ by definition, then we have
        $$
            \|\varepsilon_1 - \varepsilon_0\|  \leq c_1\eta^{\alpha + 1} \leq \frac{2c_1}{1 -\mu} = c_3(0)\eta^{\alpha + 1}
        $$
        since $\mu < 1$.
        \item Suppose that for the $k$-th step the following relations hold:
    \begin{equation}
        \begin{aligned}
            \|\varepsilon_k\| &\leq c_2(k)\eta^{\alpha}, \\
             \|\varepsilon_{k + 1} - \varepsilon_k\|  & \leq  c_3(k)\eta^{\alpha + 1} .
        \end{aligned}
    \end{equation}
    Then for the $(k + 1)$-th step, we have
    \begin{align*}
        \|\varepsilon_{k + 1}\| & = \|\varepsilon_{k + 1} - \varepsilon_k + \varepsilon_k \| \\
        & \leq \|\varepsilon_{k + 1} - \varepsilon_k \| + \|\varepsilon_k \|\\
        & \leq c_2(k)\left(1 + \eta\frac{c_3(k)}{c_2(k)}\right)\eta^{\alpha}\\
        & \leq c_2(k) e^{\frac{2\lambda \eta}{1 - \mu}} \eta^{\alpha} \\
        & = c_2(k + 1) \eta^{\alpha}
    \end{align*}
    where the last inequality is because $e^x > 1 + x $ for $x>0$.
    Similarly,
    \begin{align*}
        \|\varepsilon_{k + 2} - \varepsilon_{k + 1}\| 
        & \leq \mu \|\varepsilon_{k + 1} - \varepsilon_{k}\| + \eta\lambda \|\varepsilon_{k + 1}\| + c_1 \eta^{\alpha + 1}\\
        & \leq \left[\mu  c_3(k) + \lambda c_2(k + 1) + c_1\right] \eta^{\alpha + 1}\\
        & = \left[\mu e^{-\frac{2\lambda\eta}{1 - \mu}} +\frac{1 - \mu}{2} + \frac{1 - \mu}{2}e^{- \frac{2\lambda\eta}{1 -\mu}(k + 1)}\right] c_3(k + 1)\eta^{\alpha + 1}\\
        & \leq \left[\mu +\frac{1 - \mu}{2} + \frac{1 - \mu}{2}\right] c_3(k + 1)\eta^{\alpha + 1} \\
        & = c_3(k + 1)\eta^{\alpha + 1}.
    \end{align*}
    \end{enumerate}
\end{proof}

\subsection{$\mo{\eta^{\alpha}}$-close HBF for a specific $\alpha$}
\label{app:approx_hbf}
In this section, we derive the form of $\mo{\alpha}$-close HBF for given a specific $\alpha$. There are basically three steps to find a HBF that is $\mo{\eta^{\alpha}}$-close to HB: 
\begin{enumerate}
    \item truncate $\gamma_k$ to the desired order $\alpha$, i.e, $\gamma_k = \sum_{\sigma = 0}^{\alpha - 2} \gamma_k^{(\sigma)}$;
    \item from the smallest $\sigma$, find all $\chi_j^{(\sigma)}$ with $j \leq k$ by finding the corresponding $\mathcal{S}_{m, \sigma}$ with $m = \{2, \dots, \sigma + 2\}$ for each $\sigma$;
    \item derive the expression of $\gamma_k^{(\sigma)}$ for all $\sigma \leq \alpha - 2$ in a recursive manner using the relation $\gamma_k^{(\sigma)} = \sum_{j = 0}^{k} \mu^{k - j } \chi_{j}^{(\sigma)}$.
\end{enumerate}
In the following, we give the cases for $\alpha = 2$ and 3 as examples. With this approach, one can in fact find HBF with arbitrary order of closeness to HB.
\subsubsection{$\alpha = 2$.} 
According to Theorem~\ref{theorem:hbf}, the series of $\gamma_k$ is truncated to the first term, i.e., $\gamma_k = \eta^{0}\gamma_k^{(0)}$, where $\gamma_k = \sum_{j = 0}^{k}\mu^{k - j}\chi_j^{(0)}$. Thus the first step is to find $\chi_j^{(0)}$, which can be given by first identifying the set $\mathcal{S}$:
\begin{align}
    \mathcal{S}_{m = 2, \sigma = 0} = \{(\sigma_1=0, \sigma_2 =0) \},
\end{align}
therefore there is only one term in $\chi_j^{(0)}$:
\begin{align}
    \chi_j^{(0)} & = \frac{1}{2}\left[ \mathbf{L}_{\beta}^{j, 0}\gamma_j^{(-1)} + \mu \mathbf{L}_{\beta}^{j - 1, 0}\gamma_{j - 1}^{(-1)}\right].\nn
\end{align}
Recall that
\begin{equation}
\label{eq:gamma_-1}
    \gamma_j^{(-1)} = G_j = \frac{1 - \mu^{j + 1}}{1 - \mu} \nabla L,
\end{equation}
which, according to our definition in Theorem~\ref{theorem:hbf}, leads to
\begin{equation}
    \mathbf{L}_{\beta}^{j, 0} = \gamma_j^{(-1)} \cdot \nabla=  G_j\cdot \nabla,\nn
\end{equation}
we obtain that
\begin{align}
\label{eq:chi0_temp}
     \chi_j^{(0)} & = \frac{1}{2}\left[G_j \cdot \nabla G_j + \mu G_{j - 1} \cdot  \nabla G_{j - 1} \right]\nn\\
     & =  \frac{1}{2(1 - \mu)^2}\left[(1 - \mu^{j + 1})^2 + \mu(1 - \mu^{j})^2 \right] \nabla L\cdot  \nabla^2 L.
\end{align}
Thus
\begin{align}
\label{eq:gamma_0}
    \gamma_k^{(0)} & = \frac{1}{2}\sum_{j =0}^{k}\mu^{k - j} \left[G_j \cdot \nabla G_j + \mu G_{j - 1}\cdot  \nabla G_{j - 1} \right]\nn\\
    & = \frac{\nabla L \cdot \nabla^2 L}{2 (1 - \mu)^2}\sum_{j = 0}^{k} \mu^{k - j}\left[(1 - \mu^{j + 1})^2 + \mu(1 - \mu^{j})^2 \right]\nn\\
    & = \frac{\nabla L \cdot \nabla^2 L}{2 (1 - \mu)^2}\sum_{j = 0}^{k} \left[(1 + \mu)\mu^{k -j} + \mu^{k + 1}(\mu^j(1 + \mu) - 4)\right].
\end{align}
When $k$ is larege, the above expression can be simplified as
\[
    \gamma_k^{(0)} \approx \frac{(1 + \mu) \sum_{j = 0}^{k} \mu^{j}}{2 (1 - \mu)^2}\nabla L \cdot 
 \nabla^2 L \approx \frac{1 + \mu}{2 (1 - \mu)^3} \nabla L \cdot  \nabla^2 L.
\]
\subsubsection{$\alpha = 3$.}
Similarly, in this case we first truncate the series of $\gamma_k$ to the desired order, i.e., $\gamma_k = \gamma_k^{(0)} + \eta \gamma_k^{(1)}$ where we have already obtained $\gamma_k^{(0)}$ in the last section, thus we only need to find $\gamma_k^{(1)}$ and $\chi_k^{(1)}$, which can be done by first finding the set $\mathcal{S}_{m=2, \sigma= 1}$ and $\mathcal{S}_{m=3, \sigma=1}$:
    \begin{align}
        \mathcal{S}_{2, 1} & = \{(\sigma_1=1, \sigma_2 =0), (\sigma_1=0, \sigma_2= 1)\},\nn\\
        \mathcal{S}_{3, 1} & = \{(\sigma_1=0, \sigma_2=0, \sigma_3 =0)\}.\nn
    \end{align}
Therefore there are three terms of $\chi_j^{(1)}$:
\begin{align}
\label{eq:chi1_temp}
    \chi_j^{(1)} & = \frac{1}{2}\left[ \mathbf{L}_{\beta}^{j, 1}\gamma_j^{(-1)} + \mu \mathbf{L}_{\beta}^{j - 1, 1}\gamma_{j - 1}^{(-1)}\right] +  \frac{1}{2}\left[ \mathbf{L}_{\beta}^{j, 0}\gamma_j^{(0)} + \mu \mathbf{L}_{\beta}^{j - 1, 0}\gamma_{j - 1}^{(0)}\right] \nn\\
    & \quad \quad - \frac{1}{6}\left[ \mathbf{L}_{\beta}^{j, 0}\mathbf{L}_{\beta}^{j, 0}\gamma_j^{(-1)} - \mu \mathbf{L}_{\beta}^{j - 1, 0}\mathbf{L}_{\beta}^{j-1, 0}\gamma_{j - 1}^{(-1)} \right].
\end{align}
Recall that $\gamma_j^{(-1)} = G_j$, $\mathbf{L}_{\beta}^{j, 0} = G_j \cdot \nabla$, and $\mathbf{L}_{\beta}^{j, 1} = \gamma_j^{(0)} \cdot 
 \nabla$, the first line of Eq.~\eqref{eq:chi1_temp} is 
\begin{align}
    \label{eq:chi1_term1}
    \frac{1}{2}\left[ \gamma_j^{(0)}\cdot \nabla G_j + \mu \gamma_{j - 1}^{(0)}\cdot \nabla G_{j - 1} +  G_j \cdot \nabla   \gamma_j^{(0)} + \mu  G_{j - 1} \cdot \nabla \gamma_{j - 1}^{(0)} \right]
\end{align}
while the second line is 
\begin{align}
    \label{eq:chi1_term2}
    -\frac{1}{6} \left[ G_j\cdot \nabla\left( G_j \cdot \nabla G_j\right) -  \mu G_{j - 1}\cdot \nabla\left( G_{j - 1}\cdot  \nabla G_{j - 1}\right)\right].
\end{align}
To simplify these terms, we can either replace all $\gamma_j^{(0)}$ with the expression in Eq.~\eqref{eq:gamma_0} and write $G_j$ explicitly, or notice the recursive relation between $G_j $ and $G_{ j - 1} $ in Theorem~\ref{theorem:hbf}, i.e. ,$ G_j = \mu G_{j - 1} + \nabla L $, then Eq.~\eqref{eq:chi1_term1} becomes
\begin{align*}
    \frac{1}{2}\left[  \gamma_j^{(0)}\cdot \nabla^2 L +  \nabla L \cdot \nabla \gamma_j^{(0)} \right] + \frac{\mu}{2}\left[ \left( \gamma_j^{(0)} + \gamma_{j - 
    1}^{(0)}\right)\cdot \nabla G_{j - 1}  + 
    G_{j - 1} \cdot \nabla \left(\gamma_j^{(0)} + \gamma_{j - 
 1}^{(0)}\right)\right]
\end{align*}
and Eq.~\eqref{eq:chi1_term2} is now
\begin{align}
    -\frac{1}{6} \nabla L \cdot  \nabla\left( G_j \cdot \nabla G_j\right) - \frac{\mu}{6}G_{ j - 1} \cdot  \nabla\left( G_j \cdot \nabla G_j - G_{j-1} \cdot \nabla G_{j - 1}\right).
\end{align}
Summing over these terms gives us $\chi_j^{(1)}$:
\begin{align}
    \chi_j^{(1)} & = \Psi_{j}^{(1)}  + \mu \Theta_j^{(1)}
\end{align}
where
\begin{align}
    \Psi_j^{(1)} & = \frac{1}{2}\left(  \gamma_j^{(0)}\cdot \nabla^2 L +  \nabla L \cdot \nabla \gamma_j^{(0)} \right) -\frac{1}{6} \nabla L \cdot  \nabla\left( G_j \cdot \nabla G_j\right) \nn\\
    \Theta_j^{(1)} & = \frac{1}{2}\left[ \left( \gamma_j^{(0)} + \gamma_{j - 
    1}^{(0)}\right)\cdot \nabla G_{j - 1}  + 
    G_{j - 1} \cdot \nabla \left(\gamma_j^{(0)} + \gamma_{j -1}^{(0)}\right)\right]  \nn\\
    & \quad \quad - \frac{1}{6}G_{ j - 1} \cdot  \nabla\left( G_j \cdot \nabla G_j - G_{j-1} \cdot \nabla G_{j - 1}\right)\nn.
\end{align}
We can now find $\gamma_k^{(1)}$ through its definition:
\begin{align}
    \gamma_k^{(1)} & = \sum_{j = 0}^{k} \mu^{k - j} \chi_j^{(1)} = \sum_{j = 0}^{k} \mu^{k - j} \Psi_j^{(1)} + \mu\sum_{j = 0}^{k} \mu^{k - j} \Theta_j^{(1)}.
\end{align}
In the following, we derive the form of $\gamma_k^{(1)}$ When $k$ is large. According to Eq.~\eqref{eq:gamma_0}, we have
\begin{align}
\label{eq:mu_gamma_j_temp}
    \mu^{k - j}\gamma_j^{(0)} & = \mu^{k - j}\frac{\nabla L \cdot \nabla^2 L}{2 (1 - \mu)^2}\sum_{i = 0}^{j} \left[(1 + \mu)\mu^{j -i} + \mu^{j + 1}(\mu^i(1 + \mu) - 4)\right]\nn\\
    & = \mu^{k - j}\frac{\nabla L \cdot \nabla^2 L}{2 (1 - \mu)^2}\left[\frac{(1 + \mu)(1 - \mu^{j + 1})}{1 - \mu} + \frac{\mu^{j + 1}(1 + \mu)(1 - \mu^{j + 1})}{1 - \mu} - 4(j+1)\mu^{j + 1}\right]\nn\\
    & = \frac{\nabla L \cdot \nabla^2 L}{2 (1 - \mu)^2}\left[\frac{(1 + \mu)(\mu^{k - j} - \mu^{k + 1})}{1 - \mu} + \frac{\mu^{k + 1}(1 + \mu)(1 - \mu^{j + 1})}{1 - \mu} - 4(j+1)\mu^{k + 1}\right]\nn\\
    & = \frac{\nabla L \cdot \nabla^2 L}{2 (1 - \mu)^2}\left[\frac{(1 + \mu)\mu^{k - j}}{1 - \mu} - \frac{\mu^{k + j + 1}(1 + \mu)}{1 - \mu} - 4(j+1)\mu^{k + 1}\right]\nn\\
    & \approx \mu^{k - j} \frac{(1 + \mu)}{2 (1 - \mu)^3} \nabla L \cdot \nabla^2 L
\end{align}
and, according to Eq.~\eqref{eq:gamma_-1},
\begin{align}
\label{eq:muG}
    \mu^{k - j}G_j \cdot \nabla G_j & =  \frac{\mu^{k - j} (1 - 2\mu^{j + 1} + \mu^{2(j + 1)})}{(1 - \mu)^2} \nabla L \cdot \nabla^2 L  \approx \frac{\mu^{k - j}}{(1 - \mu)^2} \nabla L \cdot \nabla^2 L.
\end{align}
Combining Eq.~\eqref{eq:mu_gamma_j_temp} and \eqref{eq:muG} gives the form of $\mu^{k - j}\Psi_j^{(1)}$ when $k$ is large:
\begin{align*}
    \mu^{k - j}\Psi_j^{(1)}  & \approx \frac{\mu^{k - j} (1 + \mu)}{4(1 - \mu)^3}\left[ (\nabla L \cdot \nabla^2 L) \cdot \nabla^2 L + \nabla L \cdot \nabla \left( \nabla L \cdot \nabla^2 L \right)\right]\nn\\
    & \quad \quad - \frac{\mu^{k - j}}{6(1 - \mu)^2}\nabla L \cdot \nabla \left( \nabla L \cdot \nabla^2 L\right)
\end{align*}
which immediately leads to
\begin{align}
\label{eq:mu_psi_sum}
    &\ \sum_{j = 0}^{k}\mu^{k - j}\Psi_j^{(1)}\nn\\
    \approx &\ \frac{(1 + \mu)}{4(1 - \mu)^4}\left[ (\nabla L \cdot \nabla^2 L) \cdot \nabla^2 L + \nabla L \cdot \nabla \left( \nabla L \cdot \nabla^2 L \right)\right]   - \frac{\nabla L \cdot \nabla \left( \nabla L \cdot \nabla^2 L\right)}{6(1 - \mu)^3}\nn\\
    =&\frac{1}{4(1 - \mu)^4}\left[(1 + \mu)(\nabla L \cdot \nabla^2 L) \cdot \nabla^2 L + \frac{(1 + 5\mu)}{3}\nabla L \cdot \nabla \left( \nabla L \cdot \nabla^2 L \right)\right].
    \end{align}
The left part is now deriving the form of $\mu^{k - j}\Theta_{j}^{(1)}$, which can be done by first finding
\begin{align}
    \mu^{k - j}\gamma_j^{(0)} \cdot \nabla G_{j - 1} & \approx \mu^{k - j} \frac{(1 + \mu)}{2 (1 - \mu)^3} (\nabla L \cdot \nabla^2 L) \cdot \nabla G_{j - 1}\nn\\
    & \approx \mu^{k - j} \frac{(1 + \mu)}{2 (1 - \mu)^4} (\nabla L \cdot \nabla^2 L) \cdot \nabla^2 L \approx \mu^{k - j}\gamma_{j - 1}^{(0)} \cdot \nabla G_{j - 1}
\end{align}
and
\begin{align}
    \mu^{ k - j}G_{j - 1} \cdot \nabla \left( G_j \cdot \nabla G_j - G_{j - 1} \cdot \nabla G_{j - 1}\right) & \approx \frac{2 \mu^{k - j}}{(1 - \mu)^3} \nabla L \cdot \nabla \left( \nabla L \cdot \nabla^2 L \right),
\end{align}
thus
\begin{align}
    \sum_{j =0}^{k}\mu^{k -j }\Theta_{j}^{(1)} & \approx  \frac{(1 + \mu)}{2 (1 - \mu)^5} \left[ (\nabla L \cdot \nabla^2 L) \cdot \nabla^2 L + \nabla L \cdot \nabla\left( \nabla L \cdot \nabla ^2 L\right) \right].
\end{align}
Combing this equation with Eq.~\eqref{eq:mu_psi_sum}, we can now conclude the form of $\gamma_k^{(1)}$ when $k$ is large:
\begin{align}
    \gamma_k^{(1)} & = \sum_{j = 0}^{k} \mu^{k - j}\left(\Psi_{j}^{(1)} + \mu \Theta_j^{(1)} \right) \nn\\
    &\frac{1}{4(1 - \mu)^4}\left[(1 + \mu)(\nabla L \cdot \nabla^2 L) \cdot \nabla^2 L + \frac{(1 + 5\mu)}{3}\nabla L \cdot \nabla \left( \nabla L \cdot \nabla^2 L \right)\right]\nn\\
    & \quad \quad + \frac{\mu(1 + \mu)}{2 (1 - \mu)^5} \left[ (\nabla L \cdot \nabla^2 L) \cdot \nabla^2 L + \nabla L \cdot \nabla\left( \nabla L \cdot \nabla ^2 L\right) \right]\nn\\ 
    &=\frac{(1+\mu)^2}{4(1-\mu)^5}\left[(\nabla L \cdot \nabla^2 L) \cdot \nabla^2 L+\frac{1+10\mu+\mu^2}{3(1+\mu)^2}\nabla L \cdot \nabla \left( \nabla L \cdot \nabla^2 L\right)\right]
\end{align}
Note that when $\mu = 0$ we recover the result of GD, i.e., $\gamma_k^{(1)} =   \frac{(\nabla L \cdot \nabla^2 L) \cdot \nabla^2 L}{4}  + \frac{\nabla L \cdot \nabla \left( \nabla L \cdot \nabla^2 L\right)}{12}.$ 

\section{Proofs for Section~\ref{sec:ib}}
\label{app:ib}
Given data $(x_i, y_i)$, the architecture of 2-layer diagonal linear network is
\begin{align}
    f(x_i; \mw) = x_i^T(\mw_+\odot\mw_+ - \mw_-\odot\mw_-) = \sum_{j = 1}^{d} x_{i;j}\left( \mw_{+; j}^2 - \mw_{-; j}^2\right)
\end{align}
and the empirical loss function is
\begin{align*}
    L(\mw) = \frac{1}{2n} \sum_{i = 1}^{n} (f(x_i; \mw) - y_i)^2.
\end{align*}
We let $r = (r_1, \dots, r_n)^{T} \in \R^{n}$ be the residual where $\forall i: r_i = f(x_i; \mw) - y_i$. According to Theorem~\ref{theorem:hbf}, the HBF learning dynamics of model parameters  $\mw_+$ and $\mw_-$ will be
\begin{align}
\label{eq:w_dynamics_app}
    \dot{\mw}_+ = -\frac{\nabla_{\mw_+} L}{1 - \mu} - \eta\gamma_{k}^{\mw_+}, \quad \dot{\mw}_- = -\frac{\nabla_{\mw_-} L}{1 - \mu} - \eta\gamma_{k}^{\mw_-}
\end{align}
where we use $\gamma_{k}^{\mw_+} \in \R^{d}$ and $\gamma_{k}^{\mw_-}\in \R^{d}$ to represent the error terms for HBF of $\mw_+$ and $\mw_-$, respectively, and the gradients are
\begin{align}
    \nabla_{\mw} L & = \frac{1}{n}X^Tr, \\
    \nabla_{\mw_{+}}L & =  2 \mw_{+}\odot \nabla_{\mw} L, \quad \nabla_{\mw_{-}}L =  - 2 \mw_{-}\odot \nabla_{\mw} L.
\end{align}
Using the expressions above, it can be easily verified that
\begin{equation}
\label{eq:wpm_sum_is_0}
    \mw_{-}\odot \nabla_{w_+} L +  \mw_{+}\odot \nabla_{w_-} L =0,
\end{equation}
and we will frequently use this relation later. Recall the definition of $\kappa_j = \mw_{+;j}\mw_{-;j}$, 
we now present useful lemmas before proving Theorem~\ref{theorem:main}.
\begin{lemma}
\label{lemma:kappat}
    Let $\kappa_j(t) = \mw_{+;j}(t)\mw_{-;j}(t)$, $\gamma_{k;j}^{\mw_{\pm}}$ denote the $j$-th component of $\gamma_{k}^{\mw_{\pm}}$, and 
    \begin{equation}
        \epsilon_j(t) = \int_0^t ds \left(\frac{\gamma_{k;j}^{\mw_{+}}(s)}{\mw_{+;j}(s)} + \frac{\gamma_{k;j}^{\mw_{-}}(s)}{\mw_{-;j}(s)} \right),
    \end{equation}
    then we have
    \begin{equation}
        \kappa_j(t) = \kappa_j(0) e^{- \eta \epsilon_j(t)}.
    \end{equation}
\end{lemma}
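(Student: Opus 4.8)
The plan is to differentiate $\kappa_j(t) = \mw_{+;j}(t)\mw_{-;j}(t)$ directly and reduce it to a scalar linear ODE that integrates to the claimed exponential. First I would apply the product rule, $\dot\kappa_j = \dot\mw_{+;j}\,\mw_{-;j} + \mw_{+;j}\,\dot\mw_{-;j}$, and substitute the HBF dynamics Eq.~\eqref{eq:w_dynamics_app} componentwise, which gives $\dot\kappa_j = -\frac{1}{1-\mu}\big(\mw_{-;j}(\nabla_{\mw_+}L)_j + \mw_{+;j}(\nabla_{\mw_-}L)_j\big) - \eta\big(\mw_{-;j}\gamma_{k;j}^{\mw_+} + \mw_{+;j}\gamma_{k;j}^{\mw_-}\big)$.

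The crucial simplification is that the gradient contribution vanishes identically: the relation Eq.~\eqref{eq:wpm_sum_is_0}, $\mw_{-}\odot\nabla_{\mw_+}L + \mw_{+}\odot\nabla_{\mw_-}L = 0$, is precisely the statement that $\mw_{-;j}(\nabla_{\mw_+}L)_j + \mw_{+;j}(\nabla_{\mw_-}L)_j = 0$ for every coordinate $j$. Hence only the correction terms survive, $\dot\kappa_j = -\eta\big(\mw_{-;j}\gamma_{k;j}^{\mw_+} + \mw_{+;j}\gamma_{k;j}^{\mw_-}\big)$, and factoring out $\kappa_j = \mw_{+;j}\mw_{-;j}$ yields $\dot\kappa_j = -\eta\kappa_j\big(\gamma_{k;j}^{\mw_+}/\mw_{+;j} + \gamma_{k;j}^{\mw_-}/\mw_{-;j}\big) = -\eta\kappa_j\,\dot\epsilon_j(t)$ by the definition of $\epsilon_j$.

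Finally I would integrate the linear ODE $\dot\kappa_j/\kappa_j = -\eta\dot\epsilon_j$ from $0$ to $t$; since $\epsilon_j(0) = 0$ this gives $\ln\kappa_j(t) - \ln\kappa_j(0) = -\eta\epsilon_j(t)$, i.e. $\kappa_j(t) = \kappa_j(0)e^{-\eta\epsilon_j(t)}$. The only point that needs care — and the place where I expect the minor technical obstacle — is well-posedness of $\gamma_{k;j}^{\mw_\pm}/\mw_{\pm;j}$ and of $\ln\kappa_j$: one must check that $\mw_{+;j}(t)$ and $\mw_{-;j}(t)$ do not vanish along the trajectory, so that $\kappa_j$ keeps a fixed sign. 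This follows from the assumed initialization $\mw_{+;j}(0) = \mw_{-;j}(0)\neq 0$ together with continuity and the fact that $\kappa_j$ itself solves an ODE whose right-hand side is proportional to $\kappa_j$ (so $\kappa_j$ cannot reach zero in finite time); alternatively one can keep the undivided identity $\dot\kappa_j = -\eta(\mw_{-;j}\gamma_{k;j}^{\mw_+} + \mw_{+;j}\gamma_{k;j}^{\mw_-})$ and only introduce $\epsilon_j$ on the region where the factors are nonzero.
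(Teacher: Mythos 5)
Your proposal is correct and follows essentially the same route as the paper's proof: differentiate $\kappa_j$ by the product rule, cancel the gradient terms via Eq.~\eqref{eq:wpm_sum_is_0}, factor out $\kappa_j$, and integrate the resulting linear ODE. The only addition is your remark on the non-vanishing of $\mw_{\pm;j}$, a well-posedness point the paper leaves implicit.
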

\begin{proof}
The proof applies the dynamics of $\mw_{+}$ and that of $\mw_{-}$:
    \begin{align}
    \label{eq:dot_kappa}
        \frac{d \kappa}{dt} & = \dot{\mw}_{+}\odot \mw_{-} + \dot{\mw}_{-}\odot \mw_{+} \nn\\
        & = \left( -\frac{\nabla_{\mw_{+}} L}{1 - \mu} - \eta \gamma_{k}^{\mw_{+}}\right) \odot \mw_{-} + \mw_{+}\odot \left( -\frac{\nabla_{\mw_{-}} L}{1 - \mu} - \eta\gamma_{k}^{\mw_{-}}\right) \nn\\
        & = -\eta \left( \gamma_{k}^{\mw_{+}} \odot \mw_{-} + \mw_{+}\odot\gamma_{k}^{\mw_{-}}\right),
    \end{align}
    where we use Eq.~\eqref{eq:w_dynamics} in the second equality and Eq.~\eqref{eq:wpm_sum_is_0} in the third equality. As a result, for the $j$-th component of $\kappa$, we have
    \begin{align}
        \dot{\kappa}_j & = -\eta \kappa_{j} \left( \frac{\gamma_{k;j}^{\mw_{+}}}{\mw_{+;j}} + \frac{\gamma_{k;j}^{\mw_{-}}}{\mw_{-;j}}\right) \nn\\
        \implies \kappa_j(t) & =  \kappa_j(0) e^{- \eta \epsilon_j(t)} .
    \end{align}
\end{proof}
It is also interesting to investigate the dynamics of $\mw$ as shown below.
\begin{lemma}
\label{lemma:w_dynamics_init}
    If $\mw_{\pm}$ is run with HBF, then the dynamics of $\mw$ satisfies that
    \begin{align}
         \dot{\mw} = - 4\mv \odot \frac{\nabla_{\mw} L }{1 - \mu} - \eta\Gamma_k^{\mw}
    \end{align}
    where we let
    \begin{align}
        \mv = \left( \mw_{+} \odot \mw_{+} + \mw_{-} \odot \mw_{-} \right), \quad \Gamma_k^{\mw} = 2\left( \gamma_{k}^{\mw_{+}}\odot \mw_{+} - \gamma_{k}^{\mw_{-}}\odot \mw_{-}\right).
    \end{align}
\end{lemma}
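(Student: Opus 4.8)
The plan is a direct chain-rule computation starting from the defining parameterization $\mw = \mw_{+}\odot\mw_{+} - \mw_{-}\odot\mw_{-}$. First I would differentiate both sides with respect to $t$, using the product rule for the elementwise product, obtaining $\dot{\mw} = 2\mw_{+}\odot\dot{\mw}_{+} - 2\mw_{-}\odot\dot{\mw}_{-}$. Then I substitute the HBF dynamics Eq.~\eqref{eq:w_dynamics_app} for $\dot{\mw}_{+}$ and $\dot{\mw}_{-}$, which splits the right-hand side into a rescaled-gradient part (the terms carrying the factor $1/(1-\mu)$) and a counter-term part (the terms carrying the factor $\eta$).

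For the rescaled-gradient part, the key step is to rewrite $\mw_{+}\odot\nabla_{\mw_{+}} L - \mw_{-}\odot\nabla_{\mw_{-}} L$ using the chain-rule expressions $\nabla_{\mw_{+}} L = 2\mw_{+}\odot\nabla_{\mw} L$ and $\nabla_{\mw_{-}} L = -2\mw_{-}\odot\nabla_{\mw} L$ recorded just above Eq.~\eqref{eq:wpm_sum_is_0}. Substituting these gives $2\left(\mw_{+}\odot\mw_{+} + \mw_{-}\odot\mw_{-}\right)\odot\nabla_{\mw} L = 2\mv\odot\nabla_{\mw} L$, so the rescaled-gradient contribution to $\dot{\mw}$ collapses to $-4\mv\odot\nabla_{\mw} L/(1-\mu)$, exactly the first term in the claim. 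For the counter-term part, one simply reads off $-2\eta\left(\gamma_{k}^{\mw_{+}}\odot\mw_{+} - \gamma_{k}^{\mw_{-}}\odot\mw_{-}\right) = -\eta\Gamma_k^{\mw}$ from the definition of $\Gamma_k^{\mw}$. Adding the two pieces yields the stated identity.

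There is essentially no obstacle here: the lemma follows from elementary differentiation together with the algebraic identities already listed in the excerpt. The only points demanding a little care are bookkeeping the factor of $2$ produced by differentiating $\mw_{\pm}\odot\mw_{\pm}$ and the factor of $2$ inside each gradient formula, and tracking the sign attached to the $\mw_{-}$ branch; these combine to produce the overall factor $4$ and the ``diagonal metric'' $\mv$ in the leading term.
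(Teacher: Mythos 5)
Your proposal is correct and follows exactly the paper's own proof: differentiate the parameterization $\mw = \mw_{+}\odot\mw_{+} - \mw_{-}\odot\mw_{-}$, substitute the HBF dynamics for $\dot{\mw}_{\pm}$, and use $\nabla_{\mw_{+}} L = 2\mw_{+}\odot\nabla_{\mw} L$, $\nabla_{\mw_{-}} L = -2\mw_{-}\odot\nabla_{\mw} L$ to collapse the gradient terms into $-4\mv\odot\nabla_{\mw}L/(1-\mu)$ while the counter terms assemble into $-\eta\Gamma_k^{\mw}$. The bookkeeping of the factors of $2$ and the sign on the $\mw_{-}$ branch is handled correctly, so there is nothing to add.
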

\begin{proof}
Using the dynamics of $\mw_{\pm}$ Eq.~\eqref{eq:w_dynamics}, we can show that
    \begin{align}
    \dot{\mw} & = 2 \dot{\mw}_{+} \odot \mw_{+} - 2 \dot{\mw}_{-}\odot\mw_{-}\nn\\
    & = 2 \left( -\frac{\nabla_{\mw_{+}} L}{1 - \mu} - \eta \gamma_{k}^{\mw_{+}}\right)\odot \mw_{+} - 2  \left( -\frac{\nabla_{\mw_{-}} L}{1 - \mu} - \eta\gamma_{k}^{\mw_{-}}\right) \odot \mw_{-}\nn\\
    & = - 4\left( \mw_{+} \odot \mw_{+} + \mw_{-} \odot \mw_{-} \right) \odot \frac{\nabla_{\mw} L }{1 - \mu} - 2 \eta\left( \gamma_{k}^{\mw_{+}}\odot \mw_{+} - \gamma_{k}^{\mw_{-}}\odot \mw_{-}\right).
\end{align}
\end{proof}
To show the implicit bias of HBF, we need to first explore the dynamics of $\mw$, which is present in the following lemma.
\begin{lemma}[Dynamics of $\mw$ for diagonal linear networks under HBF]
\label{lemma:w_dynamics}
   Under conditions of Theorem~\ref{theorem:main}, if the diagonal linear network $f(x;\mw)$ is trained with HBF (Theorem~\ref{theorem:hbf}), let
   \begin{align}
        \Lambda_j^{\gf}(\mw; \kappa(t)) & = \frac{2\kappa_j(t)}{4}\left[ \frac{\mw_j(t)}{2\kappa_j(t)} \arcsinh\left( \frac{\mw_j(t)}{2\kappa_j(t)}\right) - \sqrt{1 + \frac{\mw_j^2(t)}{4\kappa_j^2(t)}} + 1\right] \nn\\
        \varphi_j(t) & = \frac{\eta}{4}\int_{0}^{t}ds \left[ \frac{\gamma_{k;j}^{\mw_{+}}(s)}{\mw_{+;j}(s)} - \frac{\gamma_{k;j}^{\mw_{-}}(s)}{\mw_{-;j}(s)} \right]\nn\\
       \Lambda_j(\mw, t; \kappa) & = \Lambda_j^{\gf}(\mw; \kappa(t)) + \mw_j(t) \varphi_j(t),
   \end{align}
    then the learning dynamics of the parameter $\mw$ satisfies that
    \begin{align}
        \frac{d}{dt}\partial_{\mw_{j}} \Lambda_j + \frac{\partial_{\mw_{j}} L }{1 - \mu} = 0.
    \end{align}
\end{lemma}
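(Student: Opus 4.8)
The statement is an identity that must hold pointwise along the HBF trajectory, so the plan is to compute the scalar $\partial_{\mw_j}\Lambda_j$ in closed form and then take its total time derivative, feeding in the expressions for $\dot\kappa_j$ (Lemma~\ref{lemma:kappat}) and $\dot{\mw}_j$ (Lemma~\ref{lemma:w_dynamics_init}) already established above. Here $\partial_{\mw_j}$ denotes the partial derivative of $\Lambda_j$ in its first slot, i.e.\ with the explicit time-dependence through $\kappa_j(t)$ and $\varphi_j(t)$ frozen, whereas $\frac{d}{dt}$ is the total derivative along $t\mapsto(\mw(t),\kappa(t),\varphi(t))$; note also that $\Lambda_j^{\gf}$ depends on $\mw$ only through $\mw_j$, so $\partial_{\mw_j}$ acts as an ordinary one-variable derivative.

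\emph{Partial derivative and a collapsing identity.} Writing $u=\mw_j/(2\kappa_j)$ one has $\Lambda_j^{\gf}(\mw;\kappa)=\frac{\kappa_j}{2}\big[u\arcsinh u-\sqrt{1+u^2}+1\big]$, and since $\frac{d}{du}\big[u\arcsinh u-\sqrt{1+u^2}\big]=\arcsinh u$ (the two copies of $u/\sqrt{1+u^2}$ cancel), the chain rule with $\partial u/\partial\mw_j=1/(2\kappa_j)$ gives $\partial_{\mw_j}\Lambda_j^{\gf}=\frac14\arcsinh(\mw_j/(2\kappa_j))$, hence $\partial_{\mw_j}\Lambda_j=\frac14\arcsinh\!\big(\mw_j(t)/(2\kappa_j(t))\big)+\varphi_j(t)$. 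Next, because $\mw_j=\mw_{+;j}^2-\mw_{-;j}^2$ and $\kappa_j=\mw_{+;j}\mw_{-;j}$, we have $4\kappa_j^2+\mw_j^2=(\mw_{+;j}^2+\mw_{-;j}^2)^2$, i.e.\ $\sqrt{4\kappa_j^2+\mw_j^2}=\mv_j$ in the notation of Lemma~\ref{lemma:w_dynamics_init} (and $\kappa_j>0$ along the trajectory under the hypothesis $\mw_{+;j}(0)=\mw_{-;j}(0)$, so the square-root sign is fixed). Differentiating $\arcsinh(\mw_j/2\kappa_j)$ in $t$ and using $\sqrt{1+u^2}=\mv_j/(2\kappa_j)$, I obtain
\[
\frac{d}{dt}\Big[\frac14\arcsinh\!\Big(\frac{\mw_j}{2\kappa_j}\Big)\Big]=\frac{1}{4\mv_j}\Big(\dot{\mw}_j-\frac{\mw_j\,\dot\kappa_j}{\kappa_j}\Big).
\]

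\emph{Substitute the dynamics and cancel.} Abbreviate $a=\gamma_{k;j}^{\mw_{+}}/\mw_{+;j}$ and $b=\gamma_{k;j}^{\mw_{-}}/\mw_{-;j}$. Lemma~\ref{lemma:kappat} gives $\dot\kappa_j=-\eta\kappa_j(a+b)$, hence $\mw_j\dot\kappa_j/\kappa_j=-\eta(\mw_{+;j}^2-\mw_{-;j}^2)(a+b)$; Lemma~\ref{lemma:w_dynamics_init} gives $\dot{\mw}_j=-4\mv_j\,\partial_{\mw_j}L/(1-\mu)-2\eta(a\mw_{+;j}^2-b\mw_{-;j}^2)$. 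Adding, the $\eta$-terms recombine into a single multiple of $\mv_j$,
\[
-2\eta(a\mw_{+;j}^2-b\mw_{-;j}^2)+\eta(\mw_{+;j}^2-\mw_{-;j}^2)(a+b)=\eta(b-a)\mv_j,
\]
so the right-hand side of the displayed derivative equals $-\partial_{\mw_j}L/(1-\mu)+\frac{\eta}{4}(b-a)$. Finally $\dot\varphi_j(t)=\frac{\eta}{4}(a-b)$ by the fundamental theorem of calculus, which exactly cancels the residual $\frac{\eta}{4}(b-a)$, leaving $\frac{d}{dt}\partial_{\mw_j}\Lambda_j=-\partial_{\mw_j}L/(1-\mu)$, i.e.\ the claim.

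\emph{Expected main obstacle.} There is no genuine obstacle here; the argument is a deterministic calculation built on the two preceding lemmas. The step that needs the most care is the recombination of the $\eta$-correction terms: one must keep the $\mw_{+;j},\mw_{-;j}$ bookkeeping exact so that the correction coming from $\dot\kappa_j$ and the one coming from $\dot{\mw}_j$ collapse to the single term $\eta(b-a)\mv_j$, which is then absorbed precisely by $\dot\varphi_j$ — this cancellation is exactly the reason $\varphi_j$ is defined with that particular integrand and coefficient. A secondary point to keep straight is the distinction between $\partial_{\mw_j}$ (explicit $t$-dependence frozen) and $\frac{d}{dt}$ (total derivative along the trajectory), since both the $\mw_j(t)$ and the $\kappa_j(t)$ dependence of $\arcsinh(\mw_j/2\kappa_j)$ contribute to the time derivative.
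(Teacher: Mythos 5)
Your proof is correct, and every ingredient matches the paper's: Lemma~\ref{lemma:kappat} for $\dot\kappa_j$, Lemma~\ref{lemma:w_dynamics_init} for $\dot{\mw}_j$ and $\Gamma^{\mw}_{k;j}$, the identity $\sqrt{\mw_j^2+4\kappa_j^2}=\mv_j$, and the recombination of the $\eta$-correction terms into $(b-a)\mv_j$ that is absorbed by $\dot\varphi_j$ (the paper performs the identical algebra when it reduces $-\mw_j\dot\epsilon_j+\Gamma^{\mw}_{k;j}$ to $(a-b)\mv_j$). The only real difference is direction: the paper \emph{constructs} the potential, positing $\Lambda_j=\bar\Lambda_j+\mw_j\varphi_j$, imposing $\partial^2_{\mw_j}\bar\Lambda_j=1/(4\sqrt{\mw_j^2+4\kappa_j^2})$, integrating twice with constants fixed by the initial condition, and then reading off $\dot\varphi_j$ from the leftover terms; you instead take the stated closed form as given, compute $\partial_{\mw_j}\Lambda_j=\tfrac14\arcsinh(\mw_j/(2\kappa_j))+\varphi_j$ directly, and verify the identity by total differentiation. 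Your verification route is cleaner for proving the lemma as stated — it avoids the integration constants and the intermediate $-\eta\epsilon_j\mw_j/4$ bookkeeping in which the paper's appendix picks up a couple of notational slips (e.g.\ a dropped factor of $1/4$ in its final display for $\Lambda_j$) — though it is less informative about \emph{why} $\varphi_j$ has that particular integrand, which the constructive derivation makes explicit. Your explicit remarks on the frozen-slot meaning of $\partial_{\mw_j}$ versus the total derivative $d/dt$, and on $\kappa_j(t)>0$ fixing the sign of the square root, are consistent with how the paper uses these objects.
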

The proof of this lemma can be found in Appendix~\ref{app:proof_lemmab3}. In the following we first focus on the proof of Theorem~\ref{theorem:main}.

\subsection{Proof of Theorem~\ref{theorem:main}}
Now we can prove Theorem~\ref{theorem:main} with above helper lemmas.
\begin{proof}
Recall the definition of $\Lambda_j$ in Lemma~\ref{lemma:w_dynamics} and we further define
\begin{equation}
    \Lambda(\mw, t; \kappa) = \sum_{j = 1}^{d}\Lambda_j(\mw, t; \kappa),
\end{equation}
then Lemma~\ref{lemma:w_dynamics} gives us 
\begin{align}
\label{eq:KKT_condition}
    \frac{d}{dt}\nabla_{\mw} \Lambda(\mw, t; \kappa) & = \left(\frac{d}{dt} \partial_{\mw_1}\Lambda_1(\mw, t; \kappa), \dots, \frac{d}{dt} \partial_{\mw_d}\Lambda_d(\mw, t; \kappa)\right)^T\nn\\
    & = -\frac{X^Tr}{n(1 - \mu)}\nn\\
    \implies \nabla_{\mw} \Lambda(\mw(\infty), 
    \infty; \kappa(\infty)) &\ - \nabla_{\mw} \Lambda(\mw(0), 0; \kappa(0))  = - \sum_{i = 1}^n\frac{x_i \int_{0}^{\infty} r_i(\tau) d\tau }{n(1 - \mu)} = \sum_{i = 1}^n x_i c_i
\end{align}
where we let $c_i = - \frac{\int_{0}^{\infty} r_i(\tau) d\tau }{n(1 - \mu)} $. Let $\nabla_{\mw}\Lambda(\mw(0), 0; \kappa(0)) = 0$ and recall the definition of $\Lambda(\mw; \kappa)$ in Theorem~\ref{theorem:main}., then Eq.~\eqref{eq:KKT_condition} is equivalent to 
\[
    \nabla_{\mw}\Lambda(\mw; \kappa) - \sum_{i = 1}^n x_i c_i = 0,
\]
which is exactly the KKT condition of $\arg\min_{\mw: X\mw = y} \Lambda(\mw; \kappa)$ proposed in Theorem~\ref{theorem:main}. Therefore, we finish the proof. 
\end{proof}

\subsection{Proof of Lemma~\ref{lemma:w_dynamics}}
\label{app:proof_lemmab3}
In this section we present the proof of Lemma~\ref{lemma:w_dynamics}.
\begin{proof}
    For simplicity, in the following we write the subscripts explicitly. According to Lemma~\ref{lemma:w_dynamics_init}, the dynamics of $\mw_j$ can be written as 
    \begin{align}
    \label{eq:dot_w_temp}
        \dot{\mw}_j & = -\frac{4}{1-\mu} \mv_j\partial_{\mw_j} L - \eta\Gamma_{k;j}^{\mw}.
    \end{align}
    Note that
    \begin{equation}
    \label{eq:w+2+w-2=sqrt}
        \mv_j^2 - \mw_j^{2} = 4\mw_{+;j}^2\mw_{-;j}^2 \implies \mv_j^2 = \sqrt{\mw_j^{2} + 4 \kappa_j^2},
    \end{equation}
    then Eq.~\eqref{eq:dot_w_temp} can be written as
    \begin{equation}
        \frac{\dot{\mw}_j}{4 \sqrt{\mw_j^{2} + 4 \kappa_j^2}} = -\frac{\partial_{\mw_j} L }{1-\mu} - \eta\frac{\Gamma_{k;j}^{\mw}}{4\sqrt{\mw_j^{2} + 4 \kappa_j^2}}.
    \end{equation}
    We now define a function 
    \begin{equation}
        \Lambda_j(\mw, t; \kappa) = \bar{\Lambda}_j(\mw, t;\kappa) + \mw_j \varphi_j(t)
    \end{equation}
    for some $\bar{\Lambda}_j(\mw, t;\kappa)$ and $\varphi_j(t)$ such that
    \begin{align}
    \label{eq:Lambda_def}
        \frac{d}{dt}\partial_{\mw_j}\Lambda_j(\mw, t; \kappa)  =  \frac{\dot{\mw}_j + \eta \Gamma_{k;j}^{\mw}}{4 \sqrt{\mw_j^{2} + 4 \kappa_j^2}},
    \end{align}
    the we can prove this lemma. Now we continue to find the $\bar{\Lambda}_j(\mw, t;\kappa)$ and $\varphi_j(t)$. By definition,
    \begin{align}
    \label{eq:dt_dw_Lambda}
         \frac{d}{dt}\partial_{\mw_j}\Lambda_j(\mw, t; \kappa)  & = \partial_{\mw_j}^2 \bar{\Lambda}_j\dot{\mw}_j + \partial_t\partial_{\mw_j} \bar{\Lambda}_j + \dot{\varphi}_j,
    \end{align}
    which, when compared with Eq.~\eqref{eq:Lambda_def}, implies that
    \begin{align}
        \partial_{\mw_j}^2 \bar{\Lambda}_j = \frac{1}{4 \sqrt{\mw_j^{2} + 4 \kappa_j^2}}.
    \end{align}
    Solving this equation gives us
    \begin{align}
        \label{eq:dw_Lambda}
        \partial_{\mw_j} \bar{\Lambda}_j = \frac{1}{4}\int\frac{d\mw_j}{\sqrt{\mw_j^{2} + 4 \kappa_j^2}} = \frac{\ln\left( \sqrt{\mw_j^2 + 4\kappa_j^2} + \mw_j\right)}{4} + c
    \end{align}
    where $c$ is a constant and can be determined by requiring $\partial_{\mw_j} \bar{\Lambda}_j|_{t = 0} + \varphi_j(0) = 0 \implies c = -\ln(2\kappa_j(0)) / 4$. Thus Eq.~\eqref{eq:dw_Lambda} becomes
    \begin{align}
        \partial_{\mw_j} \bar{\Lambda}_j = \frac{1}{4}\ln\left( \frac{\sqrt{\mw_j^2 + 4\kappa_j^2(t)} + \mw_j}{2\kappa_j(t)}\right) - \frac{\eta \epsilon_j(t)}{4}\nn
    \end{align}
    where we have used the definition of $\epsilon_j(t)$ in Lemma~\ref{lemma:kappat}. Solving the above equation will give us the form of $\bar{\Lambda}_j$
    \begin{align}
        \bar{\Lambda}_j & = \frac{1}{4}\int d\mw_j\arcsinh\left( \frac{\mw_j}{2\kappa_j(t)}\right)  - \frac{\eta\epsilon_j(t)\mw_j}{4}\nn\\
        & = \frac{1}{4} \left[ \mw_j \arcsinh\left( \frac{\mw_j}{2 \kappa_j(t)}\right) - \sqrt{\mw_j^2 + 4 \kappa_j^2(t)} + 2 \kappa_j(t)\right] -\frac{\eta\epsilon_j(t)\mw_j}{4}\nn\\
        & = \Lambda_j^{\gf}(\mw; \kappa(t))  -\frac{\eta\epsilon_j(t)\mw_j}{4}
    \end{align}
    where we use the definition of $\Lambda^{\gf}$ in Eq.~\eqref{eq:ib_gf}. Comparing the rest parts of Eq.~\eqref{eq:dt_dw_Lambda} with Eq.~\eqref{eq:Lambda_def} requires that
    \begin{align}
        \partial_t\partial_{\mw_j} \bar{\Lambda}_j + \dot{\varphi}_j & = \eta\frac{\Gamma_{k;j}^{\mw}}{4 \sqrt{\mw_j^{2} + 4 \kappa_j^2(t)}} \nn\\
        \implies \dot{\varphi}_j(t) & =  
        \frac{ \eta \kappa_j^2(t) \dot{\epsilon}_j(t)}{ \left(\mw_j + \sqrt{\mw_j^2 + 4 \kappa_j^2(t)}\right)\sqrt{\mw_j^2 + 4\kappa_j^2(t)}} + \eta \frac{\Gamma_{k;j}^{\mw}}{4 \sqrt{\mw_j^{2} + 4 \kappa_j^2(t)}}.
    \end{align}
   When combined with the form of $\bar{\Lambda}_j$, we can find the form of $\Lambda_j$:
    \begin{align}
        \Lambda_j (\mw, t; \kappa) &= \Lambda_j^{\gf} (\mw; \kappa(t))  +  \eta \mw_j \int \frac{ds}{\sqrt{\mw_j^{2} + 4 \kappa_j^2(s)}} \Bigg[ \frac{\kappa_j^2(s)}{\mw_j + \sqrt{\mw_j^2 + 4\kappa_j^2(s)}}\dot{\epsilon}_j\nn\\
        & \qquad - \frac{\sqrt{\mw_j^{2} + 4 \kappa_j^2(s)} \dot{\epsilon}_j}{4} +\frac{\Gamma_{k;j}^{\mw}}{4}\Bigg]\nn\\
        & = \Lambda_j^{\gf} (\mw; \kappa(t))  +  \eta \mw_j \int \frac{ds}{\sqrt{\mw_j^{2} + 4 \kappa_j^2(s)}} \left[ - \frac{\mw_j \dot{\epsilon}_j }{4} +  \frac{ \Gamma_{k;j}^{\mw}}{4}\right]\nn\\
        & = \Lambda_j^{\gf} (\mw; \kappa(t))  +  \eta \mw_j \int ds \left[ \frac{\gamma_{k;j}^{\mw_{+}}}{\mw_{+;j}} - \frac{\gamma_{k;j}^{\mw_{-}}}{\mw_{-;j}}\right]
    \end{align}
    where we use the definition of $\epsilon_j$ (Lemma~\ref{lemma:kappat}) and Eq.~\eqref{eq:w+2+w-2=sqrt} in the last equality.
\end{proof}

\subsection{Implicit Bias of HBF for Diagonal Linear Networks when $\alpha = 2$}
In this case, the correction term $\gamma^{\mw_{\pm}}$ will be
\begin{align*}
    \gamma^{\mw_{\pm}} = \frac{1 + \mu}{2(1 - \mu)^3} \nabla_{\mw_{\pm}} L \cdot \nabla^2_{\mw_{\pm}} L.
\end{align*}
We need to first find the Hessian $\nabla^2_{\mw_{\pm}} L$. Due to the element-wise product, it will be convenient to derive the Hessian by writing the subscripts explicitly. We start with $\mw_{+}$.
\begin{align}
    \partial_{\mw_{+;i}}\partial_{\mw_{+;j}} L & = \frac{2}{n}\partial_{\mw_{+;i}} \left(\mw_{+;j}(X^Tr)_j\right)\nn\\
    & = \frac{2}{n} \left[ \delta_{ij}(X^Tr)_j + \sum_{c=1}^{n}\mw_{+;j}\partial_{\mw_{+;i}}\left( x_{c;j}(x^T_{c}\mw - y_c)\right)\right]\nn\\
    & = \frac{2}{n} \left[ \delta_{ij}(X^Tr)_j + 2 \sum_{c=1}^{n}\mw_{+;j}x_{c;j}x_{c;i}\mw_{+;i}\right],
\end{align}
where we use the delta symbol $\delta_{ij} = 1$ if $i = j$ otherwise $\delta_{ij}=0$. 
Therefore, we can conclude that
\begin{align}
    \nabla^{2}_{\mw_{+}} L = \frac{2}{n}\left[ \diag(X^Tr) + 2 \sum_{c=1}^{n}(\mw_{+}\odot x_c)(\mw_{+}\odot x_c)^T\right].
\end{align}
Following a similar approach, we obtain that for $\mw_{-}$
\begin{align}
    \partial_{\mw_{-;i}}\partial_{\mw_{-;j}} L & = - \frac{2}{n}\partial_{\mw_{-;i}} \left(\mw_{-;j}(X^Tr)_j\right)\nn\\
    & = \frac{2}{n} \left[ - \delta_{ij}(X^Tr)_j + 2 \sum_{c=1}^{n}\mw_{-;j}x_{c;j}x_{c;i}\mw_{-;i}\right]\\
    \implies \nabla^2_{\mw_{-}} L & =  \frac{2}{n}\left[ - \diag(X^Tr) + 2 \sum_{c=1}^{n}(\mw_{-}\odot x_c)(\mw_{-}\odot x_c)^T\right].
\end{align}
It is now left for us to find the form of $\nabla_{\mw_{\pm}} L \cdot \nabla^2_{\mw} L$. Again, it is convenient to write the subscripts explicitly:
\begin{align}
\label{eq:gamma_temp_p}
    \left( \nabla_{\mw_{+}} L \cdot \nabla^2_{\mw_+} L\right)_j & = \sum_{i = 1}^{d} \partial_{\mw_{+;i}}\partial_{\mw_{+;j}} L \partial_{\mw_{+;i}} L \nn\\
    & = \frac{4}{n^2} \sum_{i = 1}^{d}\left[ \delta_{ij}(X^Tr)_j + 2 \sum_{c=1}^{n}\mw_{+;j}x_{c;j}x_{c;i}\mw_{+;i}\right]\mw_{+;i}(X^Tr)_i\nn\\
    & = \frac{4}{n^2}\left[ \mw_{+;j}((X^Tr)_j)^2 + 2 \sum_{c=1}^{n} \mw_{+;j}x_{c;j}\left(x_c\odot \mw_{+}\odot\mw_{+}\right)^TX^Tr\right].
\end{align}
Similarly,
\begin{align}
\label{eq:gamm_temp_m}
    \left( \nabla_{\mw_{-}} L \cdot \nabla^2_{\mw_-} L\right)_j = \frac{4}{n^2}\left[ \mw_{-;j}((X^Tr)_j)^2 - 2 \sum_{c=1}^{n} \mw_{-;j}x_{c;j}\left(x_c\odot \mw_{-}\odot\mw_{-}\right)^TX^Tr\right].
\end{align}
Using Eq.~\eqref{eq:gamma_temp_p} and \eqref{eq:gamm_temp_m}, we can derive that 
\begin{align}
\label{eq:gamma_pm/wpm}
    \frac{\gamma_j^{\mw_{\pm}}}{\mw_{\pm;j}} = \frac{2(1 + \mu)}{(1 - \mu)^3n^2}\left[ ((X^Tr)_j)^2 \pm 2 \sum_{c=1}^{n}x_{c;j}\left(x_c\odot \mw_{\pm}\odot\mw_{\pm}\right)^TX^Tr\right],
\end{align}
which further gives us the integral $\epsilon_j$:
\begin{align}
\label{eq:eps_int_1st_part}
    \dot{\epsilon}_j & = \frac{\gamma_j^{\mw_{+}}}{\mw_{+;j}} + \frac{\gamma_j^{\mw_{-}}}{\mw_{-;j}} \nn\\
    & = \frac{4(1 + \mu)}{(1 - \mu)^3n^2} \left[ ((X^Tr)_j)^2 + \sum_{c=1}^n\sum_{i = 1}^d x_{c;j}x_{c;i}(X^Tr)_i\left( \mw_{+;i}^2 - \mw_{-;i}^2\right)\right]\nn\\
    & = \frac{4(1 + \mu)}{(1 - \mu)^3n^2} \left[ ((X^Tr)_j)^2 + \sum_{c=1}^nx_{c;j}x^T_c (\mw \odot (X^Tr))\right]\nn\\
    & = \frac{4(1 + \mu)}{(1 - \mu)^3} \left[ (\nabla_{\mw} L)_j^2 + \frac{1}{n}\left(X^TX(\mw \odot \nabla_{\mw}L)\right)_j\right].
\end{align}
On the other hand, according to Lemma~\ref{lemma:w_dynamics_init}, $\partial_{\mw_i} L$ can be written as 
\begin{align}
\label{eq:dwL}
    - (1 - \mu)\frac{\dot{\mw}_i}{4\mv_i} - \eta(1 - \mu)\frac{\Gamma_{i}^{\mw}}{4\mv_i},
\end{align}
which further gives us that
\begin{align}
    \eta\int_{0}^{t} ds \left(X^TX(\mw \odot \nabla_{\mw}L)\right)_j 
    & =  -  \eta(1 - \mu) \sum_{c=1}^n\sum_{i = 1}^d x_{c;j}x_{c;i}\int_{\mw_{i}(0)}^{\mw_{i}(t)} d\mw_i \frac{\mw_i(s)}{4\mv_i(s)} + \mo{\eta^2}\nn\\
    & =  -  \eta(1 - \mu) \sum_{c=1}^n\sum_{i = 1}^d x_{c;j}x_{c;i}\int_{\mw_{i}(0)}^{\mw_{i}(t)} d\mw_i \frac{\mw_i(s)}{4\sqrt{\mw_i^2(s) + 4\kappa_i^2(s)}} + \mo{\eta^2}\nn\\
    & = -  \frac{\eta(1 - \mu)}{4} \sum_{c=1}^n\sum_{i = 1}^d x_{c;j}x_{c;i}\left( \sqrt{\mw_i^2(t) + 4\kappa_i^2(t)} - \sqrt{\mw_i^2(0) + 4\kappa_i^2(0)}\right)\nn.
\end{align}
where we use Lemma~\ref{lemma:w_dynamics_init} in the first equality and Eq.~\eqref{eq:w+2+w-2=sqrt} in the second equality. Since $\mw(0)= 0$ and Lemma~\ref{lemma:kappat}, we obtain 
\begin{align}
\label{eq:eps_int_2nd_part}
    \eta\int_{0}^{t} ds \left(X^TX(\mw \odot \nabla_{\mw}L)\right)_j & = -  \frac{\eta(1 - \mu)}{4} \sum_{c=1}^n\sum_{i = 1}^d x_{c;j}x_{c;i}\left( \sqrt{\mw_i^2(t) + 4\kappa_i^2(0)} - 2\kappa_i(0)\right)\nn\\
    & = -  \frac{\eta(1 - \mu)}{4} \left(X^TX\mathbf{q}(t)\right)_j
\end{align}
where we let $\mathbf{q}\in \R^{d}$ with
\[
    \mathbf{q}_i(t) = \sqrt{\mw_i^2(t) + 4\kappa_i^2(0)} - 2\kappa_i(0) \geq 0.
\]
Now combining Eq.~\eqref{eq:eps_int_1st_part} and Eq.~\eqref{eq:eps_int_2nd_part}, we can derive
\begin{align}
    \eta\epsilon_{j}(t) = \frac{4\eta(1 + \mu)}{(1 - \mu)^3} \int_{0}^t ds (\partial_{\mw_j} L)^2 - \frac{\eta(1 + \mu)}{(1 - \mu)^2n}\left(X^TX\mathbf{q}\right)_j + \mo{\eta^2}.
\end{align}
To obtain the full potential function, we still need to find the form of $\varphi_j$. According to the definition of $\mv$ and $\epsilon_j$ and Eq.~\eqref{eq:gamma_pm/wpm}, we can derive 
\begin{align}
    2\gamma_{k;j}^{\mw_{+}}\mw_{+;j} - 2\gamma_{k;j}^{\mw_{-}}\mw_{-;j} - \mw_{j}\dot{\epsilon}_j  & = \mv_j\left(\frac{\gamma_{k;j}^{\mw_{+}}}{\mw_{+}} - \frac{\gamma_{k;j}^{\mw_{-}}}{\mw_{-}}\right) \nn\\
    & = \frac{4(1 + \mu)}{(1 - \mu)^3n}  \sum_{c=1}^n\sum_{i = 1}^d \mv_j 
    x_{c;j}x_{c;i} \mv_i \partial_{\mw_i} L,
\end{align}
which, when combined with the definition of $\varphi_j$ in Lemma~\ref{lemma:w_dynamics}, further gives us
\begin{align}
    \dot{\varphi}_j & = \eta \frac{(1 + \mu)}{(1 - \mu)^3n}  \sum_{c=1}^n\sum_{i = 1}^d
    x_{c;j}x_{c;i} \mv_i \partial_{\mw_i} L\nn\\
    & = - \frac{\eta (1 + \mu)}{4(1 - \mu)^2n}  \sum_{c=1}^n\sum_{i = 1}^d x_{c;j}x_{c;i} \dot{\mw}_i + \mo{\eta^2}
\end{align}
where we use Eq.~\eqref{eq:dwL} in the second equality. As a result,
\begin{align}
    \varphi_{j}(\infty) =  - \frac{\eta (1 + \mu)}{4(1 - \mu)^2n}  \sum_{c=1}^n\sum_{i = 1}^d x_{c;j}x_{c;i} \mw_i(t) =  \frac{\eta (1 + \mu)}{4(1 - \mu)^2n}  \left( X^TX\mw\right)_j.
\end{align}
One interesting thing aspect of $\varphi_j$ if $\mw$ converges to an interpolation solution where $X\mw(\infty) = y$ is
\begin{align}
    \varphi_j(\infty) = \frac{\eta (1 + \mu)}{4(1 - \mu)^2}\partial_{\mw_j}L(0).
\end{align}
In summary, the potential function $\mo{\eta^2}$-close HBF is 
\begin{align}
    \kappa_j(\infty) & = \kappa_j(0)\exp\left( -\frac{4\eta(1 + \mu)}{(1 - \mu)^3} \int_{0}^\infty ds (\partial_{\mw_j} L)^2 + \frac{\eta(1 + \mu)}{(1 - \mu)^2n}\left(X^TX\mathbf{q}(\infty)\right)_j \right)\nn\\
    \Lambda_j(\mw, \infty; \kappa) & = \Lambda_j^{\gf}(\mw, \kappa(\infty)) + \frac{\eta (1 + \mu)}{4(1 - \mu)^2}  \mw_j\partial_{\mw_j}L(0).
\end{align}
\section{Details for Numerical experiments}
\label{app:exp}

The experiments are conducted on a CentOS Linux 7.9.2 platform equipped with an Intel(R) Xeon CPU E5-2683 at 3.00 GHz, 256GB of RAM, and an NVIDIA Tesla A100  graphics card.

\subsection{Details for Section~\ref{sec:imp_hb_alpha3}}
For the experiment of Figure \ref{fig:cifar10_mlp}, we conduct observation on the comparison of directional smoothness for HB and GD on the CIFAR-10 dataset \cite{krizhevsky2009learning}. A multilayer perceptron with two hidden layers (each of width 200) is trained for 2000 epochs using full-batch GD and HB ($\mu=0.9$), and the step size is set to 0.1.

\subsection{Details for Section~\ref{sec:exp}}
For the discrete learning dynamics of HB and GD, we set the step size as $\eta$ and the momentum factor is $\mu$. For the continuous approximations, we use $\eta_{\text{Euler}} = \eta / 10$ as the Euler step sizes to approximate the dynamics. These hyper-parameters are listed in Table~\ref{tab:paras}.

\begin{table}[h!]
    \centering
    \begin{tabular}{c|c}
    \toprule
        $x, y$ & 1, 0.6\\
        Starting point & $a_1 = 2.8, a_2 = 3.5$\\
       $\eta$  & $5 \times 10^{-3}$ \\
       $\mu$  & 0.7\\
       $\eta_{\text{Euler}}$ &  $5 \times 10^{-4}$\\
   \bottomrule
    \end{tabular}
    \vspace{0.5cm}
    \caption{Hyper-parameters for 2-d model.}
    \label{tab:paras}
\end{table}

We let the model parameter be $\beta = (a_1, a_2)^T \in \R^{2}$. For RGF, we use the ODE 
\[
    \dot{\beta} = - \frac{\nabla_{\beta} L}{1 - \mu} \implies \beta_{k + 1} = \beta_k - \eta_{\text{Euler}} \frac{\nabla_{\beta} L}{1 - \mu}.
\]
Formulations of HBFs with $\alpha =2, 3$ are denoted in Table~\ref{tab:apa}. We denote $\mathbf{1}_d = (1, \dots, 1)^T\in \R^{d}$. For the dataset $\{(x_i, y_i)\}_{i = 1}^d$, we set $n=40, d=100$. The data point follows a Gaussian distribution $\mathcal{N}(0, I_d)$. To make the ground truth solution $\mw^{*}$ sparse, we let 5 components of it be nonzero. Recall that the initialization is $\kappa(0)=s^2\mathbf{1}_d$ where $s$ controls the initialization scale. In Fig.~\ref{fig:dln_train_error}, we make the initialization as $\mw_{+} = \mw_{-} = s\mathbf{1}_d $ with $s = 0.01$. We set the step size $\eta$ for HB as $10^{-3}$. For RGF and HBF, we let the Euler step size $\eta_{\text{Euler}} = 10^{-4}$ to simulate the continuous dynamics. In Fig.~\ref{fig:dln_gen} and \ref{fig:dln_train_error}, we set $\eta = 10^{-2}$. For the initialization, to make the task slightly harder, we let $\mw_{+} = \vartheta s\mathbf{1}_d$ and $\mw_{-} = s\mathbf{1}_d / \vartheta$ with $\vartheta = 0.9$ such that we still have $\kappa(0) = s^2\mathbf{1}_d$ while the initialization symmetry is slightly broken.

\paragraph{Addition experiments for different $\eta_{\text{Euler}}$}We additionally run the experiments in Fig.~\ref{fig:2d_model} for each of $\eta_{\text{Euler}}=\{\eta/10, \eta / 100, \eta / 1000\}$, and confirm that the observations and conclusions hold in all different $\eta_{\text{Euler}}$~Fig.~\ref{fig:diff_euler}.
\begin{figure}
    \centering
    \includegraphics[width=0.6\linewidth]{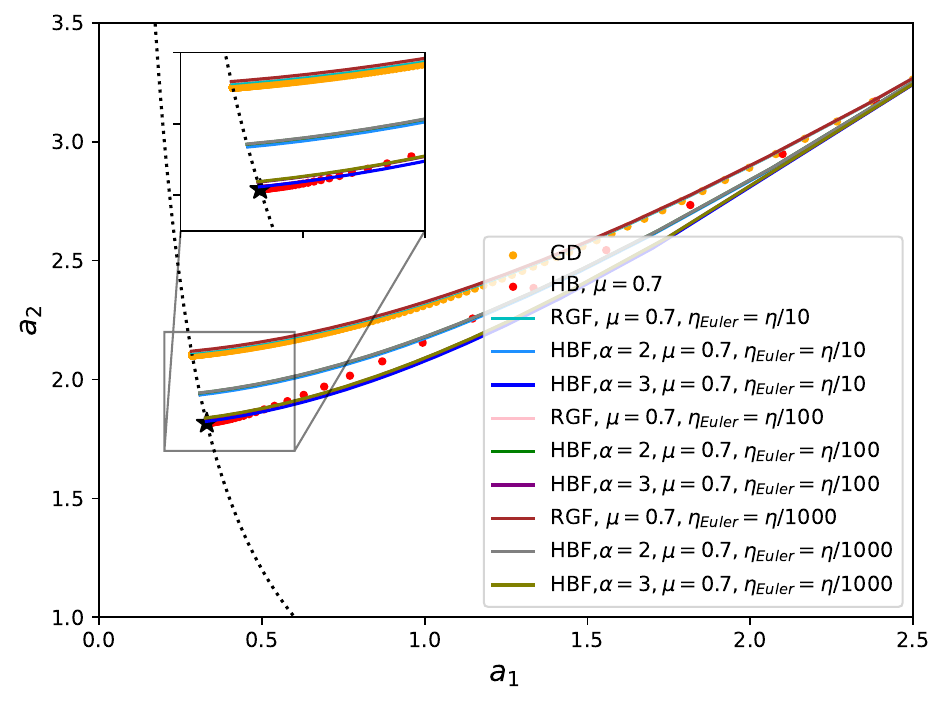}
    \caption{Trajectories of continuous approximations with different $\eta_{\text{Euler}}$.}
    \label{fig:diff_euler}
\end{figure}

\subsection{Additional Numerical Experiments for Non-Linear Networks}
We conduct experiments of Fig.~\ref{fig:2d_traj_error} in the MNIST dataset, where we now train a three-layer fully-connected neural networks~(FCNN). The FCNN has a structure of \texttt{Linear(784$\times$128) $\to$ SiLU$\to$Linear(128$\times$128)$\to$BatchNormalization$\to$Linear(128$\times$10)}. Cross-entropy is used for the loss function. The batch size is 60,000 and the momentum factor $\mu\in (0.7, 0.8)$. The learning rate is $\eta= 0.01$. The results~(reported in Fig.~\ref{fig:mlp_mnist_3layer}) well align with the results for the toy model in Fig.~\ref{fig:2d_traj_error}: HBF with $\alpha = 3$ has lower discretization error compared to HBF with $\alpha = 2$, and both of them are better than the RGF. 
\begin{figure}
    \centering
    \includegraphics[width=0.6\linewidth]{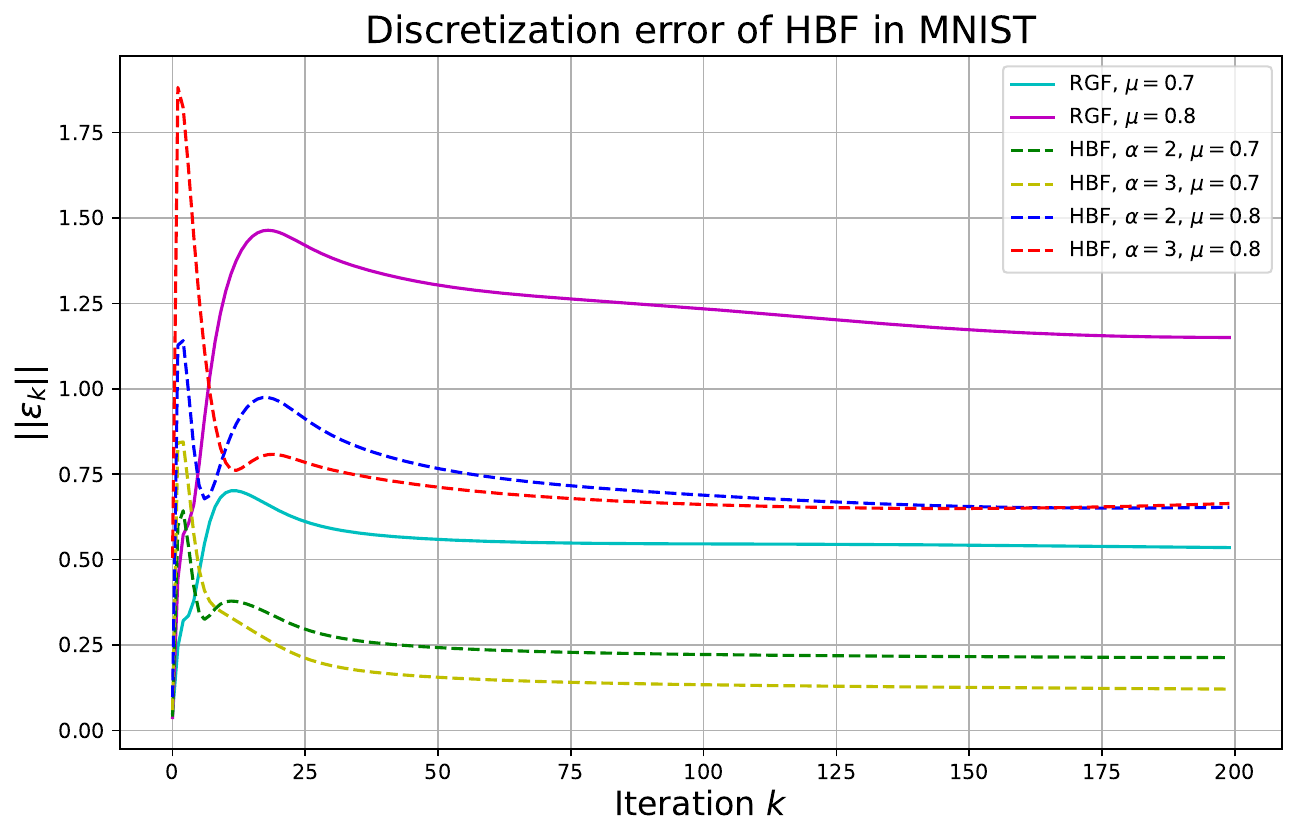}
    \caption{Discretization error of HBF in MNIST when training a three non-linear networks.}
    \label{fig:mlp_mnist_3layer}
\end{figure}

\subsection{Additional Numerical Experiments for Diagonal Linear Networks}
\label{app:exp_dln}
We now investigate the implicit bias of HB for 2-layer diagonal linear networks
\begin{equation}
    f(x; \mw) = \mw^T x = (\mw_{+}\odot\mw_{+} - \mw_{-}\odot\mw_{-})^Tx
\end{equation}
for a dataset $\{(x_i, y_i)\}_{i = 1}^{n}$ where $x\in\mathbb{R}^{d}, y\in\mathbb{R}^{}$. The empirical loss is
\begin{equation}
    L(\mw_+, \mw_{-}) = \sum_{i}(f(x_i; \mw) - y_i)^2.
\end{equation}
We let $n < d$ and denote the ground truth solution as $\mw^{*}$ such that $\mw^{*T}x = y$. We let $\mw^{*}$ be sparse. For a given scale $s$ we let 
\begin{equation}
    \kappa(0)=\mw_{+}(0)\odot \mw_{-}(0)=s^2(1, \dots, 1)^T\in \mathbb{R}^{d}.
\end{equation}
\subsubsection{Discretization Error for Different Approximations}
Our first experiment explores the discretization error, where we let $k$ denote the iteration count and first obtain $\mw^{\hb}_k$ by training $f(x; \mw)$ with HB. In addition, we also train $f(x; \mw)$ with RGF (Eq.~\eqref{eq:rescaled_gf}) and HBF (Corollary~\ref{prop:apa=2}), respectively. We calculate the discretization error as 
\begin{equation}
    \|\mw_{k}^{\hb} - \mw(t_k)\|^2_2
\end{equation}
for $\mw(t_k)$ obtained from HBF or RGF and present the results in Fig.~\ref{fig:dln_error}, where HBF enjoys smaller discretization error than RGF for different $\mu$, supporting our theoretical claims.

\begin{figure}[h!]
    \centering
    \includegraphics[width=.65\columnwidth]{./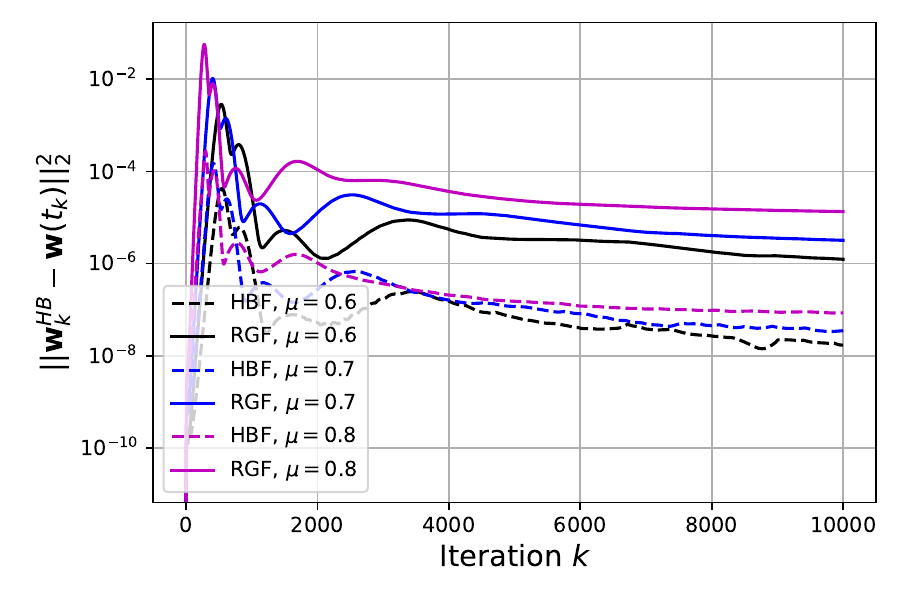}
    \caption{{Discretization errors $\|\mw_k^{\hb} - \mw(t_k)\|^2_2$ for HBF (dotted lines) and RGF (solid lines), respectively, when training the 2-layer diagonal linear networks.}}
    \label{fig:dln_error}
\end{figure}
\subsubsection{Implication for difference of implicit bias between HB and GD}
We first discuss the implication obtained from our HBF. Setting $\mu=0$ in Corollary~
\ref{prop:apa=2} gives us the implicit bias of $\mo{\eta^2}$-close continuous approximation of GD, i.e., IGR~\citep{barrett2022implicit} Flow (IGRF). Both IGRF and HBF have the initialization rescaling effect. The difference between them is closely connected with the composed parameter 
\begin{equation}
    \psi: = \frac{\eta(1 + \mu)}{(1 - \mu)^{2}}
\end{equation}
and the value of $\Phi / (1 - \mu)$. And the discrepancy between the implicit bias of HB and that of GD will be more obvious for large value of $\mu$. These observations stand in contrast to the case for $\mo{\eta}$-close RGF, which cannot distinguish the implicit bias of HB from that of GD. 

In addition, we note the following observations: (i). the value of 
\begin{equation}
    \kappa\propto \exp(- \frac{\eta 4(1 + \mu)}{(1 - \mu)^3} \int_0^{\infty} ds (\partial L)^2)
\end{equation}
is related to the speed of convergence since it depends on the integral of gradient along the training trajectory, implying that a faster speed of convergence would possibly lead to a smaller $\int_0^{\infty} ds (\partial L)^2$ which then leads to a larger $\kappa$; (ii). a smaller $\kappa$ implies a better sparsity and generalization performance for sparse regression, because the objective function in Corollary~\ref{prop:apa=2} will be closer to the $\ell_1$-norm. 

Now let $\kappa^{HBF}$ and $\kappa^{IGRF}$ be $\kappa$ obtained from HBF and IGRF, respectively. As HB converges faster than GD in practice, then $\partial L$ becomes neglectable very quickly for HBF if it converges too fast, and, according to our observation (i) above, we conclude that $\kappa^{HBF} > \kappa^{IGRF}$. Then according to our observation (ii) above, IGRF will generalize better than HBF. On the other hand, if the speed of convergence of HBF and that of IGRF are similar (e.g., blue line in Fig.~\ref{fig:dln_train_error}), then HBF and IGRF will have similar values of $\Phi = \int_{0}^{\infty}ds (\partial L )^2$ while $\kappa^{HBF}$ additionally depends on a coefficient $\frac{1 + \mu}{(1 - \mu)^{3}} > 1$, thus it is possible that in this cae $\kappa^{HBF} < \kappa^{IGRF}$, which implies that HBF will generalize better in this case. In summary, there might exist a tension between the speed of convergence and the generalization for HB, i.e., if HB converges too fast, $\kappa$ would be larger for HB hence solutions of GD would enjoy better generalization properties. 

Below we conduct experiments for the above claims. In particular, we compare the implicit bias of HB with that of GD. Given $s$, we train $f(x; \mw)$ with GD and HB, respectively. We calculate the distance between the returned solution $\mw(\infty)$ and the ground truth solution $\mw^{*}$, i.e., $\|\mw(\infty) - \mw^{*}\|_2$, as a measure of  generalization performance and report the results in Fig.~\ref{fig:dln_gen}. It can be seen that, when the initialization scale $s$ is small, solutions of GD generalize better than those of HB. This can be explained by  Corollary~\ref{prop:apa=2}: compared to GD, when $s$ is small, $L(\mw)$ decreases much faster for HB (green lines in Fig.~\ref{fig:dln_train_error}), which leads to a smaller $\int ds L(\mw)$ and weaker initialization mitigation effect, thus the solutions of HB generalize worse than GD solutions. Recall that in Corollary~\ref{prop:apa=2}, as $\kappa_j(0) = s^2$ increases, $\Phi$ determines the generalization performances for HB and GD since it controls the extent of the initialization mitigation effect. Furthermore, $L(\mw)$ does not decrease much faster for HB than for GD (blue lines in Fig.~\ref{fig:dln_train_error}), thus GD and HB have a similar value of $\Phi$, which is further enhanced by a factor of $(1 + \mu) / (1 - \mu)^3$ for HB according to Corollary~\ref{prop:apa=2}. As a result, HB solutions will generalize better than GD and the discrepancy between them is more significant for large $\mu$ (large $(1 + \mu) / (1 - \mu)^3$) as shown in Fig.~\ref{fig:dln_gen}. 
\begin{figure}
    \centering
    \subfigure[]{
    \includegraphics[width=.48\columnwidth]{./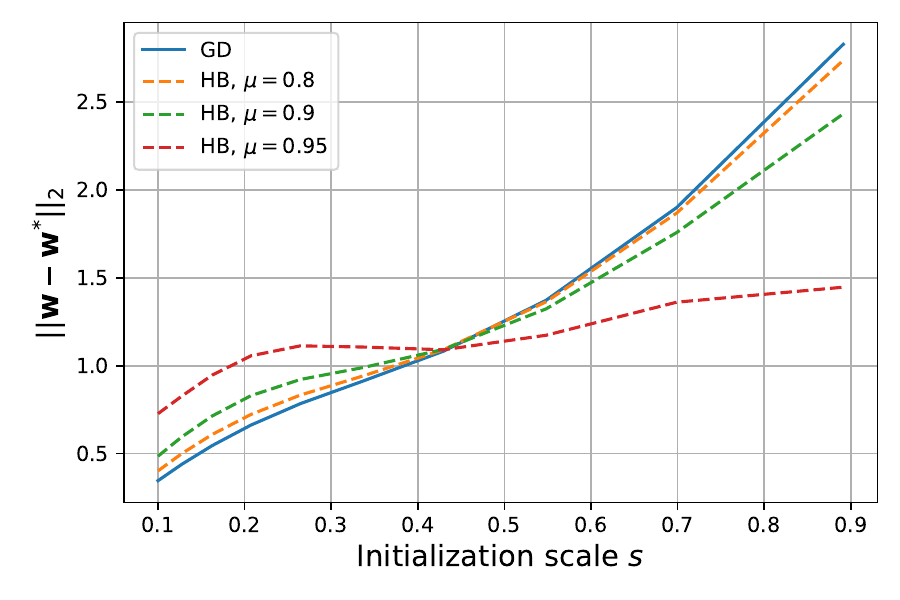}
    \label{fig:dln_gen}
    }
    \subfigure[]{
    \includegraphics[width=0.48\columnwidth]{./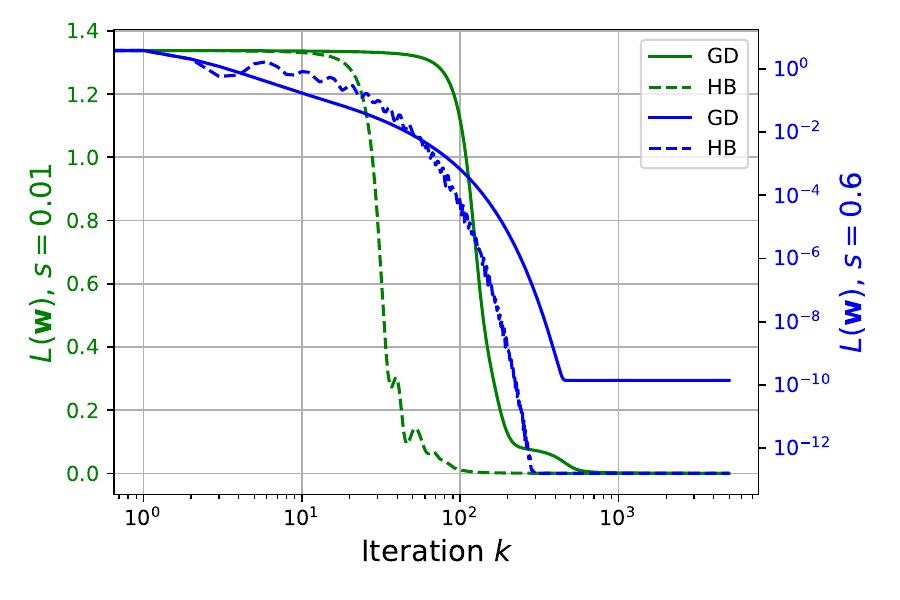}
    \label{fig:dln_train_error}
    }
    \caption{(a).~Generalization performances $\|\mw(\infty) - \mw^{*}\|_2$ for different initialization scales $s$ when $f(x;\mw)$ is trained by GD and HB with different values of $\mu$. (b).~$L(\mw)$ during training processes of HB ($\mu=0.9$) and GD for different $s$. }
\end{figure}

\newpage
\section*{NeurIPS Paper Checklist}
\begin{enumerate}

\item {\bf Claims}
    \item[] Question: Do the main claims made in the abstract and introduction accurately reflect the paper's contributions and scope?
    \item[] Answer: \answerYes{} 
    \item[] Justification: This paper provides a new continuous time model for HB method, HBF, and investigates several properties of the discrete HB method through lens of the HBF. These contributions exactly match those in the abstract and introduction.
    \item[] Guidelines:
    \begin{itemize}
        \item The answer NA means that the abstract and introduction do not include the claims made in the paper.
        \item The abstract and/or introduction should clearly state the claims made, including the contributions made in the paper and important assumptions and limitations. A No or NA answer to this question will not be perceived well by the reviewers. 
        \item The claims made should match theoretical and experimental results, and reflect how much the results can be expected to generalize to other settings. 
        \item It is fine to include aspirational goals as motivation as long as it is clear that these goals are not attained by the paper. 
    \end{itemize}

\item {\bf Limitations}
    \item[] Question: Does the paper discuss the limitations of the work performed by the authors?
    \item[] Answer: \answerYes{} 
    \item[] Justification: Please see the separate paragraph ``Limitation" in Section~\ref{sec:conclusion}.
    \item[] Guidelines:
    \begin{itemize}
        \item The answer NA means that the paper has no limitation while the answer No means that the paper has limitations, but those are not discussed in the paper. 
        \item The authors are encouraged to create a separate "Limitations" section in their paper.
        \item The paper should point out any strong assumptions and how robust the results are to violations of these assumptions (e.g., independence assumptions, noiseless settings, model well-specification, asymptotic approximations only holding locally). The authors should reflect on how these assumptions might be violated in practice and what the implications would be.
        \item The authors should reflect on the scope of the claims made, e.g., if the approach was only tested on a few datasets or with a few runs. In general, empirical results often depend on implicit assumptions, which should be articulated.
        \item The authors should reflect on the factors that influence the performance of the approach. For example, a facial recognition algorithm may perform poorly when image resolution is low or images are taken in low lighting. Or a speech-to-text system might not be used reliably to provide closed captions for online lectures because it fails to handle technical jargon.
        \item The authors should discuss the computational efficiency of the proposed algorithms and how they scale with dataset size.
        \item If applicable, the authors should discuss possible limitations of their approach to address problems of privacy and fairness.
        \item While the authors might fear that complete honesty about limitations might be used by reviewers as grounds for rejection, a worse outcome might be that reviewers discover limitations that aren't acknowledged in the paper. The authors should use their best judgment and recognize that individual actions in favor of transparency play an important role in developing norms that preserve the integrity of the community. Reviewers will be specifically instructed to not penalize honesty concerning limitations.
    \end{itemize}

\item {\bf Theory assumptions and proofs}
    \item[] Question: For each theoretical result, does the paper provide the full set of assumptions and a complete (and correct) proof?
    \item[] Answer: \answerYes{} 
    \item[] Justification: Proofs for all theorems are presented in Appendix.
    \item[] Guidelines:
    \begin{itemize}
        \item The answer NA means that the paper does not include theoretical results. 
        \item All the theorems, formulas, and proofs in the paper should be numbered and cross-referenced.
        \item All assumptions should be clearly stated or referenced in the statement of any theorems.
        \item The proofs can either appear in the main paper or the supplemental material, but if they appear in the supplemental material, the authors are encouraged to provide a short proof sketch to provide intuition. 
        \item Inversely, any informal proof provided in the core of the paper should be complemented by formal proofs provided in appendix or supplemental material.
        \item Theorems and Lemmas that the proof relies upon should be properly referenced. 
    \end{itemize}

    \item {\bf Experimental result reproducibility}
    \item[] Question: Does the paper fully disclose all the information needed to reproduce the main experimental results of the paper to the extent that it affects the main claims and/or conclusions of the paper (regardless of whether the code and data are provided or not)?
    \item[] Answer: \answerYes{} 
    \item[] Justification: We have presented numerical experiments in Section~\ref{sec:imp_hb_alpha3} and Section~\ref{sec:exp} with the corresponding experimental details in Appendix~\ref{app:exp}.
    \item[] Guidelines:
    \begin{itemize}
        \item The answer NA means that the paper does not include experiments.
        \item If the paper includes experiments, a No answer to this question will not be perceived well by the reviewers: Making the paper reproducible is important, regardless of whether the code and data are provided or not.
        \item If the contribution is a dataset and/or model, the authors should describe the steps taken to make their results reproducible or verifiable. 
        \item Depending on the contribution, reproducibility can be accomplished in various ways. For example, if the contribution is a novel architecture, describing the architecture fully might suffice, or if the contribution is a specific model and empirical evaluation, it may be necessary to either make it possible for others to replicate the model with the same dataset, or provide access to the model. In general. releasing code and data is often one good way to accomplish this, but reproducibility can also be provided via detailed instructions for how to replicate the results, access to a hosted model (e.g., in the case of a large language model), releasing of a model checkpoint, or other means that are appropriate to the research performed.
        \item While NeurIPS does not require releasing code, the conference does require all submissions to provide some reasonable avenue for reproducibility, which may depend on the nature of the contribution. For example
        \begin{enumerate}
            \item If the contribution is primarily a new algorithm, the paper should make it clear how to reproduce that algorithm.
            \item If the contribution is primarily a new model architecture, the paper should describe the architecture clearly and fully.
            \item If the contribution is a new model (e.g., a large language model), then there should either be a way to access this model for reproducing the results or a way to reproduce the model (e.g., with an open-source dataset or instructions for how to construct the dataset).
            \item We recognize that reproducibility may be tricky in some cases, in which case authors are welcome to describe the particular way they provide for reproducibility. In the case of closed-source models, it may be that access to the model is limited in some way (e.g., to registered users), but it should be possible for other researchers to have some path to reproducing or verifying the results.
        \end{enumerate}
    \end{itemize}

\item {\bf Open access to data and code}
    \item[] Question: Does the paper provide open access to the data and code, with sufficient instructions to faithfully reproduce the main experimental results, as described in supplemental material?
    \item[] Answer: \answerNA{} 
    \item[] Justification: We use synthetic data and open-source data, which are discussed in Appendix~\ref{app:exp}.
    \item[] Guidelines:
    \begin{itemize}
        \item The answer NA means that paper does not include experiments requiring code.
        \item Please see the NeurIPS code and data submission guidelines (\url{https://nips.cc/public/guides/CodeSubmissionPolicy}) for more details.
        \item While we encourage the release of code and data, we understand that this might not be possible, so “No” is an acceptable answer. Papers cannot be rejected simply for not including code, unless this is central to the contribution (e.g., for a new open-source benchmark).
        \item The instructions should contain the exact command and environment needed to run to reproduce the results. See the NeurIPS code and data submission guidelines (\url{https://nips.cc/public/guides/CodeSubmissionPolicy}) for more details.
        \item The authors should provide instructions on data access and preparation, including how to access the raw data, preprocessed data, intermediate data, and generated data, etc.
        \item The authors should provide scripts to reproduce all experimental results for the new proposed method and baselines. If only a subset of experiments are reproducible, they should state which ones are omitted from the script and why.
        \item At submission time, to preserve anonymity, the authors should release anonymized versions (if applicable).
        \item Providing as much information as possible in supplemental material (appended to the paper) is recommended, but including URLs to data and code is permitted.
    \end{itemize}

\item {\bf Experimental setting/details}
    \item[] Question: Does the paper specify all the training and test details (e.g., data splits, hyperparameters, how they were chosen, type of optimizer, etc.) necessary to understand the results?
    \item[] Answer: \answerYes{} 
    \item[] Justification: Please see Appendix~\ref{app:exp}.
    \item[] Guidelines:
    \begin{itemize}
        \item The answer NA means that the paper does not include experiments.
        \item The experimental setting should be presented in the core of the paper to a level of detail that is necessary to appreciate the results and make sense of them.
        \item The full details can be provided either with the code, in appendix, or as supplemental material.
    \end{itemize}

\item {\bf Experiment statistical significance}
    \item[] Question: Does the paper report error bars suitably and correctly defined or other appropriate information about the statistical significance of the experiments?
    \item[] Answer: \answerNA{} 
    \item[] Justification: We do not have error bars to report.
    \item[] Guidelines:
    \begin{itemize}
        \item The answer NA means that the paper does not include experiments.
        \item The authors should answer "Yes" if the results are accompanied by error bars, confidence intervals, or statistical significance tests, at least for the experiments that support the main claims of the paper.
        \item The factors of variability that the error bars are capturing should be clearly stated (for example, train/test split, initialization, random drawing of some parameter, or overall run with given experimental conditions).
        \item The method for calculating the error bars should be explained (closed form formula, call to a library function, bootstrap, etc.)
        \item The assumptions made should be given (e.g., Normally distributed errors).
        \item It should be clear whether the error bar is the standard deviation or the standard error of the mean.
        \item It is OK to report 1-sigma error bars, but one should state it. The authors should preferably report a 2-sigma error bar than state that they have a 96\% CI, if the hypothesis of Normality of errors is not verified.
        \item For asymmetric distributions, the authors should be careful not to show in tables or figures symmetric error bars that would yield results that are out of range (e.g. negative error rates).
        \item If error bars are reported in tables or plots, The authors should explain in the text how they were calculated and reference the corresponding figures or tables in the text.
    \end{itemize}

\item {\bf Experiments compute resources}
    \item[] Question: For each experiment, does the paper provide sufficient information on the computer resources (type of compute workers, memory, time of execution) needed to reproduce the experiments?
    \item[] Answer: \answerYes{} 
    \item[] Justification: Please see Appendix~\ref{app:exp}.
    \item[] Guidelines:
    \begin{itemize}
        \item The answer NA means that the paper does not include experiments.
        \item The paper should indicate the type of compute workers CPU or GPU, internal cluster, or cloud provider, including relevant memory and storage.
        \item The paper should provide the amount of compute required for each of the individual experimental runs as well as estimate the total compute. 
        \item The paper should disclose whether the full research project required more compute than the experiments reported in the paper (e.g., preliminary or failed experiments that didn't make it into the paper). 
    \end{itemize}
    
\item {\bf Code of ethics}
    \item[] Question: Does the research conducted in the paper conform, in every respect, with the NeurIPS Code of Ethics \url{https://neurips.cc/public/EthicsGuidelines}?
    \item[] Answer: \answerYes{} 
    \item[] Justification: The research conform with the NeurIPS Code of Ethis in every respect.
    \item[] Guidelines:
    \begin{itemize}
        \item The answer NA means that the authors have not reviewed the NeurIPS Code of Ethics.
        \item If the authors answer No, they should explain the special circumstances that require a deviation from the Code of Ethics.
        \item The authors should make sure to preserve anonymity (e.g., if there is a special consideration due to laws or regulations in their jurisdiction).
    \end{itemize}

\item {\bf Broader impacts}
    \item[] Question: Does the paper discuss both potential positive societal impacts and negative societal impacts of the work performed?
    \item[] Answer: \answerNo{} 
    \item[] Justification: This is a theoretical paper, which dose not have direct potential societal impacts.
    \item[] Guidelines:
    \begin{itemize}
        \item The answer NA means that there is no societal impact of the work performed.
        \item If the authors answer NA or No, they should explain why their work has no societal impact or why the paper does not address societal impact.
        \item Examples of negative societal impacts include potential malicious or unintended uses (e.g., disinformation, generating fake profiles, surveillance), fairness considerations (e.g., deployment of technologies that could make decisions that unfairly impact specific groups), privacy considerations, and security considerations.
        \item The conference expects that many papers will be foundational research and not tied to particular applications, let alone deployments. However, if there is a direct path to any negative applications, the authors should point it out. For example, it is legitimate to point out that an improvement in the quality of generative models could be used to generate deepfakes for disinformation. On the other hand, it is not needed to point out that a generic algorithm for optimizing neural networks could enable people to train models that generate Deepfakes faster.
        \item The authors should consider possible harms that could arise when the technology is being used as intended and functioning correctly, harms that could arise when the technology is being used as intended but gives incorrect results, and harms following from (intentional or unintentional) misuse of the technology.
        \item If there are negative societal impacts, the authors could also discuss possible mitigation strategies (e.g., gated release of models, providing defenses in addition to attacks, mechanisms for monitoring misuse, mechanisms to monitor how a system learns from feedback over time, improving the efficiency and accessibility of ML).
    \end{itemize}
    
\item {\bf Safeguards}
    \item[] Question: Does the paper describe safeguards that have been put in place for responsible release of data or models that have a high risk for misuse (e.g., pretrained language models, image generators, or scraped datasets)?
    \item[] Answer: \answerNA{} 
    \item[] Justification: This is a theoretical paper, which does not provide new data or models.
    \item[] Guidelines:
    \begin{itemize}
        \item The answer NA means that the paper poses no such risks.
        \item Released models that have a high risk for misuse or dual-use should be released with necessary safeguards to allow for controlled use of the model, for example by requiring that users adhere to usage guidelines or restrictions to access the model or implementing safety filters. 
        \item Datasets that have been scraped from the Internet could pose safety risks. The authors should describe how they avoided releasing unsafe images.
        \item We recognize that providing effective safeguards is challenging, and many papers do not require this, but we encourage authors to take this into account and make a best faith effort.
    \end{itemize}

\item {\bf Licenses for existing assets}
    \item[] Question: Are the creators or original owners of assets (e.g., code, data, models), used in the paper, properly credited and are the license and terms of use explicitly mentioned and properly respected?
    \item[] Answer: \answerYes{} 
    \item[] Justification: We use the open-source data CIFAR-10, which are properly credited in this paper.
    \item[] Guidelines:
    \begin{itemize}
        \item The answer NA means that the paper does not use existing assets.
        \item The authors should cite the original paper that produced the code package or dataset.
        \item The authors should state which version of the asset is used and, if possible, include a URL.
        \item The name of the license (e.g., CC-BY 4.0) should be included for each asset.
        \item For scraped data from a particular source (e.g., website), the copyright and terms of service of that source should be provided.
        \item If assets are released, the license, copyright information, and terms of use in the package should be provided. For popular datasets, \url{paperswithcode.com/datasets} has curated licenses for some datasets. Their licensing guide can help determine the license of a dataset.
        \item For existing datasets that are re-packaged, both the original license and the license of the derived asset (if it has changed) should be provided.
        \item If this information is not available online, the authors are encouraged to reach out to the asset's creators.
    \end{itemize}

\item {\bf New assets}
    \item[] Question: Are new assets introduced in the paper well documented and is the documentation provided alongside the assets?
    \item[] Answer: \answerNA{} 
    \item[] Justification: This paper does not provide new assets.
    \item[] Guidelines:
    \begin{itemize}
        \item The answer NA means that the paper does not release new assets.
        \item Researchers should communicate the details of the dataset/code/model as part of their submissions via structured templates. This includes details about training, license, limitations, etc. 
        \item The paper should discuss whether and how consent was obtained from people whose asset is used.
        \item At submission time, remember to anonymize your assets (if applicable). You can either create an anonymized URL or include an anonymized zip file.
    \end{itemize}

\item {\bf Crowdsourcing and research with human subjects}
    \item[] Question: For crowdsourcing experiments and research with human subjects, does the paper include the full text of instructions given to participants and screenshots, if applicable, as well as details about compensation (if any)? 
    \item[] Answer: \answerNA{} 
    \item[] Justification: These topics are not covered by this paper.
    \item[] Guidelines:
    \begin{itemize}
        \item The answer NA means that the paper does not involve crowdsourcing nor research with human subjects.
        \item Including this information in the supplemental material is fine, but if the main contribution of the paper involves human subjects, then as much detail as possible should be included in the main paper. 
        \item According to the NeurIPS Code of Ethics, workers involved in data collection, curation, or other labor should be paid at least the minimum wage in the country of the data collector. 
    \end{itemize}

\item {\bf Institutional review board (IRB) approvals or equivalent for research with human subjects}
    \item[] Question: Does the paper describe potential risks incurred by study participants, whether such risks were disclosed to the subjects, and whether Institutional Review Board (IRB) approvals (or an equivalent approval/review based on the requirements of your country or institution) were obtained?
    \item[] Answer: \answerYes{} 
    \item[] Justification: This paper does not involve any of these subjects.
    \item[] Guidelines:
    \begin{itemize}
        \item The answer NA means that the paper does not involve crowdsourcing nor research with human subjects.
        \item Depending on the country in which research is conducted, IRB approval (or equivalent) may be required for any human subjects research. If you obtained IRB approval, you should clearly state this in the paper. 
        \item We recognize that the procedures for this may vary significantly between institutions and locations, and we expect authors to adhere to the NeurIPS Code of Ethics and the guidelines for their institution. 
        \item For initial submissions, do not include any information that would break anonymity (if applicable), such as the institution conducting the review.
    \end{itemize}

\item {\bf Declaration of LLM usage}
    \item[] Question: Does the paper describe the usage of LLMs if it is an important, original, or non-standard component of the core methods in this research? Note that if the LLM is used only for writing, editing, or formatting purposes and does not impact the core methodology, scientific rigorousness, or originality of the research, declaration is not required.
    \item[] Answer: \answerNA{} 
    \item[] Justification: The core method development in this research does not involve LLMs as any important, original, or non-standard components.
    \item[] Guidelines:
    \begin{itemize}
        \item The answer NA means that the core method development in this research does not involve LLMs as any important, original, or non-standard components.
        \item Please refer to our LLM policy (\url{https://neurips.cc/Conferences/2025/LLM}) for what should or should not be described.
    \end{itemize}
\end{enumerate}
\end{document}